\documentclass[a4paper,11pt]{article}

\usepackage{amsmath,bm}
\usepackage{amssymb}
\usepackage{mathtools}
\usepackage{wasysym}
\usepackage{latexsym}
\usepackage{graphicx}
\usepackage{float}
\usepackage{caption,subcaption}
\usepackage{mathrsfs}
\usepackage{textpos}
\usepackage[mathscr]{eucal}
\usepackage{algorithm}
\usepackage{algpseudocode}
\usepackage{refcount}
\usepackage[dvipsnames]{xcolor}
\usepackage{amsthm}
\usepackage{rotating}
\usepackage{comment}
\usepackage{enumerate}
\usepackage{enumitem}

\def\eqref#1{equation~\ref{#1}}
\def\floor#1{\lfloor #1 \rfloor}
\def\1{\bm{1}}

\DeclareMathAlphabet{\mathsfit}{\encodingdefault}{\sfdefault}{m}{sl}
\SetMathAlphabet{\mathsfit}{bold}{\encodingdefault}{\sfdefault}{bx}{n}

\newcommand{\E}{\mathbb{E}}

\newcommand{\R}{\mathbb{R}}

\newcommand{\Var}{\mathrm{Var}}

\newcommand{\Cov}{\mathrm{Cov}}

\DeclareMathOperator{\sign}{sign}
\DeclareMathOperator{\Tr}{Tr}

\usepackage{geometry}
\usepackage[utf8]{inputenc} % allow utf-8 input
\usepackage[T1]{fontenc}    % use 8-bit T1 fonts
\usepackage{url}            % simple URL typesetting
\usepackage{xurl}           % newline in URL
\usepackage{booktabs}       % professional-quality tables
\usepackage{amsfonts}       % blackboard math symbols
\usepackage{nicefrac}       % compact symbols for 1/2, etc.
\usepackage{microtype}      % microtypography

\usepackage[parfill]{parskip}

\usepackage{booktabs} % for professional tables
\usepackage{tcolorbox}
\usepackage{wrapfig}

\usepackage{natbib}
\usepackage[pagebackref]{hyperref}
\usepackage[capitalize,noabbrev]{cleveref}

\usepackage{color}
\definecolor{darkgreen}{rgb}{0.0,0.5,0.0}

\newcommand{\algname}[1]{{\sf#1}}

\hypersetup{
	colorlinks=true,        % false: boxed links; true: colored links
	linkcolor=blue,         % color of internal links
	filecolor=magenta,
	citecolor=darkgreen,         % color of links to bibliography
	urlcolor=blue           % color of external links
}
\renewcommand*{\backrefalt}[4]{%
    \ifcase #1 \footnotesize{(Not cited.)}%
    \or        \footnotesize{(Cited on page~#2)}%
    \else      \footnotesize{(Cited on pages~#2)}%
    \fi}

\usepackage{authblk}
\usepackage{etoc}

\theoremstyle{plain}
\newtheorem{theorem}{Theorem}[section]

\newtheorem{lemma}[theorem]{Lemma}
\newtheorem{corollary}[theorem]{Corollary}
\theoremstyle{definition}
\newtheorem{definition}[theorem]{Definition}
\newtheorem{assumption}{Assumption}
\theoremstyle{remark}
\newtheorem{remark}[theorem]{Remark}

\DeclareMathOperator{\diag}{diag}

\newtcolorbox{mybox}[2][]{
  colframe = white, % Set the frame color to white (transparent)
  colback  = gray!7,
  #1
}

\DeclareMathOperator{\erf}{Erf}

\newcommand{\Prob}{\mathbb{P}}
\newcommand{\bigo}{\mathcal{O}}

\DeclarePairedDelimiter{\norm}{\lVert}{\rVert}
\DeclarePairedDelimiter{\abs}{\lvert}{\rvert}

\def\toptitlebar{\hrule height1pt \vskip .25in} 
\def\bottomtitlebar{\vskip .22in \hrule height1pt \vskip .3in}

\title{
\toptitlebar
{{\center\baselineskip 18pt
                      {\Large\bf Adaptive Methods Are Preferable in \\ High Privacy Settings: An SDE Perspective}}
} 
\bottomtitlebar}
\date{}

\author[1]{\hspace{2.0cm} Enea Monzio Compagnoni}
\author[1]{Alessandro Stanghellini}
\author[1]{\newline Rustem Islamov}
\author[1]{Aurelien Lucchi}
\author[2]{Anastasiia Koloskova}

\affil[1]{University of Basel, Switzerland}
\affil[2]{University of Zürich, Switzerland}

\begin{document}

\maketitle

\begin{abstract}
\noindent Differential Privacy (DP) is becoming central to large-scale training as privacy regulations tighten. We revisit how DP noise interacts with \emph{adaptivity} in optimization through the lens of \emph{stochastic differential equations}, providing the first SDE-based analysis of private optimizers. Focusing on \algname{DP-SGD} and \algname{DP-SignSGD} under per-example clipping, we show a sharp contrast under fixed hyperparameters: \algname{DP-SGD} converges at a Privacy-Utility Trade-Off of $\bigo(1/\varepsilon^2)$ with speed independent of~$\varepsilon$, while \algname{DP-SignSGD} converges at a speed \emph{linear in~$\varepsilon$} with an $\bigo(1/\varepsilon)$ trade-off, dominating in high-privacy or large batch noise regimes. By contrast, under optimal learning rates, both methods achieve comparable theoretical asymptotic performance; however, the optimal learning rate of \algname{DP-SGD} scales linearly with~$\varepsilon$, while that of \algname{DP-SignSGD} is essentially $\varepsilon$-independent. This makes adaptive methods far more practical, as their hyperparameters transfer across privacy levels with little or no re-tuning. Empirical results confirm our theory across training and test metrics, and empirically extend from \algname{DP-SignSGD} to \algname{DP-Adam}.
\end{abstract}

\section{Introduction}

The rapid deployment of large-scale machine learning systems has intensified the demand for rigorous privacy guarantees.
In sensitive domains such as healthcare or conversational agents, even the disclosure of a single training example can have serious consequences.
Legislation and policy initiatives show that AI regulation is tightening rapidly.
In the United States, the \emph{Executive Order of October 30, 2023} mandates developers of advanced AI systems to share safety test results and promotes privacy-preserving techniques such as differential privacy \citep{biden_eo}. 
Additionally, the NIST, a U.S. federal agency, released draft guidance (SP~800-226) on privacy guarantees in AI \citep{nist_dp_draft} and included ``privacy-enhanced'' as a key dimension in its AI Risk Management Framework (RMF~1.0) \citep{nist_rmf}. 
In Europe, the \emph{EU AI Act} sets binding obligations for high-risk systems \citep{eu_ai_act}, while ENISA recommends integrating data protection into AI development \citep{enisa_ai}. 
In this context, DP \citep{Dwork2006} is emerging as the de facto standard for ensuring example-level confidentiality in stochastic optimization.
By injecting carefully calibrated noise into the training process, DP optimizers protect individual data points while inevitably trading off some population-level utility.

A central open question is how differential privacy noise influences optimization dynamics, and in particular, how it interacts with adaptivity and batch noise.
In this work, we revisit this problem through the lens of SDEs, which, over the last decades, have proven to be a powerful tool for analyzing optimization algorithms \citep{li2017stochastic, mandt2017stochastic, compagnoni2023sde}.
While SDEs have not yet been applied to DP methods, we utilize them here to uncover a key and previously overlooked phenomenon: \emph{DP noise affects adaptive and non-adaptive methods in structurally distinct ways}.
We focus on two fundamental DP optimizers: \algname{DP-SGD} \citep{Abadi2016} and \algname{DP-SignSGD}.
The former serves as the baseline for DP optimization; although the latter is not widely used in practice, it is substantially simpler to analyze than the popular DP optimizer \algname{DP-Adam}~\citep{gylberth2017dpalgorithms, zhou2020privateadaptive, li2021large, mckenna2025scalinglawsdp}.
Relying on \algname{SignSGD} as a proxy for \algname{Adam} is standard in prior work \citep{compagnoni2025adaptive, balles2018dissecting, zou2021understanding, peng2025simpleconvergenceproofadam, li2025optimizationgeneralizationtwolayertransformers}, and this motivates our focus on \algname{DP-SignSGD} for the theoretical development.
%Importantly, setting $\beta_1=\beta_2=0$ reduces \algname{DP-Adam} to \algname{DP-SignSGD}.
We leave the study of more advanced DP optimizers to future work, as each would require a separate technical treatment.
Under standard assumptions and with per-example clipping, our analysis isolates how the privacy budget $\varepsilon$, which governs the overall level of privacy, influences the dynamics.

In practice, private training is usually performed across a range of privacy budgets~$\varepsilon$, and for each value one searches for the best-performing hyperparameters.
A change in~$\varepsilon$ can therefore arise either from this exploratory sweep or from stricter regulatory requirements as discussed above. To capture these two possibilities, we study two complementary protocols.
\textbf{Protocol A (fixed hyperparameters):} To examine the situation when re-tuning is not feasible, e.g., due to low computational budget, we first fix a privacy budget $\varepsilon$ and find the optimal configuration $(\eta, C, B, \dots)$ via grid search.
Then, we analyze how performance changes if training is repeated under different $ \varepsilon$, without adjusting hyperparameters, thereby isolating the impact of $ \varepsilon$ on performance. \textbf{Protocol B (best-tuned per $\varepsilon$):} When re-tuning is allowed, we search the optimal hyperparameters for each~$\varepsilon$, thereby isolating the \emph{intrinsic scaling of the optimal learning rates with respect to~$\varepsilon$}.

\paragraph{Contributions.}
We make the following contributions:
\begin{enumerate}[leftmargin=*, labelindent=0em]
    
    \item We provide the first SDE-based analysis of differentially private optimizers, using this framework to expose how DP noise interacts with adaptivity and batch noise;

    \item \textbf{Protocol A:} We show that \algname{DP-SGD} converges at a speed \emph{independent} of $\varepsilon$, with a privacy-utility trade-off that scales as $\bigo\left(\nicefrac{1}{\varepsilon^2}\right)$, consistent with prior work;

    \item \textbf{Protocol A:} We prove a novel result for \algname{DP-SignSGD}: its convergence speed scales linearly in $\varepsilon$, while its privacy-utility trade-off scales as $\bigo\left(\nicefrac{1}{\varepsilon}\right)$;
    \item \textbf{Protocol A:} When batch noise is sufficiently large, \algname{DP-SignSGD} always dominates. When batch noise is sufficiently small, the outcome depends on the privacy budget: for strict privacy ($\varepsilon < \varepsilon^\star$), \algname{DP-SignSGD} is preferable, while for looser privacy ($\varepsilon > \varepsilon^\star$), \algname{DP-SGD} has better performance;

    \item \textbf{Protocol B:} We theoretically derive that the optimal learning rate of \algname{DP-SGD} scales as $\eta^\star \propto \varepsilon$, while that of \algname{DP-SignSGD} is $\varepsilon$-independent. This tuning allows the two methods to reach theoretically \emph{comparable} asymptotic performance, including at very small $\varepsilon$;

    \item We empirically validate all our theoretical insights on real-world tasks, and show that the qualitative insights extend from training to \emph{test} loss and from \algname{DP-SignSGD} to \algname{DP-Adam}.
\end{enumerate}

In summary, our results refine the privacy-utility landscape, which, to our knowledge, has not yet yielded a definitive answer on which of \algname{DP-SGD} or \algname{DP-Adam}/\algname{DP-SignSGD} performs best, and in what regimes. Under Protocol~A, adaptivity is preferable in stricter privacy regimes: \algname{DP-SignSGD} converges more slowly but achieves better utility when $\varepsilon$ is small or batch noise is large, whereas \algname{DP-SGD} converges faster but suffers sharper degradation.
Under Protocol B, both methods achieve comparable asymptotic performance; however, adaptive methods are far more practical, as their optimal learning rate is essentially $\varepsilon$-independent, allowing it to transfer across privacy levels with little or no re-tuning. This matters not only for computational cost but also for privacy, since each hyperparameter search consumes additional budget~\citep{papernot2021hyperparameter}.
In contrast, \algname{DP-SGD} requires an $\varepsilon$-dependent learning rate tuned \emph{ad hoc}, making it brittle if the sweep grid misses the ``right'' value.
Intuitively, adaptive methods inherently adjust to the scale of DP noise, whereas non-adaptive methods require explicit tuning of the learning rate to counter the effect of privacy noise.

\section{Related Work}

\vspace{0.3cm}

\paragraph{SDE approximations.}\label{sec:relatedworksde}
SDEs have long been used to analyze discrete-time optimization algorithms~\citep{helmke1994optimization,kushner2003stochastic}.
The most popular framework to derive SDEs was rigorously formalized by \citet{li2017stochastic}. Beyond their foundational role, these approximations have been applied to practical tasks such as learning-rate tuning~\citep{li2017stochastic,li2019stochastic} and batch-size selection~\citep{zhao2022batch}.
Other works have focused on deriving convergence bounds~\citep{compagnoni2023sde,compagnoni2024sde,compagnoni2025adaptive,compagnoni2026interaction}, uncovering scaling laws that govern optimization dynamics~\citep{jastrzkebski2017three,compagnoni2025unbiased,compagnoni2025adaptive}, and revealing implicit effects such as regularization~\citep{smith2021origin,compagnoni2023sde} and preconditioning~\citep{xiao2025exact,marshall2025to}. In particular, SDE-based techniques have been used to study a broad class of modern adaptive optimizers, including RMSProp, Adam, AdamW, and SignSGD, as well as minimax and distributed variants~\citep{compagnoni2024sde,compagnoni2025unbiased,compagnoni2025adaptive,xiao2025exact}.
Despite this progress, prior work has exclusively focused on non-private optimization. To our knowledge, ours is the first to extend the SDE lens to DP optimizers, including explicit convergence rates and stationary distributions as functions of the privacy budget.

\vspace{0.15cm}

\paragraph{Differential privacy in optimization.}
Differentially private training is most commonly implemented via \algname{DP-SGD}~\citep{Abadi2016}, which clips per-example gradients to a fixed norm bound to control sensitivity and injects calibrated Gaussian noise into the averaged update. Advanced accounting methods such as the moments accountant~\citep{Abadi2016} and Rényi differential privacy~\citep{mironov2017renyi,wang2019subsampled}, combined with privacy amplification by subsampling~\citep{balle2018privacy,balle2020privacy}, allow practitioners to track the cumulative privacy cost tightly over many updates and have made large-scale private training feasible. A central challenge is that clipping, while essential for privacy, also alters the optimization dynamics: overly aggressive thresholds bias gradients and can stall convergence~\citep{chen2020understanding}, prompting extensive work on how to set or adapt the clipping norm. Approaches include rule-based or data-driven thresholds, such as \algname{AdaCliP}~\citep{pichapati2019adaclip} and quantile-based adaptive clipping~\citep{andrew2021differentially}, as well as recent analyses that characterize precisely how the clipping constant influences convergence~\citep{koloskova2023clipping}. Together, these contributions have positioned \algname{DP-SGD} and its variants as the standard backbone for differentially private optimization.

\vspace{0.15cm}

\paragraph{Adaptive DP optimizers.}
Adaptive methods such as \algname{AdaGrad}~\citep{duchi2011adagrad,mcmahan2010adagrad}, \algname{RMSProp}~\citep{hinton2012rmsprop}, and \algname{Adam}~\citep{kingma2014adam} generally outperform non-adaptive SGD in non-private training.
However, under DP constraints, this performance \textit{gap} $i)$ narrows considerably~\citep{zhou2020privateadaptive,li2022dpinformation} and $ii)$ essentially vanishes when both optimizers are carefully tuned, as observed for large-scale LLM fine-tuning in \citet[App.~S]{li2021large}.
Consistent with non-DP training, non-adaptive methods are sometimes still preferred in vision tasks~\citep{de2022unlockingdp}.  Therefore, which of \algname{DP-SGD} and \algname{DP-Adam} is preferable remains an open question.
Under assumptions that include bounded/convex domain, bounded gradient norm, bounded gradient noise, convexity of the loss, and possibly without performing clipping of the per-sample gradients, several strategies have been theoretically and empirically explored to mitigate the drop in performance of adaptive methods in DP. These include bias-corrected \algname{DP-Adam} variants~\citep{tang2023dpadambca,tang2023dpadambcb}, the use of non-sensitive auxiliary data~\citep{asi2021privateadaptive}, and scale-then-privatize techniques that exploit adaptivity before noise injection~\citep{li2023stp,ganesh2025}.
A recent related work by \citep{jin2025noisy} studies \algname{Noisy SignSGD}: Conceptually, they investigate how the sign compressor amplifies privacy, and argue that the sign operator itself provides privacy amplification beyond the Gaussian mechanism. Their analysis establishes convergence guarantees in the distributed learning setting while relying on \emph{bounded gradient norms and bounded variance} assumptions, thereby avoiding the need for clipping and explicitly leaving its study to future work.

\vspace{0.2cm}

We view these contributions as providing valuable theoretical and empirical advances in the design of adaptive private optimizers, clarifying many important aspects of their behavior as well as trying to restore the aforementioned performance \textit{gap}.
Yet, the fundamental question of \emph{which privacy regimes are most favorable to adaptivity} remains largely unanswered, and addressing it could explain at least one aspect of the nature of this \textit{gap}.
Our work addresses this \textit{open question} by analyzing \emph{why and when adaptivity matters} under DP noise, identifying the regimes where adaptive methods dominate and where they match non-adaptive ones.
Crucially, we incorporate \emph{per-example clipping}, a central element of \algname{DP-SGD}, and a heavy-tailed batch noise model that captures unbounded variance.

%%%%%%%%%%%%%%%%%%%%%%%%%%%%%%%%%%%%%%%%%
\section{Preliminaries}\label{sec:prelim}
%%%%%%%%%%%%%%%%%%%%%%%%%%%%%%%%%%%%%%%%%

\paragraph{General Setup and Noise Assumptions.}
We model the loss function with a differentiable function $f:\R^d\to\R$ with global minimum $f^* = 0$: This is not restrictive, as one can always consider the suboptimality $f(x)-f^*$ and rename it as $f$. 
Regarding noise assumptions, recent literature commonly assumes that the stochastic gradient of the loss function on a minibatch $\gamma$ of size $B\geq 1$ can be decomposed as $\nabla f_{\gamma}(x) = \nabla f(x) + Z_\gamma$ where batch noise  $Z_\gamma$ is modeled with a Gaussian \citep{ahn2012bayesian, chen2014stochastic, mandt2016variational, mandt2017stochastic, zhu2019anisotropic, jastrzkebski2017three, wu2020noisy, Xie2021}, often with constant covariance matrix \citep{li2017stochastic,mertikopoulos2018convergence,raginsky2012continuous,zhu2019anisotropic,mandt2016variational,ahn2012bayesian,jastrzkebski2017three}. 
In this work, we refine the standard noise assumption to distinguish the two regimes induced by per-example clipping in DP training. In this setting, each mini-batch contains a mix of \emph{clipped} and \emph{unclipped} example-level gradients. 
For \emph{unclipped} examples, we follow the usual literature and model the batch-averaged noise as Gaussian, e.g., $Z_\gamma \sim (\sigma_\gamma/\sqrt{B}) \mathcal{N}(0, I_d)$. To capture the heavy-tailed gradient noise observed at the per-example level ($B=1$), we model $Z_{\gamma}(x)\sim \sigma_\gamma\, t_\nu(0,I_d)$, a Student-$t$ with $\nu$ degrees of freedom (recovering the Gaussian case as $\nu\to\infty$). 
See Assumption~\ref{ass:Noise} and Remark~\ref{rem:noise} for more details.
Finally, we use the following approximation, formally derived in Lemma~\ref{cor:gradapprox}: $    \E\left[\frac{\nabla f_\gamma(x)}{\lVert \nabla f_\gamma(x) \rVert}\right] \approx \frac{\nabla f(x)}{\sigma_\gamma \sqrt{d}}$. The approximation is valid under two assumptions: $i)$ The parameter dimension $d$ is sufficiently large ($d= \Omega(10^4)$), consistent with modern deep learning models that often reach billions of trainable parameters;
$ii)$ The signal-to-noise ratio satisfies $\frac{\lVert \nabla f(x)\rVert_2^2}{2\sigma_\gamma^2} \ll d$: This condition has been thoroughly empirically studied by \citet{Malladi2022AdamSDE} (Appendix~G), who observed that across multiple tasks and architectures the ratio $\frac{\lVert \nabla f(x)\rVert_2^2}{2\sigma_\gamma^2}$ never exceeds $\mathcal{O}(10^2)$, well below typical values of $d$. 
See Remark \ref{remark:SNR} for more details, including experimental validations.
We highlight that our experiments confirm that the insights derived from our theoretical results carry over to real-world tasks. 
Importantly, while our theory is developed for \algname{DP-SignSGD}, we further validate that the same insights hold empirically for \algname{DP-Adam}, showing that our insights extend directly to this widely used private optimizer, as well as also transfer from training to test loss. 
This is a testament to both the mildness of the assumptions and the robustness of the analysis.

\paragraph{SDE approximation.}

The following definition formalizes in which sense a continuous-time model, such as a solution to an SDE, can accurately describe the dynamics of a discrete-time process, such as an optimizer. Drawn from the field of numerical analysis of SDEs (see \cite{mil1986weak}), it quantifies the disparity between the discrete and the continuous processes. Simply put, the approximation is meant in a \textit{weak sense}, meaning in distribution rather than path-wise: We require their expectations to be close over a class of test functions with polynomial growth, meaning that all the moments of the processes become closer at a rate of $\eta^\alpha$ and thus their distributions. 

\vspace{0.1cm}

\begin{definition}\label{def:weakapx}
    Let {\small$0<\eta <1$} be the learning rate, $\tau>0$ and $T=\floor{\frac{\tau}{\eta}}$. We say that a continuous time process $X_t$ over $[0,\tau]$, is an order-$\alpha$ weak approximation of a discrete process $x_k$, if for any polynomial growth function $g$, $\exists M >0$, independent of the learning rate $\eta$, such that for all {\small$k=0,1,\dots,T$}, $\abs*{\E g(X_{k\eta})-\E g(x_k)}\le M\eta^\alpha.$
\end{definition}

While we refer the reader to Appendix \ref{sec:Asde} for technical details, we illustrate with a basic example. 
The SGD iterates follow $x_{k+1} = x_k - \eta \nabla f_{\gamma_k}(x_k)$, and, as shown in~\citet{li2017stochastic}, it can be approximated in continuous time by the first-order SDE
\begin{equation}\label{eq:SGD_SDE}
d X_t = - \nabla f(X_t) dt + \sqrt{\eta} \sqrt{\Sigma(X_t)} dW_t,
\end{equation}
where $\Sigma(x) = \tfrac{1}{n} \sum_{i=1}^n (\nabla f(x) - \nabla f_i(x))(\nabla f(x) - \nabla f_i(x))^\top$ is the gradient noise covariance.
Intuitively, the iterates drift along the gradient while the stochasticity scales with this covariance.

\paragraph{Differential Privacy.} Here, we outline the relevant background of foundational prior work in DP optimization. We adopt the $(\varepsilon,\delta)$-DP framework \citep{Dwork2006}.

\begin{definition}
    A random mechanism $\mathcal{M}: \mathcal{D}\to\mathcal{R}$ is said to be $(\varepsilon,\delta)$-differentially private if for any two adjacent datasets $d,d'\in\mathcal{D}$ (i.e., they differ in 1 sample) and for any subset of outputs $S\subseteq \mathcal{R}$ it holds that $\Prob\left[\mathcal{M}(d)\in S\right]\le e^\varepsilon \Prob\left[\mathcal{M}(d')\in S\right]+\delta$.
\end{definition}

In this work, we consider example-level differential privacy applied by a central trusted aggregator.
We implement this using the sub-sampled Gaussian mechanism \citep{Dwork2014, Mironov2019rdp} to perturb the SGD updates: At each iteration, a random mini-batch is drawn, per-example gradients are clipped to a fixed bound to limit sensitivity, and Gaussian noise is added to the averaged clipped gradients.
The following definition formalizes these mechanisms and provides the update rules for \algname{DP-SGD} and \algname{DP-SignSGD}.

\begin{definition}\label{def:optimizers}
For $k\ge 0$, learning rate  $\eta$, variance $\sigma_{\text{DP}}^2$, and batches $\gamma_k$ of size $B$ modeled as i.i.d. uniform random variables taking values in $\{1,\dots,n\}$.
Let $g_k$ be the private gradient, defined as
\begin{equation}\label{eq:privategrad}
    g_k \coloneqq \frac{1}{B} \sum_{i \in \gamma_k} \mathcal{C}[\nabla f_i(x_k)] +\frac{1}{B} \mathcal{N}(0,C^2\sigma_{\text{DP}}^2 I_d)
\end{equation}
and $\mathcal{C}[\cdot]$ be the clipping function $ \mathcal{C}[x] = \min\left\{\frac{C}{\norm{x}_2},1\right\}x$.

The iterates of \algname{DP-SGD} are defined as
\begin{equation}\label{eq:dpsgd}\textstyle
    x_{k+1} = x_k - \eta g_k,
\end{equation}
while those of \algname{DP-SignSGD} are defined as
\begin{equation}\label{eq:dpSignSGD}\textstyle
    x_{k+1} = x_k - \eta \sign\left[ g_k\right],
\end{equation}  
where $\sign[\cdot]$ is applied component-wise. Finally, those of DP-Adam are defined in Eq.~\ref{eq:dpadam} (Appendix~\ref{sec:Aexperiments}).
\end{definition}

The following theorem from \citep{Abadi2016} gives the conditions under which \algname{DP-SGD}, and thus also \algname{DP-SignSGD}, is a differentially-private algorithm.

\begin{theorem}\label{thm:abadi}
For $q=\frac{B}{n}$ where $B$ is the batch size, $n$ is the number of training points, and number of iterations $T$, $\exists c_1, c_2$ s.t. $\forall \varepsilon < c_1 q^2 T$, if the noise multiplier $\sigma_{\text{DP}}$ satisfies $ \sigma_{\text{DP}} \geq c_2 \frac{q \sqrt{T \log (1/\delta)}}{ \varepsilon}$, \algname{DP-SGD} is  $(\varepsilon,\delta)$-differentially private for any $\delta > 0$.
\end{theorem}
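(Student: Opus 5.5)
The plan follows the moments accountant argument of \citet{Abadi2016}. The mechanism released at each iteration is the subsampled Gaussian mechanism applied to the sum of clipped per-example gradients $\sum_{i\in\gamma_k}\mathcal{C}[\nabla f_i(x_k)]$. Because of clipping, changing one example alters this sum by at most $C$ in $\ell_2$ norm, in the add/remove adjacency. The added noise $\mathcal{N}(0,C^2\sigma_{\text{DP}}^2 I_d)$ therefore gives a Gaussian mechanism with noise multiplier $\sigma_{\text{DP}}$ relative to the sensitivity. The final $1/B$ rescaling and the map $g_k\mapsto x_{k+1}$ (whether $x_k-\eta g_k$ or $x_k-\eta\sign[g_k]$) are post-processing. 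This is why the guarantee transfers verbatim to \algname{DP-SignSGD}, and the whole proof reduces to bounding the privacy loss of $T$ adaptive compositions of the subsampled Gaussian mechanism.

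\textbf{Step 1.} Define the privacy-loss log-moment $\alpha_{\mathcal M}(\lambda)=\sup_{d,d'}\log\E_{o\sim\mathcal M(d)}\big[\exp(\lambda\, c(o;d,d'))\big]$, where $c$ is the privacy loss random variable. Prove the two standard properties. The first is \emph{adaptive composability}: $\alpha_{\mathcal M}(\lambda)\le\sum_{k=1}^T\alpha_{\mathcal M_k}(\lambda)$, obtained by conditioning on past outputs and iterating expectations. The second is the \emph{tail bound}: for any $\varepsilon$, the mechanism is $(\varepsilon,\delta)$-DP with $\delta=\min_\lambda\exp(\alpha_{\mathcal M}(\lambda)-\lambda\varepsilon)$, obtained by Markov's inequality applied to $e^{\lambda c}$.

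\textbf{Step 2 (main obstacle).} Bound the single-step log-moment of the subsampled Gaussian mechanism. Under sampling probability $q$, each step compares the mixture $\mu=(1-q)\mathcal N(0,\sigma_{\text{DP}}^2)+q\,\mathcal N(1,\sigma_{\text{DP}}^2)$ against $\mu_0=\mathcal N(0,\sigma_{\text{DP}}^2)$; by rotational symmetry one may reduce to one dimension with a worst-case unit shift. The goal is
\[
\alpha(\lambda)\le \frac{q^2\lambda(\lambda+1)}{(1-q)\sigma_{\text{DP}}^2}+O\!\left(\frac{q^3\lambda^3}{\sigma_{\text{DP}}^3}\right)
\]
for $\sigma_{\text{DP}}\ge1$ and $\lambda\le\sigma_{\text{DP}}^2\log(1/(q\sigma_{\text{DP}}))$. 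I would write $\E_{\mu_0}[(\mu/\mu_0)^{\lambda+1}]$ and $\E_{\mu}[(\mu_0/\mu)^{\lambda}]$ and expand $(1+q(e^{(2z-1)/2\sigma_{\text{DP}}^2}-1))^{\lambda+1}$ binomially. The zeroth and first-order terms are exactly $1$ and $0$, because the likelihood ratio has mean one. The second-order term contributes $O(q^2\lambda^2/\sigma_{\text{DP}}^2)$ via Gaussian moment computations. The higher-order terms $t\ge3$ are dominated by a geometric series using $\E|e^{(2z-1)/2\sigma_{\text{DP}}^2}-1|^t\le (\text{const}\cdot t/\sigma_{\text{DP}})^t$-type bounds. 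The range restriction on $\lambda$ is precisely what makes this series summable.

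\textbf{Step 3.} Compose and optimize. From Steps 1 and 2, $\alpha_{\mathcal M}(\lambda)\le c\,T q^2\lambda^2/\sigma_{\text{DP}}^2$. Choose $\lambda\approx\varepsilon\sigma_{\text{DP}}^2/(2cTq^2)$, which gives $\delta\le\exp(-\varepsilon^2\sigma_{\text{DP}}^2/(4cTq^2))$. Requiring this to be at most $\delta$ yields $\sigma_{\text{DP}}\ge c_2 q\sqrt{T\log(1/\delta)}/\varepsilon$. Finally, check that this $\lambda$ lies in the admissible range of Step 2. This check is exactly where the hypothesis $\varepsilon<c_1q^2T$ enters, and it fixes the constants $c_1,c_2$.
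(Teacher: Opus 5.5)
This is the paper's route. The paper gives no proof of its own and imports the result of \citet{Abadi2016}, whose proof is your moments-accountant argument (composability and tail bound, a single-step bound for the subsampled Gaussian mechanism, then optimizing $\lambda$), and your post-processing remark is how the paper extends the guarantee to \algname{DP-SignSGD}. One caveat you share with that source: its single-step lemma also assumes $q<1/(16\sigma_{\text{DP}})$, which you dropped, and with your $\lambda\approx\varepsilon\sigma_{\text{DP}}^2/(2cTq^2)$ the admissibility check reads $\varepsilon\le 2cTq^2\log\bigl(1/(q\sigma_{\text{DP}})\bigr)$, so besides $\varepsilon<c_1q^2T$ it needs $q\sigma_{\text{DP}}$ bounded by a small constant (and $\sigma_{\text{DP}}\ge 1$), which, after reducing larger $\sigma_{\text{DP}}$ to the threshold value by post-processing, is an implicit lower bound $\varepsilon\gtrsim q^2\sqrt{T\log(1/\delta)}$ that neither the source nor this restatement makes explicit.
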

In the following, we will often use $\sigma_{DP} = \frac{\sqrt{T} \Phi}{ \varepsilon}$, where $\Phi:= q \sqrt{ \log (1/\delta)}$ to indicate the DP noise multiplier.

%%%%%%%%%%%%%%%%%%%%%%%%%%%%%%%%%%%%%%%%%%%%%%%
\section{Theoretical Results}\label{sec:theory}
%%%%%%%%%%%%%%%%%%%%%%%%%%%%%%%%%%%%%%%%%%%%%%%

In this section, we investigate how the privacy budget $\varepsilon$ influences convergence speed and shapes the \textit{privacy-utility trade-offs} in both the loss and the gradient norm. 
To do so, we leverage SDE models for \algname{DP-SGD} and \algname{DP-SignSGD}, which can be found in Theorem~\ref{thm:sdesgd} and Theorem~\ref{thm:sdesign}, respectively, and are experimentally validated in Figure \ref{fig:sdevalidation}. 
In addition, we provide the first stationary distributions for these optimizers, presented in Theorem~\ref{thm:statdistr_sgd_formal} and Theorem~\ref{thm:statdistr_sign_formal} in the Appendix. This section is organized as follows:

\begin{figure}
\centering
{\includegraphics[width=0.312\textwidth]{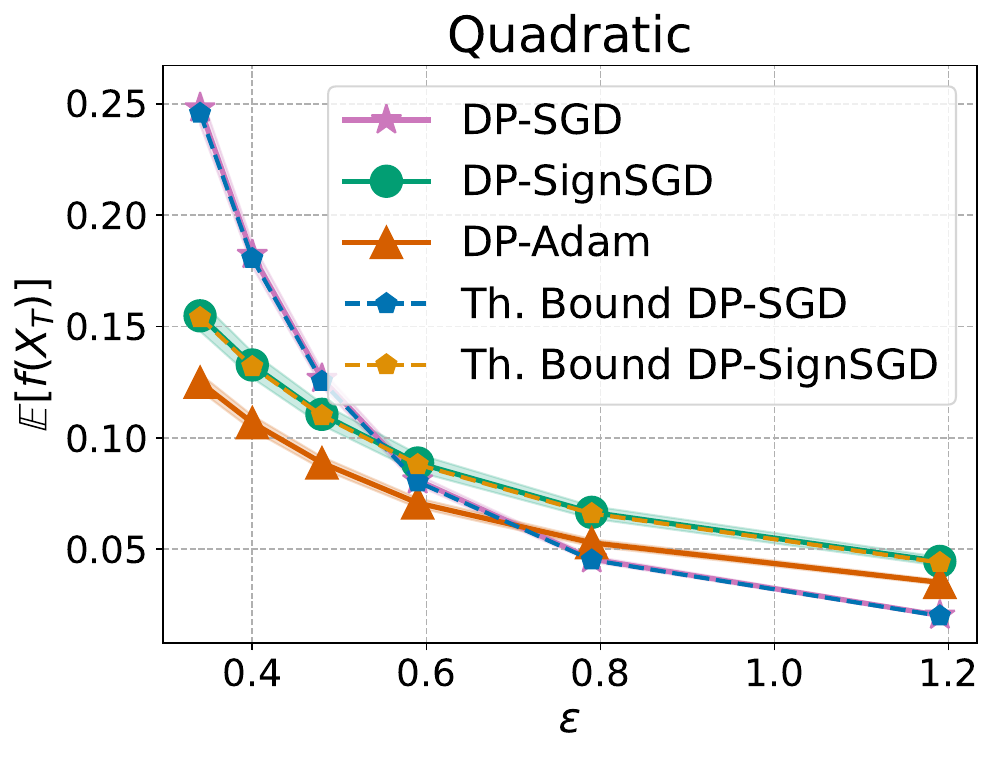} }%
{\includegraphics[width=0.305\textwidth]{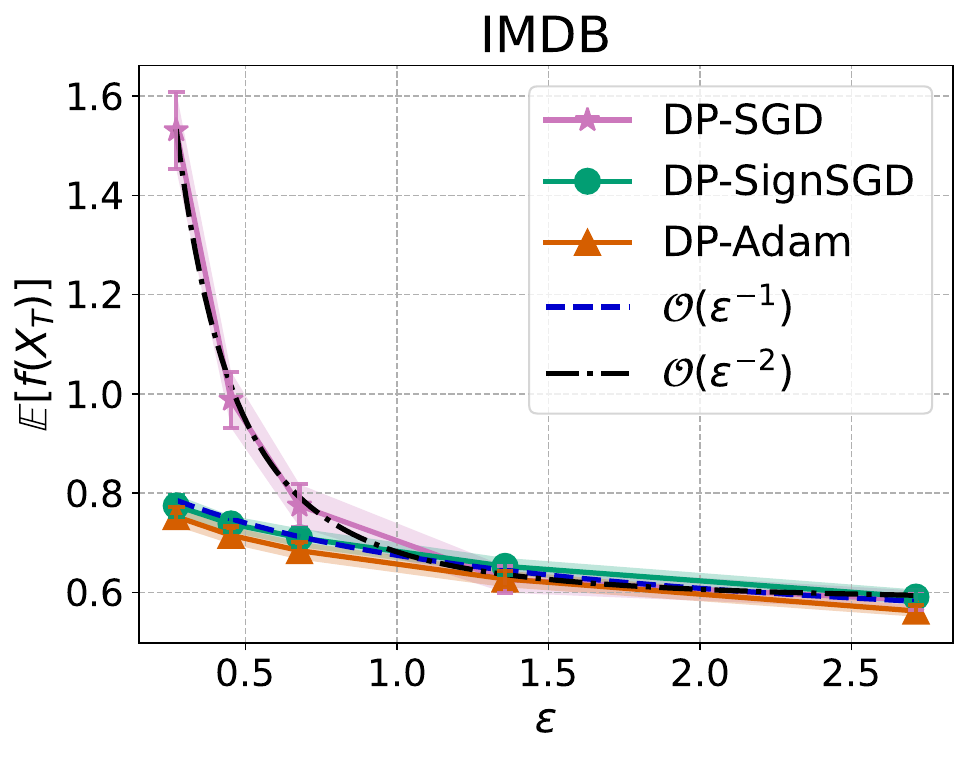} }%
{\includegraphics[width=0.301\textwidth]{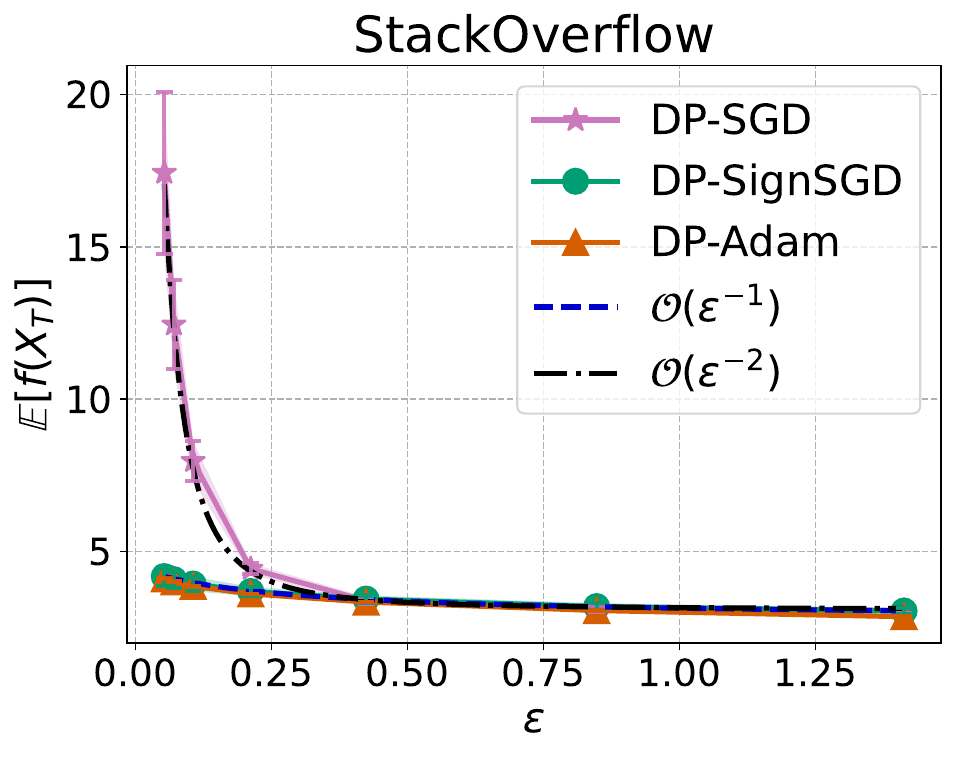}}

\caption{Empirical validation of the privacy-utility trade-off predicted by 
Thm.~\ref{thm:lossbound_sgd} and Thm.~\ref{thm:lossbound_sign}, 
comparing \algname{DP-SGD}, \algname{DP-SignSGD}, and \algname{DP-Adam}: Our focus is on verifying the functional dependence of the asymptotic loss levels in terms of $\varepsilon$. \textbf{Left:} On a quadratic convex function $f(x)=\tfrac{1}{2}x^\top Hx$, the observed empirical loss values perfectly match the theoretical predictions (Eq.~\ref{eq:ph2_dpsgd}, Eq.~\ref{eq:ph2_dpsignsgd}). 
\textbf{Center and Right:} Logistic regressions on the IMDB dataset (center) and the StackOverflow dataset (right), confirm the same pattern: the utility of \algname{DP-SGD} scales as $\tfrac{1}{\varepsilon^2}$, while the utility of \algname{DP-SignSGD} scales linearly as $\tfrac{1}{\varepsilon}$. Across all settings, we observe that the insights obtained for \algname{DP-SignSGD} extend to \algname{DP-Adam} as well as to the test loss (see Figure \ref{fig:nm_scaling_test}). For experimental details see Appendix~\ref{sec:fig1}.}
\label{fig:nm_scaling}
\end{figure}

\begin{enumerate}[itemsep=0pt, leftmargin=*]
    \item \textbf{Protocol A (Section \ref{sec:ProtocolA}).} Section~\ref{sec:dpsgd} analyzes \algname{DP-SGD}, yielding bounds for the loss (Thm.~\ref{thm:lossbound_sgd}) and the gradient norm (Thm.~\ref{thm:gradbound_sgd}) in the $\mu$-PL and $L$-smooth cases, respectively: We observe that the convergence speed is \emph{independent} of $\varepsilon$, while the privacy-utility trade-off scales as $\bigo(\nicefrac{1}{\varepsilon^2})$. Section~\ref{sec:dpsignsgd} analyzes \algname{DP-SignSGD}, and Thm.~\ref{thm:lossbound_sign} and Thm.~\ref{thm:gradbound_sign} show a qualitatively different behavior: Convergence speed scales linearly with $\varepsilon$, while the privacy-utility terms scale as $\bigo(\nicefrac{1}{\varepsilon})$, making adaptivity preferable if the privacy budget is small enough. Finally, Theorem \ref{thm:eps_star} in Section \ref{sec:ada_matters} shows that when batch noise is large enough, \algname{DP-SignSGD} always dominates. When batch noise is small, the outcome depends on the privacy budget: There exists $\varepsilon^\star$ such that for strict privacy ($\varepsilon < \varepsilon^\star$), \algname{DP-SignSGD} is preferable, while for looser privacy ($\varepsilon > \varepsilon^\star$), \algname{DP-SGD} is better.
    \item \textbf{Protocol B (Section \ref{sec:ProtocolB}).} In this section, we derive the optimal learning rates of \algname{DP-SGD} and \algname{DP-SignSGD}: That of \algname{DP-SGD} scales linearly in $\varepsilon$, while that of \algname{DP-SignSGD} is independent of it. Under these parameter choices, they achieve the same asymptotic neighbourhoods.
\end{enumerate}

We empirically validate our theoretical insights on real datasets\footnote{For all our experiments, we use the official GitHub repository \url{https://github.com/kenziyuliu/DP2} released with the Google paper \cite{li2023stp}.}. Crucially, the same insights derived from \algname{DP-SignSGD} \textit{empirically} extend to \algname{DP-Adam} as well as to test metrics: This underscores the mildness of our assumptions and the depth of our analysis.

\paragraph{Notation.}  
In the following, we use the symbol $\lesssim$ to suppress absolute numerical constants (e.g., $2$, $\pi$, etc.), and never problem-dependent quantities such as $d$, $\mu$, $L$, or $\varepsilon$: This convention lightens the presentation.  We will often use $\textcolor{orange}{\boldsymbol{\varepsilon}}$ to highlight the privacy budget in relevant formulas. We use ``high privacy'' to mean a small privacy budget, i.e., $\varepsilon \downarrow 0$. Whenever we state an $\varepsilon$-scaling such as $\mathcal{O}(1/\varepsilon)$, we refer to the leading $\varepsilon$-dependence as $\varepsilon \downarrow 0$ with all other quantities fixed under the protocol being discussed.

%%%%%%%%%%%%%%%%%%%%%%%%%%%%%%%%%%%%%%%%%%%%%%%%%%%%%%%%%%%%%%%%%%%
\subsection{Protocol A: Fixed Hyperparameters}\label{sec:ProtocolA}
%%%%%%%%%%%%%%%%%%%%%%%%%%%%%%%%%%%%%%%%%%%%%%%%%%%%%%%%%%%%%%%%%%%

Following the tuning routine of \cite{li2023stp}, we conduct an extensive grid search to select optimal hyperparameters $(\eta,C,B,\ldots)$ for a chosen $\varepsilon$. 
To isolate the effect of $\varepsilon$, we then keep the optimal hyperparameters fixed as we vary $\varepsilon$. In particular, $\eta$ does \emph{not} depend on $\varepsilon$ or on other hyperparameters. 
This absolute comparison exposes structural differences in how DP noise interacts with adaptive vs non-adaptive updates.

\begin{figure}
    \centering
    \includegraphics[width=0.8\linewidth]{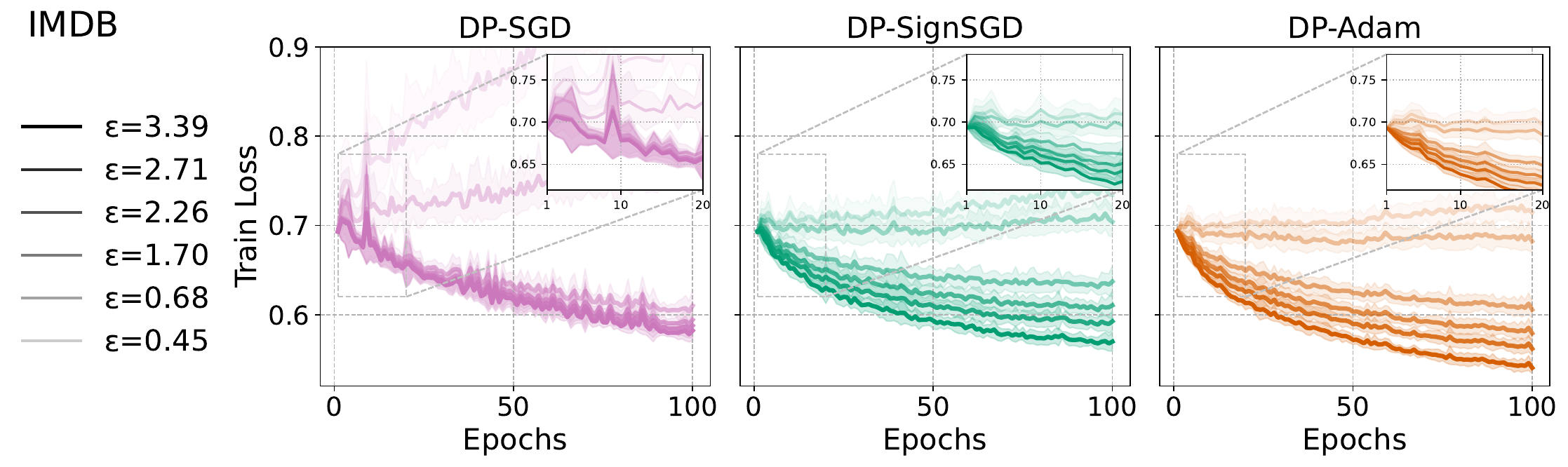}\\
    \includegraphics[width=0.8\linewidth]{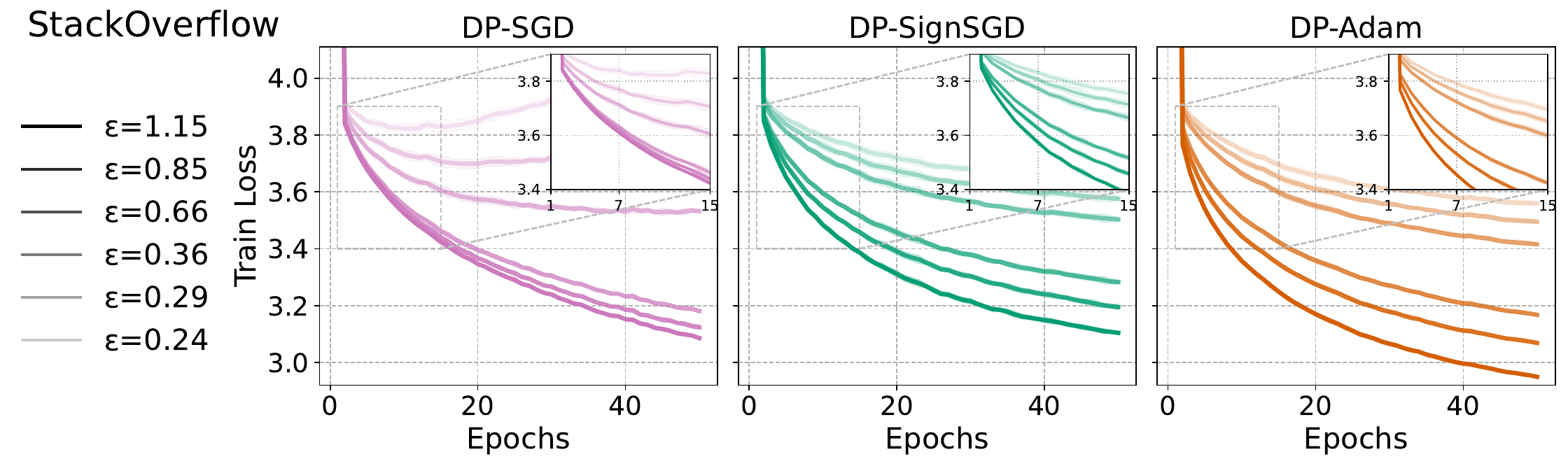}
    
    \vspace{-0.3cm}
    
    \caption{Empirical validation of the convergence speeds predicted by Thm.~\ref{thm:lossbound_sgd} and Thm.~\ref{thm:lossbound_sign}. We compare \algname{DP-SGD}, \algname{DP-SignSGD}, and \algname{DP-Adam} as we train a logistic regression on the IMDB dataset (\textbf{Top Row}) and on the StackOverflow dataset (\textbf{Bottom Row}). In both tasks, we verify that when \algname{DP-SGD} converges, its speed is unaffected by $\varepsilon$. As expected, it diverges when $\varepsilon$ is too small. Regarding \algname{DP-SignSGD} and \algname{DP-Adam}, they are faster when $\varepsilon$ is large and never diverge even when this is small. Crucially, Figure \ref{fig:speed_test_loss} shows that these insights are also verified on the test loss. For experimental details see Appendix~\ref{sec:fig2}.}
    \label{fig:speed}
\end{figure}

%%%%%%%%%%%%%%%%%%%%%%%%%%%%%%%%%%%%%%%%%%%%%%%%%%%%%%%%%%%%%%%%%%%%%%%%%%%%%%%%%%
\subsubsection{\texorpdfstring{\algname{DP-SGD}: The \textit{Privacy-Utility Trade-Off} is $\bigo\left(\nicefrac{1}{\textcolor{orange}{\boldsymbol{\varepsilon^2}}}\right)$}{\algname{DP-SGD}: The privacy-utility trade-off}}\label{sec:dpsgd}
%%%%%%%%%%%%%%%%%%%%%%%%%%%%%%%%%%%%%%%%%%%%%%%%%%%%%%%%%%%%%%%%%%%%%%%%%%%%%%%%%%

By construction, a mini-batch can contain both \emph{clipped} and \emph{unclipped} per-example gradients. For exposition, we first study two idealized regimes: \textbf{Phase~1}, where all examples are clipped, and \textbf{Phase~2}, where none are clipped. This separation is purely pedagogical and allows us to isolate how the privacy budget $\textcolor{orange}{\boldsymbol{\varepsilon}}$ enters the dynamics. The fully realistic mixed setting, where clipped and unclipped examples coexist within the same batch, is handled by Theorem~\ref{thm:mixed_phase_L_smooth}.

\begin{theorem}\label{thm:lossbound_sgd} Let $f$ be $\mu$-PL and $L$-smooth, then during

$\bullet$ Phase~1: when each gradient is clipped, the loss satisfies:
\begin{align}\label{eq:ph1_dpsgd}\textstyle
    \E[f(X_t)] \lesssim 
    f(X_0) \underbrace{
    e^{-\frac{{\mu} C}{{\sigma_\gamma} \sqrt{d}} t}}_{\text{Decay}}
    + \left( 1 - e^{- \frac{{\mu} C}{{\sigma_\gamma} \sqrt{d}} t} \right) 
    \underbrace{\frac{T\eta  d^{\frac{3}{2}}  L C \sigma_\gamma }{\mu} 
    \left(\frac{\textcolor{orange}{\boldsymbol{\varepsilon^2}} }{B d T} +  \frac{\Phi^2}{B^2}\right) \frac{1}{\textcolor{orange}{\boldsymbol{\varepsilon^2}}}}_{\text{Privacy-Utility Trade-off}};
\end{align}

$\bullet$ Phase~2: when \textit{no} gradient is clipped, the loss satisfies:
\begin{align}\label{eq:ph2_dpsgd}\textstyle
    \begin{split}
            \E[f(X_t)] \lesssim {f(X_0)} e^{-\mu  t} + \left( 1 -e^{-\mu  t}\right) \frac{T \eta d L}{\mu}\left(\frac{\textcolor{orange}{\boldsymbol{\varepsilon^2}}{\sigma_\gamma}^2}{BT} + C^2\frac{\Phi^2}{B^2}\right)\frac{1}{\textcolor{orange}{\boldsymbol{\varepsilon^2}}}.
    \end{split}
\end{align}
\end{theorem}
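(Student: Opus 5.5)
We work with the SDE models of \algname{DP-SGD} in the two phases (Theorem~\ref{thm:sdesgd} in the Appendix, which we take as given), apply It\^o's lemma to $f(X_t)$, and close the estimate with a Gr\"onwall argument. In both phases the SDE has the form $dX_t = -b(X_t)\,dt + \sqrt{\eta}\,\Sigma_{\text{tot}}^{1/2}\,dW_t$. The drift $b$ and the total diffusion $\Sigma_{\text{tot}}$ differ between the phases, and $\Sigma_{\text{tot}}$ collects the batch noise and the DP noise, with $\sigma_{\text{DP}}=\sqrt{T}\Phi/\varepsilon$ substituted.

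\textbf{Phase 2 (no clipping).} Here $b=\nabla f$. The diffusion combines the batch term $\sigma_\gamma^2/B$ with the DP term $C^2\sigma_{\text{DP}}^2/B^2 = C^2 T\Phi^2/(B^2\varepsilon^2)$, both times $I_d$. It\^o's lemma gives
\[
d f(X_t) = -\norm{\nabla f(X_t)}_2^2\,dt + \tfrac{\eta}{2}\Tr\bigl(\nabla^2 f(X_t)\,\Sigma_{\text{tot}}\bigr)\,dt + dM_t .
\]
We bound $\Tr(\nabla^2 f\,\Sigma_{\text{tot}})\le dL\,\bigl(\sigma_\gamma^2/B + C^2T\Phi^2/(B^2\varepsilon^2)\bigr)$ using $L$-smoothness. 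We bound the gradient term with $\mu$-PL, $\norm{\nabla f}^2\ge 2\mu f$. Taking expectations, the martingale term vanishes and we obtain the linear differential inequality $\frac{d}{dt}\E f \le -\mu\,\E f + K$ (up to constants). Gr\"onwall then gives $\E f(X_t)\le f(X_0)e^{-\mu t}+(1-e^{-\mu t})K/\mu$. Factoring out $T/\varepsilon^2$ puts $K/\mu$ in the form of \eqref{eq:ph2_dpsgd}.

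\textbf{Phase 1 (all clipped).} Each per-example gradient becomes $C\nabla f_i/\norm{\nabla f_i}$. Using the approximation of Lemma~\ref{cor:gradapprox}, the drift is $b(x)\approx \frac{C}{\sigma_\gamma\sqrt d}\nabla f(x)$. The covariance of a normalized vector is at most $C^2 I_d/B$ in trace, so the batch part of the diffusion is bounded by roughly $C^2/(Bd)$ per coordinate. The DP part is unchanged, $C^2T\Phi^2/(B^2\varepsilon^2)$. Applying It\^o and PL as before gives
\[
\frac{d}{dt}\E f\lesssim -\frac{\mu C}{\sigma_\gamma\sqrt d}\,\E f + \eta dL C^2\Bigl(\frac{1}{Bd}+\frac{T\Phi^2}{B^2\varepsilon^2}\Bigr).
\]
Gr\"onwall yields the decay rate $\mu C/(\sigma_\gamma\sqrt d)$. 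The asymptotic level is the source term divided by that rate, namely
\[
\frac{\eta d^{3/2} L C\sigma_\gamma}{\mu}\Bigl(\frac{1}{Bd}+\frac{T\Phi^2}{B^2\varepsilon^2}\Bigr)=\frac{T\eta d^{3/2}LC\sigma_\gamma}{\mu}\Bigl(\frac{\varepsilon^2}{BdT}+\frac{\Phi^2}{B^2}\Bigr)\frac{1}{\varepsilon^2},
\]
which matches \eqref{eq:ph1_dpsgd}.

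\textbf{Main obstacle.} The delicate step is Phase 1. It needs the clipped drift from Lemma~\ref{cor:gradapprox} to hold uniformly along the trajectory, with $\E[\nabla f_\gamma/\norm{\nabla f_\gamma}]\approx\nabla f/(\sigma_\gamma\sqrt d)$ in the high-dimensional, low-SNR regime. It also needs the covariance of the normalized heavy-tailed (Student-$t$) gradients to be controlled, with trace at most $C^2$ per example. I would first bound that covariance crudely by the second moment, $\E\norm{\mathcal C[\cdot]}^2\le C^2$, since this avoids any moment assumption on the $t_\nu$ noise. Everything else is a standard It\^o, PL and Gr\"onwall computation, with absolute constants absorbed into $\lesssim$.
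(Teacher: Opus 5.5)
Your proposal is correct and follows essentially the same route as the paper's proof. The paper also applies It\^o's formula to the phase-wise SDEs of Theorem~\ref{thm:sdesgd}, using the trace bounds $\Tr\bar{\Sigma}\le C^2/B + dC^2\sigma_{DP}^2/B^2$ (Phase~1) and $\Tr\bar{\Sigma}= d(\sigma_\gamma^2/B + C^2\sigma_{DP}^2/B^2)$ (Phase~2) together with $\nabla^2 f\preceq L I_d$ and the PL inequality, then a Gr\"onwall step. It finishes by substituting $\sigma_{DP}=\sqrt{T}\Phi/\varepsilon$ and absorbing constants such as $2$ and $K(\nu)$ into $\lesssim$.
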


The decay rates are independent of $\textcolor{orange}{\boldsymbol{\varepsilon}}$: in Phase~2 they depend only on~$\mu$, while in Phase~1 normalization spreads the signal over the sphere of radius~$C$ \citep[Ch.~3]{Vershynin2018highdimensional}, giving a rate proportional to $C/(\sigma_\gamma \sqrt{d})$. In both phases, the \textit{privacy-utility trade-off} term scales as $\nicefrac{1}{\textcolor{orange}{\boldsymbol{\varepsilon^2}}}$ in the \textit{high-privacy} regime.

We now turn to analyzing SDE dynamics assuming only $L$-smoothness of $f$. The following theorem presents a bound on the expected gradient norm \textit{across} both phases \textbf{together}: We observe that the expected gradient norm admits the same $\bigo\left(\nicefrac{1}{\textcolor{orange}{\boldsymbol{\varepsilon^2}}}\right)$ scaling in the \textit{high-privacy} regime.

\vspace{0.1cm}

\begin{theorem}\label{thm:gradbound_sgd}Let $f$ be $L$-smooth, {\small $K_1 := \max\{1, \tfrac{\sigma_\gamma \sqrt{d}}{C}\}$}, and {\small $K_2 := \max\{\frac{\sigma_\gamma^2}{B}, \tfrac{C^2}{B d}\}$}. Then,

\vspace{-0.1cm}

\begin{align}\label{eq:gradnormsgd_T}\textstyle
    \E \left[ \lVert \nabla f(X_{\Tilde{t}}) \rVert_2^2 \right] \lesssim K_1\left(\frac{f(X_0)}{\eta T}+ \eta d L \left(K_2+\frac{ \left(C\Phi\right)^2 T }{B^2\textcolor{orange}{\boldsymbol{\varepsilon^2}}}\right)\right),
\end{align}

\vspace{-0.1cm}

where $\Tilde{t}$ is a random time with uniform distribution on $[0,\tau]$.
\end{theorem}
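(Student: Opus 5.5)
The plan is to apply Itô's formula to $f(X_t)$ along the \algname{DP-SGD} SDE of Theorem~\ref{thm:sdesgd}, integrate over $[0,\tau]$ with $\tau=\eta T$, and use $f\ge f^*=0$ to telescope. By Lemma~\ref{cor:gradapprox} the SDE drift is $-\nabla f(X_t)$ in Phase~2 and approximately $-\frac{C}{\sigma_\gamma\sqrt d}\nabla f(X_t)$ in Phase~1. In a general mixed batch the drift is therefore $-\kappa(X_t)\nabla f(X_t)$, with $\kappa$ a convex combination of $1$ and $\frac{C}{\sigma_\gamma\sqrt d}$, so $\kappa\ge \min\{1,\frac{C}{\sigma_\gamma\sqrt d}\}=1/K_1$. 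The diffusion matrix is $\eta\bigl(\Sigma_{\text{batch}}+\frac{C^2\sigma_{\text{DP}}^2}{B^2}I_d\bigr)$. Its batch part has covariance $\frac{\sigma_\gamma^2}{B}I_d$ for unclipped examples and at most $\frac{C^2}{Bd}I_d$ for clipped ones, since a clipped gradient lies on the sphere of radius $C$ and so has per-coordinate variance at most $C^2/d$. A mixture is therefore bounded by $K_2 I_d$.

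The key steps run in this order. First, Itô gives
\begin{equation*}
df(X_t)=-\kappa(X_t)\lVert\nabla f(X_t)\rVert_2^2\,dt+\tfrac{\eta}{2}\Tr\bigl(\nabla^2 f(X_t)\,\Sigma_{\text{tot}}(X_t)\bigr)dt+dM_t .
\end{equation*}
Second, $L$-smoothness gives $\Tr(\nabla^2 f\,\Sigma_{\text{tot}})\le L\Tr(\Sigma_{\text{tot}})\le dL\bigl(K_2+\frac{C^2\sigma_{\text{DP}}^2}{B^2}\bigr)$. Third, take expectations (the martingale term vanishes), integrate from $0$ to $\tau$, and drop $\E f(X_\tau)\ge0$. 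This yields
\begin{equation*}
\frac1{K_1}\int_0^\tau \E\lVert\nabla f(X_t)\rVert_2^2\,dt\le f(X_0)+\tau\,\tfrac{\eta dL}{2}\Bigl(K_2+\tfrac{C^2\sigma_{\text{DP}}^2}{B^2}\Bigr).
\end{equation*}
Fourth, divide by $\tau=\eta T$ and recognise the left side as $\E\lVert\nabla f(X_{\tilde t})\rVert_2^2$ for $\tilde t$ uniform on $[0,\tau]$. Finally, substitute $\sigma_{\text{DP}}^2=T\Phi^2/\varepsilon^2$ from Theorem~\ref{thm:abadi}, which gives the term $\frac{(C\Phi)^2T}{B^2\varepsilon^2}$ and hence \eqref{eq:gradnormsgd_T}, up to absolute constants.

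The main obstacle is justifying the uniform lower bound on the drift coefficient and the uniform upper bound on the noise trace across both phases and the mixed regime. The Phase~1 drift relies on the approximation of Lemma~\ref{cor:gradapprox}, and the clipped covariance must be controlled by $C^2/(Bd)$ without the Student-$t$ tails of Assumption~\ref{ass:Noise} breaking the bound. The Student-$t$ issue is avoided because clipping makes the per-example gradients bounded. For the mixed batch, I will write the drift as a weighted combination of the two phase coefficients, as in Theorem~\ref{thm:mixed_phase_L_smooth}, and take the worst case of each. This worst-case step is exactly what produces the maxima $K_1$ and $K_2$.
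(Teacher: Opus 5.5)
Your proposal is correct and follows essentially the same route as the paper, which splits the argument across Theorems~\ref{thm:gradbound_sgd_formal} and~\ref{thm:mixed_phase_L_smooth}: Itô's formula on $f(X_t)$ with the drift coefficient bounded below by $1/K_1$ as a convex combination of the Phase~1 and Phase~2 coefficients, and the diffusion trace bounded by $d\bigl(K_2+\frac{C^2\sigma_{\text{DP}}^2}{B^2}\bigr)$ via $L$-smoothness. It then integrates over $[0,\tau]$, divides by $\tau=\eta T$, and substitutes $\sigma_{\text{DP}}$. One small tightening: the sphere argument gives only $\Tr\Cov\le C^2/B$ for the clipped batch part, not the matrix bound $\frac{C^2}{Bd}I_d$, but only the trace enters through $\Tr(\nabla^2 f\,\Sigma)\le L\Tr\Sigma$, so that is all you need and is exactly what the paper uses.
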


\textbf{Takeaway.}  
Thm~\ref{thm:lossbound_sgd} separates two effects: the \textit{decay} terms, which encode the convergence speed, and the \textit{privacy-utility} terms, which determine the asymptotic neighbourhood under DP. 
It shows that the convergence speed of \algname{DP-SGD} is unaffected by the privacy budget~$\textcolor{orange}{\boldsymbol{\varepsilon}}$: Fig.~\ref{fig:speed} empirically shows that, whenever \algname{DP-SGD} does not diverge, its convergence speed is independent of~$\varepsilon$.
Additionally, when $\varepsilon \rightarrow 0$, the  \textit{privacy-utility trade-off} scales as $\bigo\left(\nicefrac{1}{\textcolor{orange}{\boldsymbol{\varepsilon^2}}}\right)$, consistent with Thm \ref{thm:gradbound_sgd}.
This insight is validated in Fig.~\ref{fig:nm_scaling}: on a quadratic function (left panel), the empirical loss matches the theoretical values from Thm~\ref{thm:lossbound_sgd}, and the same scaling is reproduced when training classifiers on IMDB and StackOverflow (center and right panels). 
The behavior persists on the test loss (Fig.~\ref{fig:nm_scaling_test}).

%%%%%%%%%%%%%%%%%%%%%%%%%%%%%%%%%%%%%%%%%%%%%%%%%%%%%%%%%%%%%%%
\subsubsection{\texorpdfstring{\algname{DP-SignSGD}: The \textit{Privacy-Utility Trade-Off} is $\bigo\left(\nicefrac{1}{\textcolor{orange}{\boldsymbol{\varepsilon}}}\right)$}{\algname{DP-SignSGD}: The privacy-utility trade-off}}\label{sec:dpsignsgd}
%%%%%%%%%%%%%%%%%%%%%%%%%%%%%%%%%%%%%%%%%%%%%%%%%%%%%%%%%%%%%%%

As for \algname{DP-SGD}, we isolate the effect of $\textcolor{orange}{\boldsymbol{\varepsilon}}$ on the dynamics of \algname{DP-SignSGD} and study the loss in each phase \textbf{separately}. Theorem \ref{thm:mixed_phase_L_smooth_sign} covers the general case.

\begin{figure}
    \centering
    \includegraphics[width=0.95\linewidth]{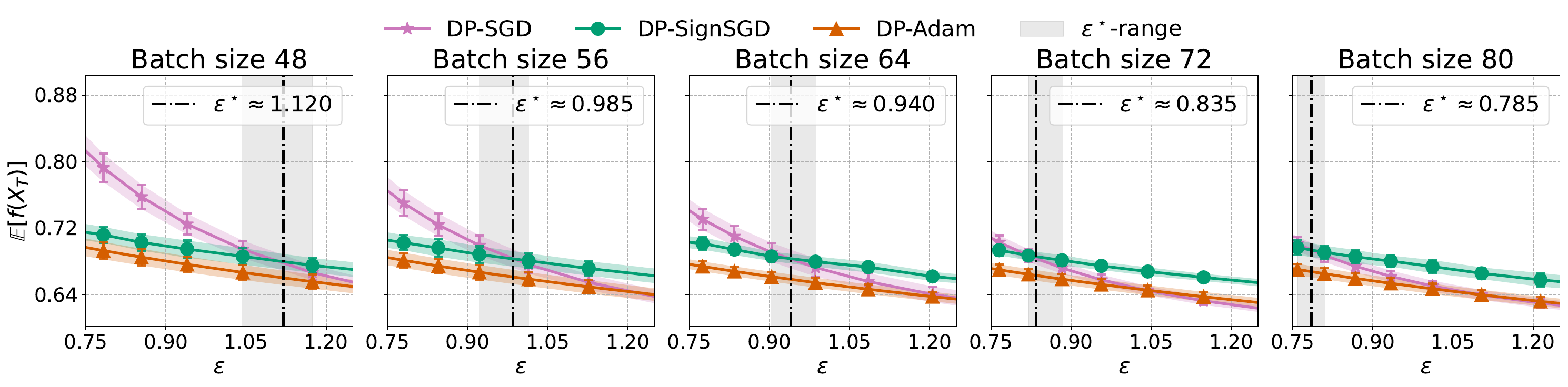}

    \vspace{-0.3cm}
    
    \caption{Logistic regression on IMDB Dataset: From left to right, we decrease the batch noise, i.e., increase the batch size, taking values $B \in \{48,56,64,72,80\}$: As per Theorem \ref{thm:eps_star}, the privacy threshold $\varepsilon^\star$ that determines when \algname{DP-SignSGD} is more advantageous than \algname{DP-SGD} shifts to the left. This confirms that if there is more noise due to the batch size, less privacy noise is needed for \algname{DP-SignSGD} to be preferable over \algname{DP-SGD}. For experimental details see Appendix~\ref{sec:fig3}.}
\label{fig:eps_star}
\end{figure}

\begin{theorem}\label{thm:lossbound_sign} Let $f$ be $\mu$-PL and $L$-smooth. Then, during

$\bullet$ Phase~1: when the gradient is clipped, the loss satisfies:

\vspace{-0.1cm}

\begin{align} \label{eq:ph1_dpsignsgd}
        \E[f(X_t)] \lesssim f(X_0) \underbrace{e^{\frac{-\mu B }{\sigma_\gamma  \sqrt{d T} } \frac{\textcolor{orange}{\boldsymbol{\varepsilon}}}{\Phi } t}}_{\text{Decay}} + \left( 1 - e^{\frac{-\mu B}{\sigma_\gamma  \sqrt{d T} } \frac{\textcolor{orange}{\boldsymbol{\varepsilon}}}{\Phi } t}\right) \underbrace{\frac{\sqrt{T}\eta L d^{\frac{3}{2}}\sigma_\gamma}{\mu B} \frac{\Phi}{ \textcolor{orange}{\boldsymbol{\varepsilon}}}}_{\text{Privacy-Utility Trade-off}}; 
        \end{align}

\vspace{-0.1cm}
        
$\bullet$ Phase~2, i.e., when \textit{no} gradient is clipped, the loss satisfies:
        \begin{align} \label{eq:ph2_dpsignsgd}
        \textstyle \E[f(X_t)] \lesssim f(X_0) \underbrace{e^{\frac{-\mu\textcolor{orange}{\boldsymbol{\varepsilon}} t}{\sqrt{\textcolor{orange}{\boldsymbol{\varepsilon^2}}\frac{\sigma_\gamma^2}{B} +  \frac{C^2\Phi^2}{B^2} T}} }}_{\text{Decay}} + \left( 1 - e^{ \frac{-\mu\textcolor{orange}{\boldsymbol{\varepsilon}} t}{\sqrt{\textcolor{orange}{\boldsymbol{\varepsilon^2}}\frac{\sigma_\gamma^2}{B} +  \tfrac{ C^2\Phi^2}{B^2} T}} }\right) \underbrace{\frac{\sqrt{T}\eta L d }{\mu}\sqrt{\tfrac{\textcolor{orange}{\boldsymbol{\varepsilon^2}}\sigma_\gamma^2}{BT} +  \tfrac{C^2\Phi^2}{B^2}} \frac{1}{\textcolor{orange}{\boldsymbol{\varepsilon}}}}_{\text{Privacy-Utility Trade-off}}.
        \end{align}
\end{theorem}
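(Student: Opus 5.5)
We will work with the SDE model of \algname{DP-SignSGD} from Theorem~\ref{thm:sdesign} (Appendix), specialized to each phase, and apply It\^o's formula to $f(X_t)$. Both phases have the same structure. The SDE takes the form $dX_t = -\E[\sign(g)]\,dt + \sqrt{\eta}\,\sqrt{\Sigma(X_t)}\,dW_t$ with $\Sigma = I_d - \diag(\E[\sign g])^2$. The drift comes from the expected sign of a noisy private gradient $g$, whose coordinates are signal plus an (approximately Gaussian) perturbation. Using the standard bound for the Gaussian case, $\E[\sign(a+\xi)] = \erf\!\big(a/(\sqrt{2}s)\big)$, and linearizing $\erf$ near zero in the high-noise regime, the drift is approximately $\sqrt{2/\pi}\,\nabla f(X_t)/s$, where $s$ is the per-coordinate standard deviation of $g$. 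The diffusion is bounded by $\eta I_d$, since $\Sigma \preceq I_d$.

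\textbf{Phase 2 (no clipping).} Here $g = \nabla f + Z_\gamma + \frac{1}{B}\mathcal{N}(0,C^2\sigma_{DP}^2 I)$, so
\[ s^2 = \frac{\sigma_\gamma^2}{B} + \frac{C^2\sigma_{DP}^2}{B^2} = \frac{\sigma_\gamma^2}{B} + \frac{C^2\Phi^2 T}{B^2\varepsilon^2}. \]
The Student-$t$ component is handled via the Gaussian proxy of Assumption~\ref{ass:Noise} at the level of $\lesssim$. It\^o's formula then gives
\[ d\,\E f(X_t) \le -\frac{1}{s}\E\|\nabla f\|^2\,dt + \frac{\eta}{2}\E\Tr(\nabla^2 f\,\Sigma)\,dt. \]
We bound $\Tr(\nabla^2 f\,\Sigma)\le Ld$ and $\|\nabla f\|^2\ge 2\mu f$ (PL). This yields the linear ODE inequality $\dot u \le -\frac{2\mu}{s}u + \frac{\eta L d}{2}$. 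Grönwall's lemma gives $u(t)\le u(0)e^{-2\mu t/s} + (1-e^{-2\mu t/s})\frac{\eta L d s}{4\mu}$. Substituting $s = \frac{1}{\varepsilon}\sqrt{\varepsilon^2\sigma_\gamma^2/B + C^2\Phi^2T/B^2}$ reproduces the decay rate in \eqref{eq:ph2_dpsignsgd}. For the neighbourhood term we write $s = \frac{\sqrt{T}}{\varepsilon}\sqrt{\varepsilon^2\sigma_\gamma^2/(BT) + C^2\Phi^2/B^2}$, which matches the stated expression.

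\textbf{Phase 1 (all clipped).} Each per-example gradient becomes $C\,\nabla f_i/\|\nabla f_i\|$. By Lemma~\ref{cor:gradapprox}, its mean is $\approx C\nabla f/(\sigma_\gamma\sqrt d)$, and its per-coordinate variance is at most $C^2/d$, which is negligible relative to the DP noise. The DP part has per-coordinate standard deviation $C\sigma_{DP}/B = C\sqrt T\Phi/(B\varepsilon)$, which dominates when $\varepsilon$ is small. The signal-to-noise per coordinate is therefore $\frac{C\nabla f/(\sigma_\gamma\sqrt d)}{C\sqrt T\Phi/(B\varepsilon)} = \frac{B\varepsilon}{\sigma_\gamma\sqrt{dT}\,\Phi}\nabla f$; note that $C$ cancels. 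The same It\^o and PL argument gives the rate $\frac{\mu B\varepsilon}{\sigma_\gamma\sqrt{dT}\Phi}$. It gives the neighbourhood $\frac{\eta L d}{\mu}\cdot\frac{\sigma_\gamma\sqrt{dT}\Phi}{B\varepsilon}$, which is exactly \eqref{eq:ph1_dpsignsgd}. The factor $d^{3/2}$ arises from $d$ (trace) times $\sqrt d$ (normalization).

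\textbf{Main obstacle.} The delicate step is justifying the linearization of $\erf$: we need $|\partial_i f|/s \lesssim 1$ coordinatewise, which is the high-privacy / SNR assumption of Remark~\ref{remark:SNR}. A secondary issue is controlling the Student-$t$ batch noise in place of the Gaussian. I would first try to handle both with a lower bound on the drift, using $\erf(x)\ge c\,x$ for $|x|\le 1$ and monotonicity. The remaining effects are absorbed into the absolute constants hidden by $\lesssim$. An upper bound is not needed, because the proof only requires $\langle\nabla f, \E\sign g\rangle \gtrsim \|\nabla f\|^2/s$.
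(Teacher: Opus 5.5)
Your proposal is correct and follows the paper's proof essentially step by step: the SDE of Theorem~\ref{thm:sdesign} with the linearized drift of Corollary~\ref{cor:actual_sdes}, It\^o's formula, the bound $\Tr(\nabla^2 f\,\bar\Sigma)\le L\Tr\bar\Sigma\le Ld$, the PL inequality and Gr\"onwall, and finally the substitution $\sigma_{DP}=\sqrt{T}\Phi/\varepsilon$.

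The only minor difference is in Phase~1 (both routes give the same drift coefficient):
\begin{itemize}
\item The paper conditions on the batch so that only the DP Gaussian enters the $\erf$, linearizes, and then averages over $\gamma$ via Lemma~\ref{cor:gradapprox}.
\item You instead fold the normalized batch noise into a single Gaussian proxy and argue that it is negligible.
\end{itemize}

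One small correction: in the paper's Phase~1 SDE the covariance may have off-diagonal entries, so $\bar\Sigma\preceq I_d$ need not hold. This does not affect your argument, because the trace bound $\Tr\bar\Sigma\le d$, which is all you use, still holds.
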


In the \textit{high-privacy} regime, the decay rate is proportional to $\textcolor{orange}{\boldsymbol{\varepsilon}}$ in both phases (Eq. \ref{eq:ph1_dpsignsgd} and Eq. \ref{eq:ph2_dpsignsgd}), unlike \algname{DP-SGD}, where it is independent of $\textcolor{orange}{\boldsymbol{\varepsilon}}$ (Eq. \ref{eq:ph1_dpsgd} and Eq. \ref{eq:ph2_dpsgd}).
At the same time, the \textit{privacy-utility trade-off} term in both phases scales as $\bigo\left(\nicefrac{1}{\textcolor{orange}{\boldsymbol{\varepsilon}}}\right)$. This \textit{might} be more favorable than the $\bigo\left(\nicefrac{1}{\textcolor{orange}{\boldsymbol{\varepsilon^2}}}\right)$ scaling of \algname{DP-SGD} in \textit{high-privacy} regimes.

\vspace{-0.1cm}

For $L$-smooth functions, the following result presents a bound on the expected gradient norm \textit{across} both phases \textbf{together}.
As the bound scales as $\bigo\left(\nicefrac{1}{\textcolor{orange}{\boldsymbol{\varepsilon}}}\right)$, it suggests that adaptivity \textit{might} mitigate the effect of large privacy noise on performance.
Intuitively, the $\sign[\cdot]$ effectively clips the privatized gradient signal, capping the update magnitude and reducing sensitivity to noise corruption.

\vspace{-0.1cm}

\begin{theorem}\label{thm:gradbound_sign} Let $f$ be $L$-smooth and {\small $K_3:=\max \left\{\sqrt{\frac{\sigma_\gamma^2 \textcolor{orange}{\boldsymbol{\varepsilon^2}}}{BT}+ \frac{C^2\Phi^2}{B^2}},  \frac{\sigma_\gamma \Phi}{B} \sqrt{d} \right\}$}. Then,
    \begin{equation}\label{eq:gradnormsign}\textstyle
        \E \left[ \lVert \nabla f(X_{\Tilde{t}}) \rVert_2^2 \right] \lesssim K_3\left(\frac{f(X_0)}{\eta\sqrt{T}} + \eta d L \sqrt{T} \right) \frac{1}{\textcolor{orange}{\boldsymbol{\varepsilon}}},
    \end{equation}
    where $\Tilde{t}$ is a random time with uniform distribution on $[0,\tau]$.
\end{theorem}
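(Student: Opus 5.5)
We will work with the SDE model of \algname{DP-SignSGD} from Theorem~\ref{thm:sdesign}. Its drift is $-\E[\sign(g)]$ and its diffusion has size $\sqrt{\eta}$, with diffusion matrix $\Sigma_{\text{sign}} = I_d - \E[\sign g]\E[\sign g]^\top$. We apply It\^o's formula to $f(X_t)$:
\begin{equation*}
d f(X_t) = -\nabla f(X_t)^\top \E[\sign g_t]\, dt + \frac{\eta}{2}\Tr\left(\nabla^2 f(X_t)\Sigma_{\text{sign}}(X_t)\right) dt + dM_t .
\end{equation*}
Here $M_t$ is a martingale. Since $\Tr(\nabla^2 f\,\Sigma_{\text{sign}})\le L\Tr(\Sigma_{\text{sign}})\le Ld$, the It\^o correction is bounded by $\eta L d/2$ uniformly in both phases. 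The remaining work is to lower bound the drift term $\nabla f^\top\E[\sign g]$ by a multiple of $\lVert\nabla f\rVert_2^2$.

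\emph{Phase~2 (no clipping).} Here $g = \nabla f + \text{noise}$, and the noise is Gaussian with per-coordinate standard deviation $s_2:=\sqrt{\sigma_\gamma^2/B + C^2\sigma_{\text{DP}}^2/B^2}$. Coordinatewise we have $\E[\sign g_i] = \erf(\partial_i f/(\sqrt2 s_2))$. We use $x\,\erf(x/(\sqrt2 s))\ge c\,\min\{x^2/s,\ |x|\}$. In the regime $|\partial_i f|\lesssim s_2$ this gives drift $\gtrsim \lVert\nabla f\rVert^2/s_2$. Substituting $\sigma_{\text{DP}}=\sqrt T\Phi/\varepsilon$ gives $s_2 = \frac{1}{\varepsilon}\sqrt{\varepsilon^2\sigma_\gamma^2/B + C^2\Phi^2T/B^2} = \sqrt T\,\sqrt{\cdot}/\varepsilon$, which is the first entry of $K_3$ times $\sqrt T/\varepsilon$.

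\emph{Phase~1 (all clipped).} Each clipped example has mean approximately $C\nabla f/(\sigma_\gamma\sqrt d)$ by Lemma~\ref{cor:gradapprox}, so the signal per coordinate is $C\partial_i f/(\sigma_\gamma\sqrt d)$. The noise has standard deviation at most of order $C\sigma_{\text{DP}}/B$, up to the batch part bounded by $C/\sqrt{Bd}$. The same Erf bound then gives drift $\gtrsim \frac{C}{\sigma_\gamma\sqrt d}\lVert\nabla f\rVert^2\cdot\frac{B}{C\sigma_{\text{DP}}} = \frac{B\varepsilon}{\sigma_\gamma\sqrt{dT}\,\Phi}\lVert\nabla f\rVert^2$, whose inverse coefficient is the second entry of $K_3$ times $\sqrt T/\varepsilon$. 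Taking the worse of the two phases pointwise (a mixed batch interpolates, as in Theorem~\ref{thm:mixed_phase_L_smooth_sign}), we obtain for all $t$
\begin{equation*}
\nabla f(X_t)^\top\E[\sign g_t]\ \gtrsim\ \frac{\varepsilon}{K_3\sqrt T}\lVert\nabla f(X_t)\rVert_2^2 .
\end{equation*}

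\emph{Integration.} We take expectations, integrate over $[0,\tau]$ with $\tau=\eta T$, and use $f\ge0$:
\begin{equation*}
\frac{\varepsilon}{K_3\sqrt T}\int_0^\tau\E\lVert\nabla f(X_t)\rVert^2dt\lesssim f(X_0)+\tfrac{\eta L d}{2}\tau .
\end{equation*}
Dividing by $\tau$ and rewriting the time average as an expectation at the uniform time $\tilde t$ gives $\E\lVert\nabla f(X_{\tilde t})\rVert^2\lesssim \frac{K_3\sqrt T}{\varepsilon}\left(\frac{f(X_0)}{\eta T}+\eta dL\right)$. This equals \eqref{eq:gradnormsign} because $\sqrt T\cdot\frac{1}{\eta T}=\frac{1}{\eta\sqrt T}$.

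The main obstacle is the drift lower bound. The Erf linearization is only valid when $|\partial_i f|\lesssim$ the noise scale. Outside that regime the drift is only of order $|\partial_i f|$, not quadratic in it. We handle this as the paper's notation permits: by the high-privacy assumption ($\varepsilon\downarrow0$ makes the noise dominant), or by bounding $\min\{x^2/s,|x|\}\ge x^2/\max\{s,|x|\}$ and absorbing $|x|$ using bounded gradients along the trajectory. We would try the high-privacy linearization first.
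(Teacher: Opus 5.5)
Your proposal is correct and is essentially the paper's proof: It\^o's formula on $f(X_t)$, the bound $\Tr(\nabla^2 f\,\bar{\Sigma})\le L\Tr(\bar{\Sigma})\le Ld$, the worse of the two phases' linearized-Erf drift coefficients (justified by the small-SNR, high-privacy regime, as in Corollary~\ref{cor:actual_sdes}), which gives $\varepsilon/(K_3\sqrt{T})$, and integration over $[0,\tau]$ with $\tau=\eta T$. One harmless slip: the diffusion matrix is $\Cov(\sign g)$, not $I_d-\E[\sign g]\E[\sign g]^\top$ (which need not be positive semidefinite), but you only use its trace $d-\|\E[\sign g]\|_2^2\le d$ and the positive semidefiniteness of the true covariance, so the argument is unaffected.
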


\vspace{-0.1cm}

\textbf{Takeaway:}  
Theorem \ref{thm:lossbound_sign} suggests that the privacy noise directly enters the convergence dynamics of \algname{DP-SignSGD}, making its behavior qualitatively different from \algname{DP-SGD}: The center column of Figure \ref{fig:speed} confirms that \algname{DP-SignSGD} converges faster for larger $\textcolor{orange}{\boldsymbol{\varepsilon}}$.
Additionally, it also shows that it never diverges as drastically as \algname{DP-SGD} for small $\textcolor{orange}{\boldsymbol{\varepsilon}}$.
This is better shown in Figure \ref{fig:nm_scaling}, where we validate that the asymptotic loss scales with $\frac{1}{\textcolor{orange}{\boldsymbol{\varepsilon}}}$, while that of \algname{DP-SGD} scales with $\frac{1}{\textcolor{orange}{\boldsymbol{\varepsilon}}^2}$.
Therefore, adaptive methods are preferable in high-privacy settings, and all these insights are verified also for \algname{DP-Adam} and generalize to the test loss (Figure \ref{fig:nm_scaling_test}).

\vspace{-0.2cm}

%%%%%%%%%%%%%%%%%%%%%%%%%%%%%%%%%%%%%%%%%%%%%%%%%%%%%%%%%%%
\subsubsection{When adaptivity really matters under fixed hyperparameters.}\label{sec:ada_matters}
%%%%%%%%%%%%%%%%%%%%%%%%%%%%%%%%%%%%%%%%%%%%%%%%%%%%%%%%%%%

\vspace{-0.1cm}

In this subsection, we quantify when an adaptive method such as \algname{DP-SignSGD} achieves better utility than \algname{DP-SGD}.
To this end, we compare \textit{Privacy-Utility} terms of Phase~2 for both methods and derive conditions on the two sources of noise that govern the dynamics: the batch noise size $\sigma_{\gamma}$ and the privacy budget $\varepsilon$.

\begin{figure}

    \includegraphics[width=0.88\linewidth]{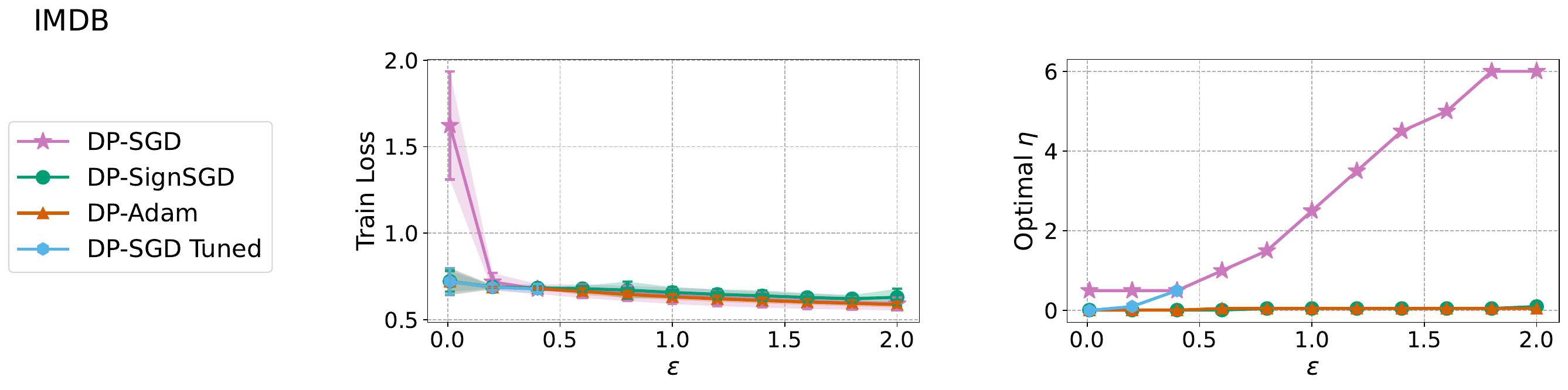}\\
    \includegraphics[width=0.88\linewidth]{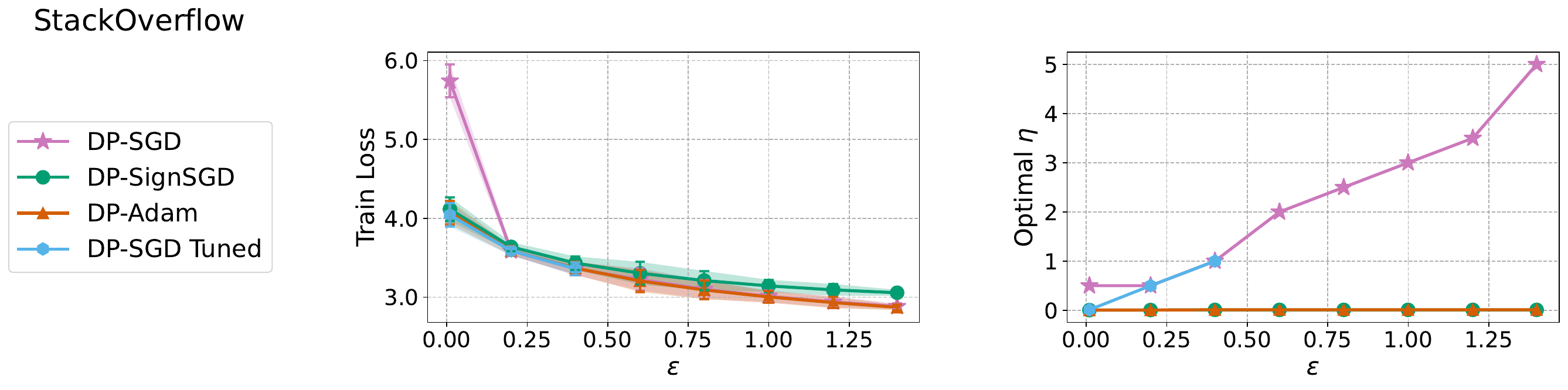}

    \vspace{-0.3cm}
    
    \caption{Empirical verification of Thm.~\ref{thm:sgd_opt} and Thm.~\ref{thm:signsgd_opt} under Protocol B on the IMDB dataset \textbf{(Top Row)} and on the StackOverflow dataset \textbf{(Bottom Row)}. We tune $(\eta, C)$ of each optimizer for each~$\varepsilon$ and confirm that: $i)$ all methods achieve comparable performance across privacy budgets; $ii)$ the optimal $\eta$ of \algname{DP-SGD} scales linearly with~$\varepsilon$, while that of adaptive methods is essentially $\varepsilon$-independent; $iii)$ failing to sweep over the ``best'' range of learning rates causes \algname{DP-SGD} to severely underperform, whereas adaptive methods are resilient. On the \textbf{left}, \algname{DP-SGD} degrades sharply for small $\varepsilon$. Indeed, the \textbf{right} panels shows that the selected optimal $\eta$ flattens out, while the theoretical one would have linearly decayed more: The ``best'' $\eta$ was simply missing from the grid. \emph{A posteriori}, re-running the sweep with a larger grid (\algname{DP-SGD Tuned}) recovers the scaling law and matches the performance of adaptive methods. For experimental details see Appendix~\ref{sec:fig4}.}
    \label{fig:ProtocolB}
\end{figure}

\begin{theorem}\label{thm:eps_star}
If $\sigma_\gamma^2\ge B$, then \algname{DP-SignSGD} always achieves a better privacy-utility trade-off than \algname{DP-SGD}. %though its convergence is slower.
If $\sigma_\gamma^2 < B$, there exists a critical privacy level $\varepsilon^\star
    = \sqrt{\frac{C^2 T B}{n^2\left(B-\sigma_\gamma^2\right)} \log\left(\tfrac{1}{\delta}\right)}$ such that \algname{DP-SignSGD} outperforms \algname{DP-SGD} in utility whenever $\varepsilon < \varepsilon^\star$.
\end{theorem}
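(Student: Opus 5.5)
The plan is to compare the Phase~2 privacy-utility terms of Theorem~\ref{thm:lossbound_sgd} (Eq.~\ref{eq:ph2_dpsgd}) and Theorem~\ref{thm:lossbound_sign} (Eq.~\ref{eq:ph2_dpsignsgd}) directly. "Better privacy-utility trade-off" will mean that the asymptotic neighbourhood of \algname{DP-SignSGD} is no larger than that of \algname{DP-SGD}. As everywhere in the paper, absolute numerical constants hidden in $\lesssim$ are ignored.

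The first step is to introduce the common quantity
\begin{equation*}
S(\varepsilon) := \frac{\sigma_\gamma^2}{BT} + \frac{C^2\Phi^2}{B^2\varepsilon^2}.
\end{equation*}
Factoring $1/\varepsilon^2$ (respectively $1/\varepsilon$) back in, the two neighbourhoods are
\begin{equation*}
\text{\algname{DP-SGD}: } \frac{T\eta d L}{\mu}\, S(\varepsilon), \qquad \text{\algname{DP-SignSGD}: } \frac{\sqrt{T}\eta d L}{\mu}\,\sqrt{S(\varepsilon)}.
\end{equation*}
For \algname{DP-SignSGD} this uses $\sqrt{\varepsilon^2\sigma_\gamma^2/(BT)+C^2\Phi^2/B^2}\,/\varepsilon=\sqrt{S(\varepsilon)}$.

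Since $\eta, d, L, \mu$ are common to both methods and fixed under Protocol~A, \algname{DP-SignSGD} is at least as good if and only if $\sqrt{T}\sqrt{S}\le T S$. Because $S>0$, this is equivalent to $T S(\varepsilon)\ge 1$, that is,
\begin{equation*}
\frac{\sigma_\gamma^2}{B} + \frac{T C^2\Phi^2}{B^2\varepsilon^2} \ge 1.
\end{equation*}

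The second step is a case split on $\sigma_\gamma^2/B$. If $\sigma_\gamma^2\ge B$, the first summand alone is at least $1$. The inequality therefore holds for every $\varepsilon>0$, which gives the first claim. If $\sigma_\gamma^2<B$, rearranging gives $\varepsilon^2 < T C^2\Phi^2/\bigl(B(B-\sigma_\gamma^2)\bigr)$. Substituting $\Phi = q\sqrt{\log(1/\delta)}$ with $q=B/n$, so that $\Phi^2 = B^2\log(1/\delta)/n^2$, yields exactly
\begin{equation*}
\varepsilon^\star=\sqrt{\frac{C^2 T B}{n^2(B-\sigma_\gamma^2)}\log\left(\tfrac{1}{\delta}\right)}.
\end{equation*}
Monotonicity of $S$ in $\varepsilon$ shows that the condition holds precisely for $\varepsilon<\varepsilon^\star$.

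There is no real computational obstacle here. The delicate point is interpretive: the statement compares upper bounds, not the true losses, and the constants hidden in $\lesssim$ could shift the threshold $\varepsilon^\star$ by an absolute factor. I would therefore state explicitly that the comparison is between the leading privacy-utility terms of Theorems~\ref{thm:lossbound_sgd} and~\ref{thm:lossbound_sign} under identical hyperparameters. I would also check that the constants in Eq.~\ref{eq:ph2_dpsgd} and Eq.~\ref{eq:ph2_dpsignsgd} agree, or alternatively use the stationary-distribution results (Theorems~\ref{thm:statdistr_sgd_formal} and~\ref{thm:statdistr_sign_formal}), where the quadratic case gives exact values.
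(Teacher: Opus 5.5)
Your proposal is correct and matches the paper's proof (Corollary~\ref{cor:sign-vs-sgd}). Both compare the Phase~2 neighbourhoods of Theorems~\ref{thm:lossbound_sgd} and~\ref{thm:lossbound_sign}, cancel $\eta dL/\mu$, reduce to $(1-\sigma_\gamma^2/B)\varepsilon^2 < TC^2\Phi^2/B^2$, split on the sign of $1-\sigma_\gamma^2/B$, and substitute $\Phi/B=\sqrt{\log(1/\delta)}/n$; your $S(\varepsilon)$ with the criterion $TS(\varepsilon)\ge 1$ is a tidier form of the same algebra. Your caveat about hidden constants is also well-founded: the formal versions (Theorems~\ref{thm:lossbound_sgd_formal} and~\ref{thm:lossbound_sign_formal}) differ by a factor $\sqrt{\pi/2}$, which would make the criterion $TS(\varepsilon)\ge \pi/2$, and the paper suppresses this just as you do.
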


\paragraph{Takeaway:}
This result makes the comparison explicit: $i)$ Under \textbf{large batch noise ($\sigma_\gamma^2\ge B$)}, \algname{DP-SignSGD} achieves a better utility than \algname{DP-SGD}; $ii)$ Under \textbf{small batch noise ($\sigma_\gamma^2 < B$)}, the best optimizer depends on the privacy budget. For strict privacy ($\varepsilon < \varepsilon^\star$), \algname{DP-SignSGD} has better utility, while for looser privacy ($\varepsilon > \varepsilon^\star$), \algname{DP-SGD} achieves better overall performance. Thus, $\varepsilon^\star$ marks the threshold at which the advantage shifts from adaptive to non-adaptive methods when batch noise is small. By contrast, when batch noise is large, adaptive methods are already known to be more effective~\citep{compagnoni2025unbiased,compagnoni2025adaptive}, and the effect of DP noise is only marginal relative to the intrinsic stochasticity of the gradients. We verify this result empirically in Figure \ref{fig:eps_star}: As we increase the batch size $B$, $\varepsilon^\star$ decreases, in accordance with our theoretical prediction.

\textbf{Practical Implication.} If hyperparameter re-tuning is infeasible and the target regime involves stronger privacy constraints, e.g., lower privacy budget $\varepsilon$, or high stochasticity from small batches, adaptive methods are preferable. Otherwise, \algname{DP-SGD} is the method of choice.

%%%%%%%%%%%%%%%%%%%%%%%%%%%%%%%%%%%%%%%%%%%%%%%%%%%%%%%%%%%
\subsection{Protocol B: Best-tuned Hyperparameters}\label{sec:ProtocolB}
%%%%%%%%%%%%%%%%%%%%%%%%%%%%%%%%%%%%%%%%%%%%%%%%%%%%%%%%%%%

We now mirror standard practice by allowing $(\eta, C)$ to be \emph{tuned} over an extensive grid search for each target privacy budget~$\varepsilon$.
In contrast to Protocol~A, this leads us to derive the theoretical optimal learning rates, which, just as in empirical tuning, are allowed to depend on~$\varepsilon$ explicitly.

To select the optimal learning rate $\eta^\star$ for \algname{DP-SGD}, we minimize the bound in Thm. \ref{thm:gradbound_sgd} and consequently derive the implied optimal privacy-utility trade-off for \algname{DP-SGD} in the $L$-smooth case.

\begin{theorem}[\algname{DP-SGD}]\label{thm:sgd_opt}
    Let {\small$\eta^\star = \min\left\{\sqrt{\frac{f(X_0)}{dLT\sigma_{\gamma}^2}}, \sqrt{\frac{f(X_0)}{dL}}\frac{\textcolor{orange}{\boldsymbol{\varepsilon}} n}{C T}\right\}$}, then the expected gradient norm bound of \algname{DP-SGD} is $ \widetilde{\mathcal{O}}\left(\frac{C\sqrt{dLf(X_0)}}{\textcolor{orange}{\boldsymbol{\varepsilon}}n}\right)$, as we ignore logarithmic terms and those decaying in {\small$T$}.
\end{theorem}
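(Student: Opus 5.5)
We start from the gradient-norm bound of Theorem~\ref{thm:gradbound_sgd} and treat its right-hand side as a function of $\eta$ alone. Writing $\Phi = q\sqrt{\log(1/\delta)} = \frac{B}{n}\sqrt{\log(1/\delta)}$, the DP variance term becomes
\[
\frac{(C\Phi)^2 T}{B^2\varepsilon^2} = \frac{C^2 T\log(1/\delta)}{n^2\varepsilon^2}.
\]
The batch-size dependence therefore cancels, and up to the prefactor $K_1$ the bound reads
\[
h(\eta) := \frac{f(X_0)}{\eta T} + \eta dL\left(K_2 + \frac{C^2 T \log(1/\delta)}{n^2\varepsilon^2}\right).
\]
Since $\log(1/\delta)$ is a logarithmic term, we absorb it into the $\widetilde{\mathcal{O}}$ from the start. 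We also use that $K_2 \lesssim \sigma_\gamma^2$ in the relevant regime, since $B \ge 1$ and the clipped contribution $C^2/(Bd)$ is dominated.

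Next we minimize $h$. It has the form $a/\eta + b\eta$ with $a = f(X_0)/T$ and $b = dL(\sigma_\gamma^2 + C^2T/(n^2\varepsilon^2))$, so its minimizer is $\sqrt{a/b}$. Because $\sqrt{a/(b_1+b_2)}$ is within a factor $\sqrt{2}$ of $\min\{\sqrt{a/b_1},\sqrt{a/b_2}\}$, we may take
\[
\eta^\star = \min\left\{\sqrt{\frac{f(X_0)}{dLT\sigma_\gamma^2}},\ \sqrt{\frac{f(X_0)}{dL}}\,\frac{\varepsilon n}{CT}\right\},
\]
which is exactly the stated choice; the second entry comes from $\sqrt{f(X_0)n^2\varepsilon^2/(T\cdot dLC^2T)}$.

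We then plug $\eta^\star$ back in. With this choice each term $b_i\eta^\star \le \sqrt{a b_i}$, and $a/\eta^\star = \max_i \sqrt{a b_i}$. Hence
\[
h(\eta^\star) \lesssim \sqrt{a b_1} + \sqrt{a b_2} = \sqrt{\frac{dLf(X_0)\sigma_\gamma^2}{T}} + \frac{C\sqrt{dLf(X_0)}}{\varepsilon n}.
\]
The first term decays in $T$ and is dropped, as the statement allows. What remains is $\widetilde{\mathcal{O}}\big(C\sqrt{dLf(X_0)}/(\varepsilon n)\big)$.

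The main obstacle is the prefactor $K_1 = \max\{1, \sigma_\gamma\sqrt{d}/C\}$ together with the constraint $\eta^\star < 1$. If we treat $K_1$ as a problem constant independent of $\eta$, it does not affect the minimizer and is folded into the $\widetilde{\mathcal{O}}$. To be safe, I would state explicitly that the claimed rate holds up to $K_1$ (equal to $1$ when $C \ge \sigma_\gamma\sqrt{d}$). I would also check that $\eta^\star \le 1$ for $T$ large, which holds because both entries of the minimum vanish as $T$ grows. This keeps the weak-approximation regime of the SDE valid.
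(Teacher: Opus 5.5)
Your proposal is correct and follows the route the paper indicates: minimize the right-hand side of Theorem~\ref{thm:gradbound_sgd} over $\eta$ as an $a/\eta + (b_1+b_2)\eta$ balance (using the cancellation $\Phi/B=\sqrt{\log(1/\delta)}/n$), plug the resulting $\eta^\star$ back in, and discard the $T$-decaying term $\sqrt{dLf(X_0)\sigma_\gamma^2/T}$. Your caveat about the prefactor $K_1$ flags a real looseness that the paper's statement glosses over. Your assumption that $C^2/(Bd)$ is dominated is not needed: at the stated $\eta^\star$, any $T$-independent $K_2$ contributes at most $\sqrt{dLf(X_0)/T}\,K_2/\sigma_\gamma$, which also decays in $T$.
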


This result aligns with the best-known privacy-utility trade-off obtained in prior works in these settings \citep{koloskova2023clipping, bassily2014differentially}.
Importantly, we notice that the optimal learning rate of \algname{DP-SGD} scales linearly in $\varepsilon$ and the resulting asymptotic performance scales like $1/{\varepsilon}$.

To derive the optimal learning rate $\eta^\star$ of \algname{DP-SignSGD}, we minimize the bound in Theorem \ref{thm:gradbound_sign}, and derive a privacy-utility trade-off in the $L$-smooth case.

\begin{theorem}[\algname{DP-SignSGD}]\label{thm:signsgd_opt}
    Let {\small$\eta^\star = \sqrt{\frac{f(X_0)}{dLT}}$}. The expected asymptotic gradient norm bound of \algname{DP-SignSGD}  is {\small$\widetilde{\mathcal{O}}\left(\frac{C\sqrt{dLf(X_0)}}{\textcolor{orange}{\boldsymbol{\varepsilon}}n}\right)$}, as we ignore logarithmic terms and those decaying in $T$.
\end{theorem}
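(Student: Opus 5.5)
We start from the gradient-norm bound of Theorem~\ref{thm:gradbound_sign}, which we may assume. It reads
\[
\E\|\nabla f(X_{\tilde t})\|^2 \lesssim \frac{K_3}{\varepsilon}\Big(\frac{f(X_0)}{\eta\sqrt{T}} + \eta dL\sqrt{T}\Big),
\]
and the prefactor $K_3/\varepsilon$ does not depend on $\eta$. So we minimize only the bracket. It has the form $a/\eta + b\eta$ with $a=f(X_0)/\sqrt{T}$ and $b=dL\sqrt{T}$. Its minimizer is $\eta^\star=\sqrt{a/b}=\sqrt{f(X_0)/(dLT)}$, as claimed, and it is independent of $\varepsilon$ because $\varepsilon$ enters only multiplicatively. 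The minimal value of the bracket is $2\sqrt{ab}=2\sqrt{dLf(X_0)}$. Hence
\[
\E\|\nabla f(X_{\tilde t})\|^2 \lesssim \frac{K_3}{\varepsilon}\sqrt{dLf(X_0)}.
\]

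Next we evaluate $K_3$ after substituting $\Phi=q\sqrt{\log(1/\delta)}$ with $q=B/n$. The quantity $C\Phi/B$ becomes $C\sqrt{\log(1/\delta)}/n$. The first entry of the max is $\sqrt{\sigma_\gamma^2\varepsilon^2/(BT)+C^2\Phi^2/B^2}$. Its $\varepsilon^2/(BT)$ piece vanishes as $T\to\infty$, or is lower order in the high-privacy regime, so that entry reduces to $C\sqrt{\log(1/\delta)}/n$. Dropping the logarithm, this gives $K_3/\varepsilon \approx C/(\varepsilon n)$, and the bound becomes $\widetilde{\mathcal O}\big(C\sqrt{dLf(X_0)}/(\varepsilon n)\big)$, matching Theorem~\ref{thm:sgd_opt}.

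The main obstacle is the second entry of the max, $\tfrac{\sigma_\gamma\Phi}{B}\sqrt d=\sigma_\gamma\sqrt{d\log(1/\delta)}/n$. This term comes from Phase~1, where the sign of the normalized clipped gradient with signal $C/(\sigma_\gamma\sqrt d)$ is taken. I would argue it is not the dominant one in the regime of the statement in either of two ways. The first is to note that in Phase~1 the clipped per-example gradients have norm $C$, and the normalization identity $\E[\nabla f_\gamma/\|\nabla f_\gamma\|]\approx\nabla f/(\sigma_\gamma\sqrt d)$ of Lemma~\ref{cor:gradapprox} ties $C$ to $\sigma_\gamma\sqrt d$ at the clipping boundary, so that $\sigma_\gamma\sqrt d\lesssim C$ up to constants. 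The second is to restrict the asymptotic claim, like the $T$-decaying terms, to the regime where Phase~2 governs the late dynamics. With either justification $K_3\lesssim C\sqrt{\log(1/\delta)}/n$, which completes the argument. If neither works cleanly, I would state the bound with $\max\{C,\sigma_\gamma\sqrt d\}$ in place of $C$; the $\varepsilon$-independence of $\eta^\star$ is unaffected.
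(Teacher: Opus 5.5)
Your minimization is exactly the paper's argument. The paper minimizes the bound of Theorem~\ref{thm:gradbound_sign} in $\eta$, getting $\eta^\star=\sqrt{f(X_0)/(dLT)}$ and a bound $\lesssim K_3\sqrt{dLf(X_0)}/\varepsilon$. It then states the result with $K_3$ replaced by $C\Phi/B=C\sqrt{\log(1/\delta)}/n$ and gives no argument for discarding the entry $\sigma_\gamma\sqrt d\,\Phi/B$, just as Theorem~\ref{thm:sgd_opt} silently takes $K_1=1$.

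You are right to flag that entry, but your first justification has the inequality backwards. Lemma~\ref{cor:gradapprox} does not involve $C$ at all. Phase~1 means every sampled per-example gradient satisfies $\norm{\nabla f_\xi(x)}_2\ge C$. Under Assumption~\ref{ass:Noise} and the SNR condition these norms are of order $\sigma_\gamma\sqrt d$. So Phase~1 is precisely the regime $\sigma_\gamma\sqrt d\gtrsim C$, in which the $\sqrt d$ entry is the \emph{larger} one. This matches the paper's own remark after Theorem~\ref{thm:mixed_phase_L_smooth_sign} that this entry typically dominates.

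Your second option is the one that works, but it must hold over the whole run, not only for the late dynamics. Since $\tilde t$ is uniform on $[0,\tau]$, a clipped stretch $[0,t_0]$ enters the It\^o argument with the Phase-1 constant $K_{\mathrm{P1}}\asymp\sigma_\gamma\sqrt d\,\Phi/B$. It contributes up to $\frac{K_{\mathrm{P1}}}{\varepsilon}\,\frac{f(X_0)-\E f(X_{t_0})}{\eta\sqrt T}$. At $\eta^\star$ this equals $\frac{K_{\mathrm{P1}}}{\varepsilon}\sqrt{dL/f(X_0)}\,\bigl(f(X_0)-\E f(X_{t_0})\bigr)$, which does not decay in $T$. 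If instead clipping is inactive throughout, then $\sigma_\gamma\sqrt d\lesssim C$, only the Phase-2 drift enters, and the stated rate follows; this is the assumption implicit in the paper's statement.

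Without that assumption, your fallback $\max\{C,\sigma_\gamma\sqrt d\}$ is the correct general form, and it costs nothing for the comparison. Keeping $K_1=\max\{1,\sigma_\gamma\sqrt d/C\}$ in the \algname{DP-SGD} bound produces exactly the same factor. So the two methods still match, and $\eta^\star$ of \algname{DP-SignSGD} stays $\varepsilon$-independent.
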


Importantly, we observe that the asymptotic neighborhood of \algname{DP-SignSGD} matches that of \algname{DP-SGD}, while the optimal learning rate is independent of $\varepsilon$.
This suggests that adaptivity automatically handles the privacy noise injection: This facilitates the transferability of optimal parameters to setups that require higher privacy. In contrast, \algname{DP-SGD} needs retuning of the hyperparameters.

\textbf{Takeaway:} Our theory shows that while optimal learning rate scalings differ, the induced neighborhoods match.
As shown in  Fig.\ref{fig:ProtocolB}, our experiments verify that: $i)$ \algname{DP-SGD}, \algname{DP-SignSGD}, and \algname{DP-Adam} exhibit similar asymptotic performance across a broad range of $\varepsilon$, even small values; $ii)$ the optimal learning rate of \algname{DP-SGD} is linear in $\varepsilon$, while that of adaptive methods is almost independent of it.

\paragraph{Practical implication.}
Hyperparameter searches are not free under DP: each evaluation consumes a portion of the privacy budget~\citep{papernot2021hyperparameter}, making fine learning-rate grids costly. This asymmetrically impacts the two optimizers.
For \algname{DP-SGD}, the optimal step size scales linearly with~$\varepsilon$ (Thm.~\ref{thm:sgd_opt}), so the ``right'' $\eta^\star$ \emph{moves} as privacy tightens.
If a fixed sweep grid misses a value close to  $\eta^\star$, the performance of \algname{DP-SGD} can degrade sharply.
This is illustrated in our experiments (Fig.~\ref{fig:ProtocolB}): in the left panel, the performance of \algname{DP-SGD} drops because the selected ``optimal'' $\eta$ plateaus instead of decaying linearly as predicted (right panel) --- the true $\eta^\star$ was simply absent from the grid.
By contrast, the optimal step size of \algname{DP-SignSGD} (and empirically \algname{DP-Adam}) is almost $\varepsilon$-invariant (Thm.~\ref{thm:signsgd_opt}), so a single well-chosen $\eta$ transfers across privacy levels with little or no re-tuning.
This helps explain prior empirical reports that non-adaptive methods deteriorate more severely under stricter privacy~\citep[Fig.~1]{zhou2020privateadaptive}, \citep[Fig.~5]{li2023stp}, \citep[Fig.~2]{asi2021privateadaptive}: a plausible cause is that their fixed grids did not track the $\varepsilon$-dependent $\eta^\star$ for \algname{DP-SGD}.
However, when both optimizers are carefully tuned, \algname{DP-SGD} and \algname{DP-Adam} match in performance in large-scale LLM fine-tuning~\citep[App.~S]{li2021large}.

%%%%%%%%%%%%%%%%%%%%%%%%%%%%%%%%%%%%%%%%%
\section{Conclusion}
%%%%%%%%%%%%%%%%%%%%%%%%%%%%%%%%%%%%%%%%%

We studied how differential privacy noise interacts with adaptive compared to non-adaptive optimization through the lens of SDEs: To our knowledge, this is the first SDE-based analysis of DP optimizers.
Our results include explicit upper bounds on the expected loss and gradient norm, optimal learning rates, as well as the first characterization of stationary distributions for DP optimizers. 

Under a \emph{fixed-hyperparameter} scenario (Protocol~A), the analysis reveals a sharp contrast: $i)$ \algname{DP-SGD} converges at a speed independent of the privacy budget $\varepsilon$ while incurring a $\bigo\left(\nicefrac{1}{\varepsilon^2}\right)$ privacy-utility trade-off; $ii)$ \algname{DP-SignSGD} converges at a speed proportional to $\varepsilon$ while exhibiting a $\bigo\left(\nicefrac{1}{\varepsilon}\right)$ privacy-utility trade-off.
Additionally, when batch noise is large, adaptive methods dominate in terms of utility, as the effect of DP noise is marginal compared to the intrinsic stochasticity of the gradients, confirming known insights from non-private optimization. 
When batch noise is small, the preferable method depends on the privacy budget: for strict privacy, \algname{DP-SignSGD} yields better utility, while for looser privacy, \algname{DP-SGD} achieves better overall performance.

Under a \emph{best-tuned} scenario (Protocol~B), the picture changes: theory and experiments agree that the optimal learning rate of \algname{DP-SGD} scales linearly with $\varepsilon$, whereas the optimal learning rate of \algname{DP-SignSGD} (and empirically \algname{DP-Adam}) is approximately $\varepsilon$-independent.
With this tuning, the induced privacy-utility trade-offs match in order and the methods achieve comparable asymptotic performance, including at very small $\varepsilon$. A practical implication is that adaptive methods require less re-tuning if regulations mandate tighter privacy budgets.

We validated these theoretical insights on both synthetic and real datasets.
Importantly, we also demonstrated that the qualitative behavior observed for \algname{DP-SignSGD} extends empirically to \algname{DP-Adam} and to test metrics, underscoring the strength and generality of our framework.

\paragraph{Practitioner guidance.}
Under higher privacy requirements, e.g., regulations mandate a smaller $\varepsilon$, if per-$\varepsilon$ re-tuning of the hyperparameters is impractical because retraining/tuning is expected to be costly (Protocol~A), prefer an \emph{adaptive} private optimizer such as \algname{DP-SignSGD} (or \algname{DP-Adam}): their performance scales more favorably as $\varepsilon$ decreases compared to \algname{DP-SGD}.

When re-tuning is feasible (Protocol~B): Both \algname{DP-SGD} and adaptive methods can reach comparable asymptotic performance.
However, hyperparameter searches are not free under DP: each sweep consumes additional privacy budget~\citep{papernot2021hyperparameter}, making fine grids expensive. 
This creates an asymmetric risk: \algname{DP-SGD} requires an $\varepsilon$-dependent learning rate ($\eta^\star \propto \varepsilon$), so if the sweep grid does not track this scaling, its performance can degrade sharply.
In contrast, adaptive methods retain a portable, $\varepsilon$-independent learning rate, making them more robust and less costly to tune across privacy levels.

\subsubsection*{Acknowledgments}
Enea Monzio Compagnoni, Rustem Islamov, and Aurelien Lucchi acknowledge the financial support of the Swiss National Foundation, SNF grant No 207392.
We acknowledge the use of OpenAI's ChatGPT as a writing assistant to help us rephrase and refine parts of the manuscript. 
All technical content, derivations, and scientific contributions remain the sole responsibility of the authors.

\clearpage

\bibliography{references}
\bibliographystyle{plainnat}

\newpage 

\appendix
\counterwithin{figure}{section}
\counterwithin{table}{section}

\vbox{
  %{\hrule height 0.5pt \vskip 0.15in \vskip -\parskip}
  \centering
  {\LARGE\bf Appendix\par}
  %{\vskip 0.2in \vskip -\parskip \hrule height 0.5pt \vskip 0.09in}
}
%{\Large \textbf{Appendix}}
%\vspace{-5mm}

\newcommand\invisiblepart[1]{%
  \refstepcounter{part}%
  \addcontentsline{toc}{part}{\protect\numberline{\thepart}#1}%
}

\invisiblepart{Appendix}
\setcounter{tocdepth}{2}
\localtableofcontents

%%%%%%%%%%%%%%%%%%%%%%%%%%%%%%%%%%%%%%%%%%%%%%%%%%%%%%%%%%%
\section{Technical Results}
%%%%%%%%%%%%%%%%%%%%%%%%%%%%%%%%%%%%%%%%%%%%%%%%%%%%%%%%%%%

In this section, we introduce some technical results used in the derivation of the SDEs.

\begin{lemma}\label{lem:1}
Let $X \sim \mathcal{N}(\mu, \sigma^2 I_d)$ and fix a tolerance $\epsilon>0$. If $\frac{\norm{\mu}_2^2}{2\sigma^2 (d+2)} < \epsilon$, for $d\rightarrow \infty$, we have that $\mathbb{E}\left(\frac{X}{\lVert X \rVert_2}\right) = \sqrt{\frac{1}{d}} \frac{\mu}{\sigma} + \varepsilon \bigo\left(\frac{1}{d^{3/2}}\right)$.
\end{lemma}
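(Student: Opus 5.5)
We compute $\mathbb{E}[X/\lVert X\rVert_2]$ in closed form and then expand it under the small signal-to-noise hypothesis. By rotation invariance of the isotropic Gaussian, the expectation points along $\mu$. So it suffices to compute the scalar $\mathbb{E}[\langle X,\hat\mu\rangle/\lVert X\rVert_2]$ with $\hat\mu=\mu/\lVert\mu\rVert_2$. After rescaling, $X/\sigma \sim \mathcal{N}(m,I_d)$ with $m=\mu/\sigma$, and the ratio $X/\lVert X\rVert_2$ is scale invariant. We may therefore take $\sigma=1$ and replace $\mu$ by $m$ throughout.

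\textbf{Closed form.} The key identity is $\frac{1}{\lVert x\rVert_2}=\frac{1}{\sqrt{\pi}}\int_0^\infty s^{-1/2}e^{-s\lVert x\rVert_2^2}\,ds$. Inserting it and exchanging integrals (Tonelli/Fubini, justified since $\mathbb{E}\lVert X\rVert_2^{0}<\infty$) reduces the problem to Gaussian integrals. For fixed $s$ we have
\[
\mathbb{E}\left[X e^{-s\lVert X\rVert_2^2}\right]=\frac{m}{(1+2s)^{d/2+1}}\exp\left(-\frac{s\lVert m\rVert_2^2}{1+2s}\right),
\]
obtained by completing the square. Substituting $u=2s/(1+2s)$ turns the $s$-integral into a Beta-type integral with an exponential factor. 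This yields a confluent hypergeometric (Kummer) representation:
\[
\mathbb{E}\left[\frac{X}{\lVert X\rVert_2}\right]=m\,\frac{\Gamma(\frac{d+1}{2})}{\sqrt{2}\,\Gamma(\frac{d}{2}+1)}\,{}_1F_1\left(\tfrac12;\tfrac d2+1;-\tfrac{\lVert m\rVert_2^2}{2}\right).
\]
As a cross-check, the same formula follows from the noncentral chi distribution via Kummer's transformation.

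\textbf{Expansion.} We write ${}_1F_1(\frac12;\frac d2+1;-z)=1-\frac{z}{d+2}+R$ with $z=\lVert m\rVert_2^2/2$. The hypothesis gives $z/(d+2)<\epsilon$. The terms of the series are $\frac{(1/2)_k}{(d/2+1)_k}\frac{(-z)^k}{k!}$, and each is bounded by $\frac{1}{k!}\left(\frac{2z}{d+2}\right)^k\le\frac{(2\epsilon)^k}{k!}$. Hence the whole correction is $O(\epsilon)$ uniformly in $d$. For the prefactor we use the gamma-ratio asymptotics $\frac{\Gamma(\frac{d+1}{2})}{\Gamma(\frac d2+1)}=\sqrt{\frac 2d}\left(1+O(\frac1d)\right)$. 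The leading term is then $\frac{1}{\sqrt d}\,m=\frac{1}{\sqrt d}\frac{\mu}{\sigma}$.

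\textbf{Error term and main obstacle.} The delicate step is showing that the remainder has the stated form, a correction of size $\epsilon\cdot O(d^{-3/2})$ rather than merely $O(\epsilon)$ relative. The correction is $\frac{m}{\sqrt d}\left(-\frac{z}{d+2}+\dots\right)$. Its magnitude is at most a constant times $\frac{\lVert m\rVert_2}{\sqrt d}\cdot\epsilon$, and $\lVert m\rVert_2$ itself is $O(\sqrt{\epsilon d})$, so this controlling step must be tracked carefully. We plan to read the $O(\cdot)$ in the lemma as coordinatewise, or per unit of $\mu/\sigma$. The factor $z/(d+2)\le\epsilon$ then combines with the $1/\sqrt d$ prefactor and the $O(1/d)$ gamma-ratio correction to give terms of order $\epsilon\, d^{-3/2}$ multiplying $m\cdot d$-normalized quantities. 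We expect the main work to be bookkeeping which power of $d$ is absorbed into $m$, and we will state explicitly the normalization under which the remainder reads $\epsilon\,\mathcal{O}(d^{-3/2})$. The $O(1/d)$ gamma correction, which is $\epsilon$-independent, is subleading as $d\to\infty$ and can be absorbed in the same way.
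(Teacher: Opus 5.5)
\textbf{Same route as the paper, but the final step has a genuine gap.} Your approach (Kummer closed form, gamma-ratio asymptotics, first-order expansion of ${}_1F_1$) is the one the paper uses. The paper only quotes the closed form; your Laplace-type identity, completion of the square and substitution $u=2s/(1+2s)$ derive it correctly.

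The gap is the last step, and bookkeeping cannot close it, because the remainder genuinely is not $\epsilon\,\mathcal{O}(d^{-3/2})$. With $m=\mu/\sigma$ and $z=\lVert m\rVert_2^2/2$ you have exactly $\mathbb{E}[X/\lVert X\rVert_2]=c_d\,m/\sqrt d$ with $c_d=r_dF$, where
\[
r_d=\sqrt{\tfrac d2}\,\frac{\Gamma(\frac{d+1}{2})}{\Gamma(\frac d2+1)}=1-\frac{1}{4d}+\mathcal{O}(d^{-2}),\qquad F={}_1F_1\bigl(\tfrac12;\tfrac d2+1;-z\bigr).
\]
The two pieces of the error behave as follows.
\begin{itemize}
\item The gamma part, $-(1-r_d)\,m/\sqrt d\approx -m/(4d^{3/2})$, carries no factor $\epsilon$. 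For fixed $m\neq 0$ the hypothesis holds for every $\epsilon>0$ once $d$ is large, so a bound $M\epsilon d^{-3/2}$ with $M$ independent of $\epsilon$ is false.
\item The Kummer part is $-(1-F)\,m/\sqrt d\approx-\frac{z}{d+2}\frac{m}{\sqrt d}$. It is only $\mathcal{O}(\epsilon d^{-1/2})$ per unit of $m$. Its norm is of order $\epsilon^{3/2}$ when $\lVert m\rVert_2^2\asymp\epsilon d$. Per coordinate it is of order $\epsilon^{3/2}d^{-1/2}$ when $m$ has balanced coordinates.
\end{itemize}
Any ``normalization'' that makes this read $\epsilon\,\mathcal{O}(d^{-3/2})$ redefines the claim rather than proving it. The paper reaches the stated form only through its unexplained ``combining everything together'' line, so it offers no way around this either.

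\textbf{What your argument does prove.} It proves a relative-error statement, which is what is actually used downstream (a drift proportional to $\nabla f/(\sigma_\gamma\sqrt d)$):
\[
1-\epsilon-\frac{2}{d+2}<c_d\le 1,\qquad \mathbb{E}[X/\lVert X\rVert_2]=\frac{1}{\sqrt d}\frac{\mu}{\sigma}\bigl(1+\mathcal{O}(\epsilon)+\mathcal{O}(d^{-1})\bigr),
\]
valid for every $\epsilon>0$ and uniformly in $d$. Both bounds follow quickly:
\begin{itemize}
\item Gautschi's inequality gives $\sqrt{d/(d+2)}<r_d<1$, hence $0<1-r_d<\frac{2}{d+2}$.
\item Your own Beta integral gives $F=\mathbb{E}_{u\sim\mathrm{Beta}(1/2,(d+1)/2)}[e^{-zu}]$, hence $0\le 1-F\le z\,\mathbb{E}[u]=\frac{z}{d+2}<\epsilon$.
\end{itemize}

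\textbf{Your termwise series bound is wrong as written.} Since $(\tfrac12)_k>1$ for $k\ge 3$ (e.g.\ $(\tfrac12)_3=\tfrac{15}{8}$), the inequality $\frac{(1/2)_k}{(d/2+1)_k}\frac{z^k}{k!}\le\frac{1}{k!}\bigl(\frac{2z}{d+2}\bigr)^k$ fails. The correct termwise bound is $(2\epsilon)^k$, which needs $\epsilon<\tfrac12$. The Beta-integral bound above replaces it and needs no restriction on $\epsilon$.
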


\begin{proof}
Let us remember that if $X \sim \mathcal{N}(\mu, \sigma^2 I_d)$,
\begin{align}
&\mathbb{E}\left(\frac{X}{\lVert X \rVert_2^k}\right)=\frac{\Gamma\left(\frac{d}{2}+1-\frac{k}{2}\right)}{\left(2 \sigma^2\right)^{k / 2} \Gamma\left(\frac{d}{2}+1\right)}{ }_1 F_1\left(\frac{k}{2} ; \frac{d+2}{2} ;-\frac{\norm{\mu}_2^2}{2 \sigma^2}\right) \mu,
\end{align}
 where ${ }_1 F_1(a ; b ; z)$  is Kummer's confluent hypergeometric function. We know that
 \begin{equation}
     \lim_{d \rightarrow \infty}\frac{\Gamma\left(\frac{d}{2}+1-\frac{1}{2}\right)}{ \Gamma\left(\frac{d}{2}+1\right)} \overset{k=1}{\sim} \sqrt{\frac{2}{d}} + \bigo\left(\frac{1}{d^{3/2}}\right).
 \end{equation}
 Let $z=\frac{\norm{\mu}_2^2}{2\sigma^2}$. If $d>z$, by expanding the series, we have
 \begin{equation}\label{eq:hypgeomapprox}
     { }_1 F_1\left(\frac{1}{2} ; \frac{d}{2}+1 ; -z\right) = \sum_{n\ge 0}\frac{a^{(n)}(-z)^n}{b^{(n)}n!} = 1-\frac{z}{d+2} + \bigo\left(\frac{z}{d}\right)^2 < 1-\epsilon.
 \end{equation}
 
 Combining everything together, we obtain $\mathbb{E}\left(\frac{X}{\lVert X \rVert_2}\right) = \sqrt{\frac{1}{d}} \frac{\mu}{\sigma} + \epsilon \bigo\left(\frac{1}{d^{3/2}}\right)$.
\end{proof}

\begin{lemma}\label{cor:gradapprox}
    Let $K(\nu) = \sqrt{\frac{2}{\nu}}\frac{\Gamma\left(\frac{\nu+1}{2}\right)}{\Gamma\left(\frac{\nu}{2}\right)}$ and $X\sim t_\nu(\mu, \sigma^2 I_d)$, for $\nu\ge 1$. Fix a tolerance $\epsilon>0$: If $\frac{\norm{\mu}_2^2}{2\sigma^2 (d+2)} < \epsilon$, for $d\rightarrow \infty$, we have that $\mathbb{E}\left(\frac{X}{\lVert X \rVert_2}\right) = K(\nu)\sqrt{\frac{1}{d}} \frac{\mu}{\sigma} + \epsilon \bigo\left(\frac{1}{d^{3/2}}\right)$.
\end{lemma}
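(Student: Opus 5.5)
The plan is to reduce the Student-$t$ case to the Gaussian case of Lemma~\ref{lem:1} through the scale-mixture representation. Write $X = \mu + \sigma Z/\sqrt{W/\nu}$, where $Z\sim\mathcal{N}(0,I_d)$ and $W\sim\chi^2_\nu$ are independent. Equivalently, set $S:=\sqrt{\nu/W}$; then, conditionally on $W$, we have $X\sim\mathcal{N}(\mu,\sigma^2S^2 I_d)$. By the tower property,
\begin{equation*}
\E\left[\frac{X}{\norm{X}_2}\right]=\E_W\left[\E\left[\frac{X}{\norm{X}_2}\,\Big|\,W\right]\right].
\end{equation*}
The inner expectation is the Gaussian quantity treated in Lemma~\ref{lem:1}, with effective standard deviation $\sigma S$.

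First I apply Lemma~\ref{lem:1} conditionally. Its hypothesis with variance $\sigma^2S^2$ reads $\frac{\norm{\mu}_2^2}{2\sigma^2S^2(d+2)}<\epsilon$. This holds whenever $S\ge 1$, and in general it holds with $\epsilon$ replaced by $\epsilon/S^2 = \epsilon W/\nu$. Granting the conditional application, the inner expectation equals $\sqrt{1/d}\,\frac{\mu}{\sigma S}$ plus an error of order $\frac{\epsilon W}{\nu}\bigo(d^{-3/2})$. More precisely, I would carry the exact hypergeometric representation from the proof of Lemma~\ref{lem:1}, namely $1-\frac{z/S^2}{d+2}+\dots$ with $z=\norm{\mu}_2^2/(2\sigma^2)$, so that the remainder is explicitly $\bigo(W)$ times $d^{-3/2}$.

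Next I integrate over $W$. The leading term gives $\frac{\mu}{\sigma\sqrt d}\,\E[1/S]=\frac{\mu}{\sigma\sqrt d}\,\E\!\left[\sqrt{W/\nu}\right]$. Using $\E[\sqrt{W}]=\sqrt2\,\Gamma(\frac{\nu+1}{2})/\Gamma(\frac{\nu}{2})$ for $W\sim\chi^2_\nu$, this equals $\frac{\mu}{\sigma\sqrt d}\,K(\nu)$, which is exactly the claimed constant. The remainder integrates to $\epsilon\,\E[W/\nu]\,\bigo(d^{-3/2})=\epsilon\,\bigo(d^{-3/2})$, since $\E[W]=\nu$ is finite for every $\nu\ge1$.

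The main obstacle is uniformity. Lemma~\ref{lem:1} is an asymptotic statement, and the $\bigo(d^{-3/2})$ constants must not blow up on the event where $W$ is large, i.e.\ where the effective variance is small and the signal-to-noise ratio $z/S^2$ is large. I would first attempt to avoid this by working with the exact formula $\E[X/\norm{X}_2\mid W]$ in terms of ${}_1F_1(\tfrac12;\tfrac d2+1;-z/S^2)$ and the Gamma ratio. I would bound $|1-{}_1F_1(\tfrac12;b;-y)|\le \frac{y}{2b}$, using that ${}_1F_1(\tfrac12;b;-y)$ is monotone in $y$ and lies in $(0,1]$; this gives a remainder linear in $W$ with no dependence on $d$. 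The Gamma-ratio expansion is deterministic in $d$, so it causes no difficulty. If that bound turns out to be awkward, the fallback is to split on the event $\{W\le d\}$, which has overwhelming probability since the $\chi^2_\nu$ tail decays exponentially, and to use the trivial bound $\norm{X/\norm{X}_2}_2=1$ on its complement. The complement then contributes $e^{-\Omega(d)}$, which is negligible against $d^{-3/2}$.
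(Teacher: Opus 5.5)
Your proposal is correct and takes essentially the paper's route: the paper also writes $X$ as a Gaussian scale mixture over an independent $\chi^2_\nu$ variable, applies Lemma~\ref{lem:1} conditionally, and integrates with $\E[\sqrt{W}]=\sqrt{2}\,\Gamma(\frac{\nu+1}{2})/\Gamma(\frac{\nu}{2})$ to obtain $K(\nu)$. The paper, however, pulls the conditional $\epsilon\,\mathcal{O}(d^{-3/2})$ term out of the expectation without checking the conditional hypothesis or uniformity in $W$, and your explicit ${}_1F_1$ bound fixes this.

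Two small corrections to that fix. First, the bound $1-{}_1F_1(\tfrac12;b;-y)\le \frac{y}{2b}$ does not follow from monotonicity and range alone; it follows from ${}_1F_1(\tfrac12;b;-y)=\E[e^{-yT}]$ with $T\sim\mathrm{Beta}(\tfrac12,b-\tfrac12)$ and $1-e^{-x}\le x$. Second, with the prefactor $1/S=\sqrt{W/\nu}$ the remainder scales like $W^{3/2}$ rather than $W$, which is still integrable since $\chi^2_\nu$ has moments of all orders.
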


\begin{proof}
    One can write $X = \mu + \frac{\sigma Z}{\sqrt{S/\nu}}$, where $Z\sim \mathcal{N}(0,I_d)$ and $S\sim \chi^2_\nu$ are independent. Define $\tau = \frac{\sigma}{\sqrt{S/\nu}}$, then, conditioning on $S$ and applying Lemma ~\ref{lem:1}, we have
    \begin{equation}
        \E \left[\frac{X}{\norm{X}_2}\middle| S\right] = \sqrt{\frac{1}{d}}\frac{\mu}{\tau} + \epsilon \bigo\left(d^{-3/2}\right).
    \end{equation}
    Remembering that $\E[\sqrt{S}] = \sqrt{2}\frac{\Gamma\left(\frac{\nu+1}{2}\right)}{\Gamma\left(\frac{\nu}{2}\right)}$, we have

    \begin{align}
        \E\left[\frac{X}{\norm{X}_2}\right] &= \E_S\left[\left(\sqrt{\frac{1}{d}}\frac{\mu}{\tau} + \epsilon \bigo\left(d^{-3/2}\right)\right)\right]\\
        &= \left(\sqrt{\frac{1}{d}}\frac{\mu}{\sigma\sqrt{\nu}}\E_S[\sqrt{S}] + \epsilon \bigo\left(d^{-3/2}\right)\right)\\
        &= K(\nu)\sqrt{\frac{1}{d}} \frac{\mu}{\sigma} + \epsilon \bigo\left(\frac{1}{d^{3/2}}\right).
    \end{align}
\end{proof}

\begin{remark}\label{remark:SNR}
As discussed in the main paper, our analysis is based on the following two assumptions:
\begin{enumerate}
\item[i)] The number of trainable parameters is large, specifically $d = \Omega(10^4)$;
\item[ii)] The signal-to-noise ratio satisfies $\frac{\|\nabla f(x)\|_2^2}{2\sigma_\gamma^2} \ll d$.
\end{enumerate}
First, they ensure that the approximation of the confluent hypergeometric function in Equation~\ref{eq:hypgeomapprox} is highly accurate.
Second, neither condition is restrictive for modern deep learning models.
The dimensionality assumption is trivially satisfied by contemporary architectures, which routinely have millions of parameters.
Regarding the second assumption, \citet{Malladi2022AdamSDE} empirically measured the signal-to-noise ratio $\frac{\|\nabla f(x)\|_2^2}{2\sigma_\gamma^2}$ across a wide range of large-scale architectures and datasets, and consistently found values of at most $O(10^2)$.
Thus, the regime in which our approximation is valid closely matches the regime observed in practice. 
This is further supported by our experimental results, which confirm our theoretical predictions across all models and tasks considered in this paper.
In Figure~\ref{fig:hypgeom}, we numerically evaluate the confluent hypergeometric function for varying values of $d$, and show that, for sufficiently large parameter counts, the approximation remains tight throughout the realistic signal-to-noise ratio range reported in~\citep{Malladi2022AdamSDE}.
\end{remark}

\begin{figure}[ht!]
    \centering\includegraphics[width=0.5\linewidth]{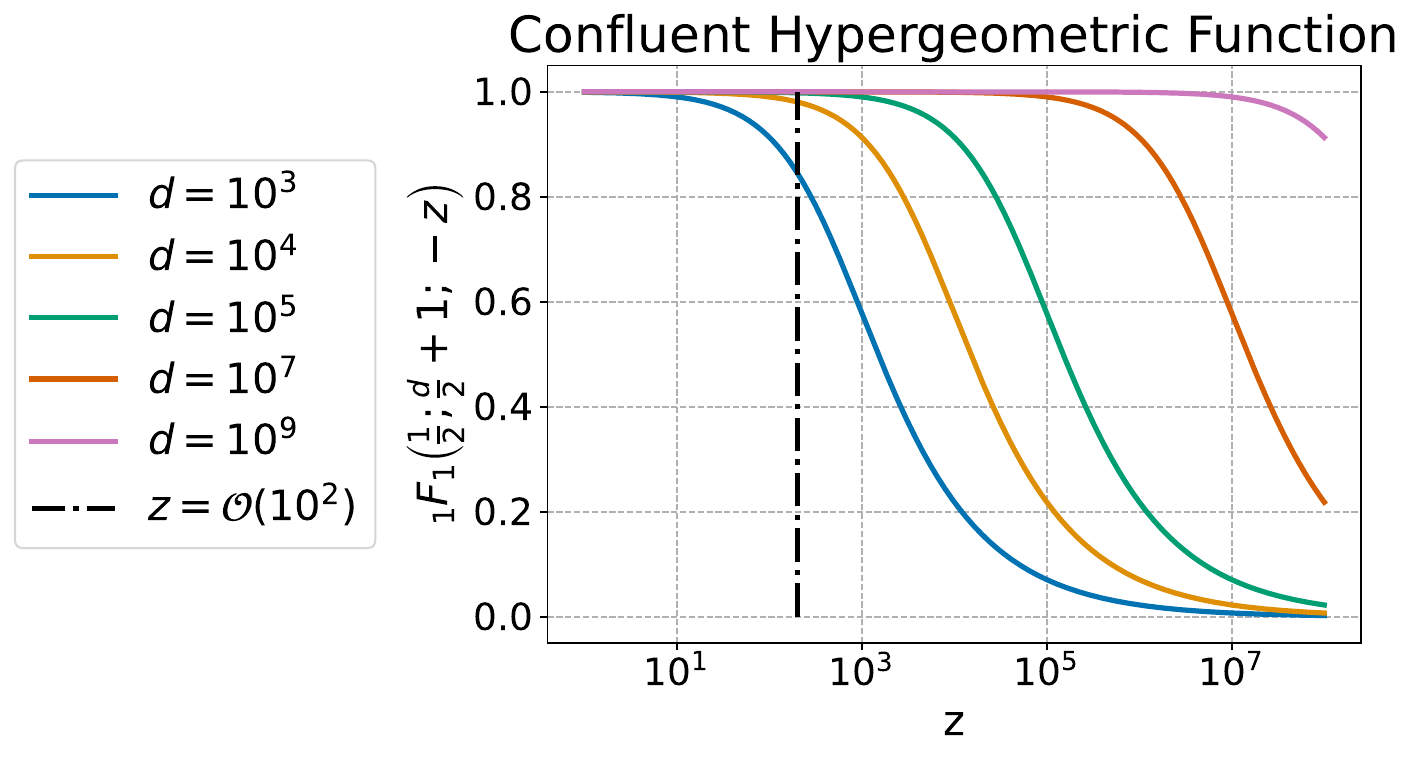}
    \caption{Numerical validation of the approximation used in Equation~\ref{eq:hypgeomapprox}. For several values of $d$, we plot the confluent hypergeometric function as a function of the signal-to-noise ratio $z$. In the realistic range observed in~\citep{Malladi2022AdamSDE}, approximating this function by $1$ is extremely accurate.}
    \label{fig:hypgeom}
\end{figure}

%%%%%%%%%%%%%%%%%%%%%%%%%%%%%%%%%%%%%%%%%%%%%%%%%%%%%%%%%%%
\section{Theoretical Framework}\label{sec:Asde}
%%%%%%%%%%%%%%%%%%%%%%%%%%%%%%%%%%%%%%%%%%%%%%%%%%%%%%%%%%%

In this section, we introduce the theoretical framework, assumptions, and notations used to formally derive the SDE models used in this paper.
We briefly recall the definition of $L$-smooth and $\mu$-PL functions.
Then we introduce the set of functions of polynomial growth $G$.

\begin{definition}
    A function $f:\R^d\to\R$ is $L$-smooth if it is differentiable and its gradient is $L$-Lipschitz continuous, namely
    \begin{equation}
        \norm*{\nabla f(x)-\nabla f(y)}_2\le L\norm{x-y}_2\;\forall x,y.
    \end{equation}
\end{definition}

\begin{definition}
    A function $f:\R^d\to\R$ admitting a global minimum $x^*$ satisfies the Polyak-Lojasiewicz inequality if, for some $\mu>0$ and for all $x\in\R^d$, it holds
    \begin{equation}
        f(x)-f^* \le \frac{1}{2\mu}\norm*{\nabla f(x)}_2^2.
    \end{equation}
    In this case, we say that the function $f$ is $\mu$-PL.
\end{definition}

\begin{definition}
    Let $G$ denote the set of continuous functions $g:\R^d\to\R$ of at most polynomial growth, namely such that there exist positive integers $k_1, k_2 > 0$ such that $       \lvert g(x)\rvert < k_1(1+\norm{x}_2^2 )^{k_2}$, for all $x\in\R^d$.
\end{definition}

To simplify the notation, we will write
\begin{equation*}
    b(x+\eta) = b_0(x)+\eta b_1(x)+\bigo(\eta^2),
\end{equation*}
whenever there exists $g\in G$, independent of $\eta$, such that
\begin{equation*}
    \lvert b(x+\eta)-b_0(x)-\eta b_1(x)\rvert \le g(x)\eta^2.
\end{equation*}

We now introduce the definition of weak approximation, which formalizes in which sense the solution to an SDE, a continuous-time random process, models a discrete-time optimizer.

\begin{definition}\label{def:weakapx_appendix}
    Let $0<\eta <1$, $\tau>0$ and $T=\floor{\frac{\tau}{\eta}}$. We say that a continuous time process $X_t$ over $[0,\tau]$, is an order $\alpha$ weak approximation of a discrete process $x_k$, for $k=0,\dots,N$, if for every $g\in G$, there exists $M$, independent of $\eta$, such that for all $k=0,1,\dots,T$
    \[\abs*{\E g(X_{k\eta})-\E g(x_k)}\le M\eta^\alpha.\]
\end{definition}

This framework focuses on approximation in a \textit{weak sense}, meaning in distribution rather than path-wise. Since $G$ contains all polynomials, all the moments of both processes become closer at a rate of $\eta^\alpha$ and thus their distributions. 
Thus, while the processes exhibit similar average behavior, their sample paths may differ significantly, justifying the term weak approximation.

\begin{remark}[Validity of the SDE approximation]\label{rem:sdevalidity}
To guarantee that the SDE model is a first-order weak approximation of the optimizer dynamics in the sense of Definition~\ref{def:weakapx}, one shows that the first two moments of the one-step increments of the optimizer and of the SDE match up to $O(\eta^2)$, while all higher-order terms in the Taylor expansion are collected in an $O(\eta^2)$ remainder (see Appendix~\ref{sec:Asde}).
This implies that the discrepancy between the two processes scales as $O(\eta)$ for any test function of polynomial growth and any finite time horizon. The neglected $O(\eta^2)$ terms could, in principle, be retained by deriving higher-order SDEs. 
However, to the best of our knowledge, such second-order models have been derived \citep{li2017stochastic}, but have not yet led to additional practical insight in the analysis of optimisation algorithms until recently \cite{compagnoni2025on}. 
Finally, Figure~\ref{fig:sdevalidation} empirically compares the discrete algorithms with their SDE counterparts on quadratic and quartic objectives, confirming that for the step sizes used in our experiments, the first-order SDEs closely track the discrete dynamics.
\end{remark}

The key ingredient for deriving the SDE is given by the following result (see Theorem 1, \citep{li2017stochastic}), which provides sufficient conditions to get a weak approximation in terms of the single step increments of both $X_t$ and $x_k$. Before stating the theorem, we list the regularity assumption under which we are working.

\begin{assumption}\label{ass:sde}
    Assume that the following conditions are satisfied:
    \begin{itemize}
        \item $f,f_i\in \mathcal{C}_b^8(\R^d,\R)$;
        \item $f, f_i$ and its partial derivatives up to order $7$ belong to $G$;
        \item $\nabla f, \nabla f_i$ satisfy the following Lipschitz condition: there exists $L>0$ such that
        \begin{equation*}
            \norm*{\nabla f(u) - \nabla f(v)}_2+\sum_{i=1}^d \norm*{\nabla f_i(u) - \nabla f_i(v)}_2\le L\norm*{u-v}_2;   
        \end{equation*}
        \item $\nabla f, \nabla f_i$ satisfy the following growth condition: there exists $M>0$ such that
        \begin{equation*}
            \norm*{\nabla f(x)}_2 + \sum_{i=1}^n\norm*{\nabla f_i(x)}_2\le M(1+\norm{x}_2).
        \end{equation*}
            
    \end{itemize}
\end{assumption}

\begin{assumption}\label{ass:Noise}
Assume the \emph{per-example} ($\gamma$ has size $1)$ stochastic gradient admits the decomposition $\nabla f_{\gamma}(x)=\nabla f(x)+Z_{\gamma}(x), $ with centered, approximately isotropic noise $Z_{\gamma}(x)$.
In Phase~1 (clipping regime), we model the per-example noise as heavy-tailed, e.g., $Z_{\gamma}(x)\sim \sigma_\gamma\, t_\nu(0,I_d)$, a multivariate Student-$t$ with $\nu$ degrees of freedom (for $\nu=\infty$ we recover the Gaussian case; for $\nu\le 2$ the variance is unbounded). 
In Phase~2 (non-clipping regime), we consider the mini-batch average $\nabla f_\gamma(x)\coloneqq \frac{1}{B}\sum_{\xi\in\gamma}\nabla f_{\xi}(x)$ and approximate its noise by $ Z_\gamma(x)\coloneqq \nabla f_\gamma(x)-\nabla f(x)\sim \frac{\sigma_\gamma}{\sqrt{B}}\mathcal{N}(0,I_d),
$ reflecting the averaging effect of i.i.d. per-sample gradients.
\end{assumption}

\begin{remark}\label{rem:noise}
The distinction between the two phases stems from how per-example clipping alters the effective noise seen by the optimizer.
In Phase~2, clipping is inactive and the mini-batch average of i.i.d.\ per-sample gradients admits the usual Gaussian approximation with variance $\sigma_\gamma^2/B$.
In Phase~1, clipping is applied \emph{before} the average and the update is driven by normalized/truncated per-example gradients.
While averaging still reduces the variance of the \emph{average of clipped directions} by a factor $1/B$, the distributional shape is governed by the per-example gradient law and can deviate substantially from Gaussian.
We therefore model the per-example noise with a multivariate Student-$t$, which captures heavy tails and yields tractable expressions, while recovering the Gaussian case as $\nu\to\infty$.
\end{remark}

\begin{lemma}\label{lem:sdemoments}
    Let $0<\eta <1$. Consider a stochastic process $X_t, t\ge0$ satisfying the SDE
    \begin{equation}\label{eq:basicsde}
        dX_t = b(X_t)dt + \sqrt{\eta}\sigma(X_t)dW_t, \qquad X_0=x
    \end{equation}
    where $b,\sigma$ together with their derivatives belong to $G$. Define the one-step difference $\Delta = X_\eta-x$, and indicate the $i$-th component of $\Delta$ with $\Delta_i$. Then we have
    \begin{enumerate}
        \item $\E\Delta_i = b_i\eta + \frac{1}{2}\left[\sum_{j=1}^d b_j\partial_j b_i\right]\eta^2+\bigo(\eta^3)\qquad \forall i=1,\dots,d$;
        \item $\E\Delta_i\Delta_j = \left[b_ib_j+\sigma\sigma^\top_{ij}\right]\eta^2+\bigo(\eta^3)\qquad\forall i,j=1,\dots, d$;
        \item $\E\prod_{j=1}^s \Delta_{i_j}=\bigo(\eta^3)\qquad \forall s\ge 3,\; i_j=1,\dots,d$.
    \end{enumerate}
    All functions above are evaluated at $x$.
\end{lemma}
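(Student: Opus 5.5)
The plan is to compute the moments of the one-step increment directly. The SDE in the form of \eqref{eq:basicsde} is time-homogeneous and has smooth coefficients of polynomial growth. This justifies an Itô–Taylor (Dynkin) expansion of $\E\,\phi(X_\eta)$ for any smooth test function $\phi$ of polynomial growth. The generator is
\begin{equation*}
\mathcal{L}\phi = \sum_i b_i\partial_i\phi + \frac{\eta}{2}\sum_{i,j}(\sigma\sigma^\top)_{ij}\partial_{ij}\phi .
\end{equation*}
Iterating Dynkin's formula twice gives
\begin{equation*}
\E\,\phi(X_\eta)=\phi(x)+\eta\,\mathcal{L}\phi(x)+\frac{\eta^2}{2}\mathcal{L}^2\phi(x)+R_3 ,
\end{equation*}
where the remainder is a triple time integral of $\E[\mathcal{L}^3\phi(X_s)]$. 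This remainder is $\bigo(\eta^3)$, since $\mathcal{L}^3\phi$ lies in $G$ and the moments of $X_s$ are bounded uniformly on $[0,\eta]$. That bound is the standard moment estimate for SDEs with Lipschitz coefficients of linear growth.

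The three claims then follow by taking $\phi(y)=\prod_{j}(y_{i_j}-x_{i_j})$ and keeping track of powers of $\eta$. The diffusion part of $\mathcal{L}$ carries an extra factor $\eta$.

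For item 1, take $\phi=y_i-x_i$. Then $\mathcal{L}\phi=b_i$ and $\mathcal{L}^2\phi=\sum_j b_j\partial_j b_i+\frac{\eta}{2}\sum(\sigma\sigma^\top)_{jk}\partial_{jk}b_i$. The second piece enters at order $\eta^3$. This gives $\E\Delta_i=b_i\eta+\frac12\sum_j b_j\partial_jb_i\,\eta^2+\bigo(\eta^3)$.

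For item 2, take $\phi=(y_i-x_i)(y_j-x_j)$. At $y=x$ the function vanishes, and so do its first-order terms, so $\mathcal{L}\phi(x)=\eta(\sigma\sigma^\top)_{ij}$. The leading drift part of $\mathcal{L}^2\phi(x)$ is $2b_ib_j$ from the term $\sum_{k,l}b_kb_l\partial_{kl}\phi$. All other contributions are $\bigo(\eta)$ inside $\mathcal{L}^2$. This yields $[b_ib_j+(\sigma\sigma^\top)_{ij}]\eta^2+\bigo(\eta^3)$.

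For item 3, with $s\ge3$ the product $\phi$ and all its derivatives of order below $s$ vanish at $x$. So $\mathcal{L}\phi(x)=0$. Also, $\mathcal{L}^2\phi(x)$ involves at most second derivatives of first-order terms. For $s=3$, the only nonzero piece is of the form $\eta\, b\,\partial^3\phi$ coming from the diffusion part, and it is multiplied by $\eta^2/2$. Hence all these terms are $\bigo(\eta^3)$. For $s\ge4$ the vanishing is even stronger.

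The main obstacle is making the remainder rigorous. This requires uniform-in-$s\in[0,\eta]$ bounds on $\E|g(X_s)|$ for $g\in G$. These follow from Itô's formula applied to $(1+\|X_s\|^2)^k$ plus Gronwall, using the linear growth of $b$ and $\sigma$. The remaining care is in the bookkeeping of which generator terms carry the extra $\eta$. I would follow the argument of Lemma 1 in \citet{li2017stochastic}.
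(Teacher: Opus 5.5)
Your proposal is correct and follows essentially the standard argument: the paper states this lemma without proof, importing it from \citet{li2017stochastic}. There it is proved by the iterated Dynkin (semigroup) expansion $\E\phi(X_\eta)=\phi(x)+\eta\mathcal{L}\phi(x)+\tfrac{\eta^2}{2}\mathcal{L}^2\phi(x)+\bigo(\eta^3)$ applied to $\phi(y)=\prod_j(y_{i_j}-x_{i_j})$, and your tracking of the extra factor $\eta$ in the diffusion part of $\mathcal{L}$ gives items 1--3 correctly. You are also right to need linear growth of $b,\sigma$ for the uniform moment bounds on the remainder: polynomial growth alone, as literally stated in the lemma, does not rule out explosion before time $\eta$, and in the paper's setting this extra regularity is implicitly supplied by Assumption~\ref{ass:sde}.
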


\begin{theorem}\label{thm:sdeformal}
    Let $0<\eta<1$, $\tau>0$ and set $T = \floor{\tau/\eta}$. Let Assumption~\ref{ass:sde} hold and let $X_t$ be a stochastic process as in Lemma~\ref{lem:sdemoments}. Define $\Bar{\Delta} = x_1-x$ to be the increment of the discrete-time algorithm, and indicate the $i$-th component of $\Bar{\Delta}$ with $\Bar{\Delta}_i$. If in addition there exist $K_1, K_2, K_3, K_4\in G$ so that
    \begin{enumerate}
        \item $\abs*{\E\Delta_i - \E\Bar{\Delta}_i}\le K_1(x)\eta^2,\qquad \forall i=1,\dots,d $;
        \item $\abs*{\E \Delta_i\Delta_j -\E \Bar{\Delta}_i\Bar{\Delta}_j}\le K_2(x)\eta^2,\qquad \forall i,j=1,\dots,d$;
        \item $\abs*{\E\prod_{j=1}^s \Delta_{i_j}-\E\prod_{j=1}^s\Bar{\Delta}_{i_j}}\le K_3(x)\eta^2,\qquad \forall s\ge 3, \forall i_j = 1,\dots, d$;
        \item $\E\prod_{j=1}^s\abs*{\Bar{\Delta}_{i_j}}\le K_4(x)\eta^2, \qquad \forall i_j=1,\dots,d$.
    \end{enumerate}
    Then, there exists a constant $C$ so that for all $k=0,1,\dots,N$ we have
    \begin{equation}
        \abs*{\E g(X_{k\eta})-\E g(x_k)}\le C\eta.
    \end{equation}
    We say Eq.~\ref{eq:basicsde} is an order $1$ weak approximation of the update step of $x_k$.
\end{theorem}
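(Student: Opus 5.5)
This is the classical weak-approximation theorem of \citet{li2017stochastic}, Theorem~1, so the plan is to reproduce its semigroup/telescoping argument. For a test function $g\in G$, I will define $u(x,s)=\E[g(X_s)\mid X_0=x]$, the backward Kolmogorov solution for the SDE in Eq.~\ref{eq:basicsde}. Under Assumption~\ref{ass:sde}, and with $b,\sigma$ and their derivatives in $G$, standard results give $u(\cdot,s)\in G$ with derivatives up to the needed order also in $G$, uniformly for $s\in[0,\tau]$.

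The core step is a telescoping decomposition. Let $x_k$ be the discrete chain and $k\le T$. I will write
\[
\E g(x_k)-\E g(X_{k\eta})=\sum_{l=0}^{k-1}\E\big[u(x_{l+1},(k-l-1)\eta)-u(\tilde X^{(l)}_\eta,(k-l-1)\eta)\big],
\]
where $\tilde X^{(l)}$ is the SDE restarted at $x_l$ and run for time $\eta$. This uses the Markov property: $u(x,(k-l)\eta)=\E u(\tilde X_\eta,(k-l-1)\eta)$ given start $x$.

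Each summand compares one SDE step with one algorithm step from the same point $x_l$, tested against the smooth function $v=u(\cdot,(k-l-1)\eta)$. I will Taylor expand $v(x+\Delta)$ and $v(x+\bar\Delta)$ around $x$ up to third order, using Lemma~\ref{lem:sdemoments} for the SDE moments. Hypotheses 1--3 then bound the differences of the first-, second- and third-order moment terms by $K(x)\eta^2$. The fourth-order remainders are controlled by hypothesis 4 on the discrete side and by item 3 of Lemma~\ref{lem:sdemoments} on the SDE side, each multiplied by a polynomial-growth bound on $\nabla^3 v$ or $\nabla^4 v$ at intermediate points. Each summand is therefore at most $\E[\tilde K(x_l)]\eta^2$ for some $\tilde K\in G$.

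Summing $k\le T\approx\tau/\eta$ terms gives $\tau\,\sup_l \E\tilde K(x_l)\,\eta$. It then remains to show uniform moment bounds $\sup_{l\le T}\E\|x_l\|^{2m}<\infty$ independent of $\eta$. I will obtain these from the linear growth condition and hypothesis 4 via a discrete Grönwall argument: expand $\|x_{l+1}\|^{2m}$ and bound its conditional expectation by $(1+c\eta)\|x_l\|^{2m}+c\eta$.

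I expect the main obstacles to be the regularity of $u$ (differentiability in $x$ with polynomial-growth derivatives, uniformly in time) and the moment bounds. For both I would cite Milstein's weak-approximation theory \citep{mil1986weak} and \citet{li2017stochastic} rather than rederive them. The remaining algebra is routine.
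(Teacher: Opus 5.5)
\paragraph{Verdict.} Your route is the one the paper relies on. The paper gives no proof of its own and defers to \citet{li2017stochastic}, Thm.~1, which is exactly Milstein's telescoping over $u(x,s)=\E[g(X_s)\mid X_0=x]$. Your telescoping identity and one-step comparison are correct. However, the two ingredients you plan to cite are \emph{hypotheses} of Milstein's theorem, not consequences of it, and this restatement drops them. So both steps fail as you describe them.

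\paragraph{Moment bound.} The uniform moment bound does not follow from hypothesis~4 via discrete Gr\"onwall. Since $K_4\in G$ may have any polynomial degree, the recursion does not close at $(1+c\eta)\|x\|^{2m}+c\eta$. The linear growth in Assumption~\ref{ass:sde} constrains $\nabla f_i$, not an abstract increment $\bar\Delta$. In fact hypotheses 1--4 do not imply the bound. Take $d=1$, $b=\sigma=0$, and let $\bar\Delta=1+x^2$ with probability $\eta^2$ and $\bar\Delta=0$ otherwise.
\begin{itemize}
\item Then $\E\bar\Delta^s=\eta^2(1+x^2)^s$, so 1--4 hold for each fixed $s$.
\item From $x_0=2$, the path that jumps at every step has probability $\eta^{2T}$ and ends above $2^{2^T}$.
\item Hence $\E x_T^2\ge\eta^{2T}2^{2^{T+1}}\to\infty$, while $X_t\equiv 2$, so the conclusion fails for $g(x)=x^2$.
\end{itemize}
You must either assume $\sup_{k\le T}\E\|x_k\|^{2m}\le K(x_0)$ with $K\in G$ independent of $\eta$, as Milstein does, or derive it from the concrete update. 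Li et al.\ do the latter for SGD, using $\|\bar\Delta\|\le\eta M(1+\|x\|)$. For the DP updates of this paper the increments have moments independent of $x$: clipped gradients have norm at most $C$, the DP noise is Gaussian, and sign steps have norm $\eta\sqrt{d}$. With that, your Gr\"onwall recursion closes.

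\paragraph{Smoothness of $g$.} The test function must be of class $C^4$ with all derivatives up to order four in $G$, which is Milstein's test class; mere continuity is not enough. The last telescoping term uses $u(\cdot,0)=g$ itself, so your claimed regularity of $u$ uniformly on $[0,\tau]$ is unavailable there. The conclusion is in fact false for continuous $g$. Take $b=0$, $\sigma=1$, and $\bar\Delta=\pm\eta$ with equal probability, which satisfies 1--4. Then $g(x)=\sqrt{|x|}$ and $x_0=0$ give $\E g(x_1)-\E g(X_\eta)=\bigl(1-\E|Z|^{1/2}\bigr)\sqrt{\eta}\approx 0.18\sqrt{\eta}$ with $Z\sim\mathcal{N}(0,1)$, which is not $O(\eta)$. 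With smooth test functions and the moment bound added as hypotheses, the rest of your sketch goes through.
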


\subsection{\algname{DP-SGD}}\label{app:dpsgd}

This subsection provides the formal derivation of the SDE model for \algname{DP-SGD} (Theorem~\ref{thm:sdesgd}), together with the formal versions of the loss and gradient-norm bounds used in the main paper (Theorems~\ref{thm:lossbound_sgd_formal} and~\ref{thm:gradbound_sgd_formal}) and the quadratic stationary distribution (Theorem~\ref{thm:statdistr_sgd_formal}).

\paragraph{Idealized clipping regimes vs.\ the mixed regime.}
Following the exposition in Section~\ref{sec:dpsgd}, we first analyze two idealized per-iteration regimes to make the dependence on the privacy budget $\varepsilon$ explicit:
(i) \emph{fully-clipped regime} (``Phase~1''), where all sampled per-example gradients exceed the clipping threshold and the update is dominated by normalized directions with magnitude on the order of~$C$; and
(ii) \emph{unclipped regime} (``Phase~2''), where clipping is inactive and the update reduces to a standard noisy gradient step.
These two regimes are used only to isolate how clipping changes the drift and diffusion terms. The fully realistic \emph{mixed} case, where clipped and unclipped samples coexist within each mini-batch, is handled separately in Section~\ref{sec:mixedsgd} (Theorem~\ref{thm:mixed_phase_L_smooth}).

\begin{theorem}\label{thm:sdesgd}
    Let $0<\eta<1$, $\tau>0$ and set $T=\floor{\tau/\eta}$ and $K(\nu) = \sqrt{\frac{2}{\nu}}\frac{\Gamma\left(\frac{\nu+1}{2}\right)}{\Gamma\left(\frac{\nu}{2}\right)}$. Let $x_k\in\R^d$ denote a sequence of \algname{DP-SGD} iterations defined in Eq.~\ref{eq:dpsgd}. Assume Assumption~\ref{ass:sde} and Assumption ~\ref{ass:Noise}. Let $X_t$ be the solution of the following SDEs with initial condition $X_0=x_0$:

$\bullet$ Phase~1:
    \begin{equation}\label{eq:sdedpsgd-ph1}
        d X_t = - \frac{CK(\nu)}{\sigma_\gamma\sqrt{d}}\nabla f(X_t) dt + \sqrt{\eta}\sqrt{\Bar{\Sigma}(X_t)}dW_t,
    \end{equation}
    where $\Bar{\Sigma}(x)=\frac{C^2}{B}\left(\E_{\xi}\left[\frac{\nabla f_{\xi}(x)\nabla f_{\xi}(x)^\top}{\norm{\nabla f_{\xi}(x)}_2^2}\right]-\frac{K(\nu)^2}{\sigma_\gamma^2 d}\nabla f(x)\nabla f(x)^\top\right) + \frac{C^2\sigma_{DP}^2}{B^2} I_d$.

$\bullet$ Phase~2:
    \begin{equation}\label{eq:sdedpsgd-ph2}
        dX_t = -\nabla f(X_t) dt + \sqrt{\eta}\sqrt{\Bar{\Sigma}(X_t)}dW_t,
    \end{equation}    
    where $\Bar{\Sigma}(x) = \left(\frac{\sigma_\gamma^2}{B}+\frac{C^2\sigma_{DP}^2}{B^2}\right) I_d$.

    Then, Eq.~\ref{eq:sdedpsgd-ph1} and Eq.~\ref{eq:sdedpsgd-ph2} are an order $1$ approximation of the discrete update of Phase~1 and Phase~2 of \algname{DP-SGD}, respectively.
\end{theorem}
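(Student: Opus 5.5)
The plan is to verify the four moment conditions of Theorem~\ref{thm:sdeformal}, comparing the one-step increment $\bar\Delta = x_1 - x = -\eta g_0$ of \algname{DP-SGD} with the increment $\Delta = X_\eta - x$ of the candidate SDEs. Lemma~\ref{lem:sdemoments} gives $\E\Delta_i$ and $\E\Delta_i\Delta_j$ in terms of the drift $b$ and diffusion $\sigma$. The task therefore reduces to computing the mean and second moment of the private gradient $g_0$ in Eq.~\ref{eq:privategrad}, and matching them to $b$ and $\sigma\sigma^\top$ up to $\bigo(\eta^2)$ errors with polynomial-growth constants.

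\textbf{First moment.}
\begin{itemize}
\item Phase~2: clipping is inactive, so $g_0 = \nabla f_{\gamma}(x) + \frac{1}{B}\mathcal{N}(0,C^2\sigma_{DP}^2 I_d)$. The DP noise is centered and independent of the batch, hence $\E\bar\Delta = -\eta\nabla f(x)$. This matches $b=-\nabla f$, since the $\frac12 b\cdot\nabla b\,\eta^2$ term of Lemma~\ref{lem:sdemoments} is absorbed into $K_1(x)\eta^2$.
\item Phase~1: every per-example gradient is clipped, so $\mathcal{C}[\nabla f_i(x)] = C\,\nabla f_i(x)/\norm{\nabla f_i(x)}_2$. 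By Assumption~\ref{ass:Noise}, $\nabla f_\xi(x)\sim t_\nu(\nabla f(x),\sigma_\gamma^2 I_d)$. Lemma~\ref{cor:gradapprox} then gives $\E[\mathcal{C}[\nabla f_\xi]] = \frac{CK(\nu)}{\sigma_\gamma\sqrt d}\nabla f(x)$ up to the stated $\epsilon\,\bigo(d^{-3/2})$ error, which we adopt as the modeling approximation. Averaging over the $B$ i.i.d.\ samples leaves the mean unchanged, giving the drift in Eq.~\ref{eq:sdedpsgd-ph1}.
\end{itemize}

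\textbf{Second moment.} Write $\E\bar\Delta\bar\Delta^\top = \eta^2\left(\E g_0\,\E g_0^\top + \Cov(g_0)\right)$. Because the batch indices are i.i.d.\ and the DP noise is independent of them, the covariance splits as
\[
\Cov(g_0) = \tfrac{1}{B}\Cov\left(\mathcal{C}[\nabla f_\xi]\right) + \tfrac{C^2\sigma_{DP}^2}{B^2}I_d .
\]
\begin{itemize}
\item Phase~2: $\Cov(\nabla f_\gamma) = \frac{\sigma_\gamma^2}{B}I_d$ by Assumption~\ref{ass:Noise}, so $\Cov(g_0)$ is exactly the $\bar\Sigma$ of Eq.~\ref{eq:sdedpsgd-ph2}.
\item Phase~1: $\Cov(\mathcal{C}[\nabla f_\xi]) = C^2\E\left[\nabla f_\xi\nabla f_\xi^\top/\norm{\nabla f_\xi}_2^2\right] - \frac{C^2K(\nu)^2}{\sigma_\gamma^2 d}\nabla f\nabla f^\top$, which gives the stated $\bar\Sigma$.
\end{itemize}
The outer product of the means, $\eta^2\,\E g_0\E g_0^\top$, matches $\eta^2 b b^\top$ from Lemma~\ref{lem:sdemoments}. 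With $\sigma\sigma^\top = \bar\Sigma$ and the $\sqrt\eta$ scaling of the diffusion, this yields condition~2 of Theorem~\ref{thm:sdeformal}.

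\textbf{Higher moments.} For conditions~3 and~4, every product of $s\ge3$ increment components carries a factor $\eta^s$ times moments of $g_0$. In Phase~1 these moments are finite, because clipped gradients are bounded by $C$ and Gaussian noise has all moments. In Phase~2 the Gaussian model likewise gives finite moments, with polynomial growth inherited from Assumption~\ref{ass:sde}. Hence these terms are $\bigo(\eta^3)$, and likewise for the SDE by Lemma~\ref{lem:sdemoments}.

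\textbf{Main obstacle.} I expect the main obstacle to be Phase~1. The drift there relies on the high-dimensional approximation of Lemma~\ref{cor:gradapprox}, so the weak-approximation claim is exact only up to that $\epsilon\,\bigo(d^{-3/2})$ modeling error. I will state explicitly that the moment matching is performed under this approximation. I will also need to check that $\sqrt{\bar\Sigma}$ is well defined (positive semidefinite, immediate since it is a covariance) and regular enough for Lemma~\ref{lem:sdemoments}. The required regularity, $b,\sigma\in G$, I would obtain from Assumption~\ref{ass:sde}.
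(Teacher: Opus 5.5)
Your plan follows the paper's proof step for step. The paper makes the same reduction to Lemma~\ref{lem:sdemoments} and Theorem~\ref{thm:sdeformal}, uses Lemma~\ref{cor:gradapprox} for the mean of the clipped directions, and splits $\Cov(g_0)=\frac{1}{B}\Cov(\mathcal{C}[\nabla f_\xi])+\frac{C^2\sigma_{DP}^2}{B^2}I_d$ using i.i.d.\ sampling and independence of the DP noise. Your treatment of the higher moments is, if anything, more explicit than the paper's.

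\textbf{The gap: the Phase~1 drift.} This step does not deliver the statement as you wrote it. Condition~1 of Theorem~\ref{thm:sdeformal} requires $\abs{\E\Delta_i-\E\bar\Delta_i}\le K_1(x)\eta^2$, so the mean of the private gradient must match the drift up to $\bigo(\eta)$. Suppose you keep the remainder of Lemma~\ref{cor:gradapprox} as an $\eta$-independent modeling error $\epsilon\,\bigo(d^{-3/2})$. Then the one-step mismatch is $\eta\,\epsilon\,\bigo(d^{-3/2})$, which is only $\bigo(\eta)$. Accumulated over $T=\floor{\tau/\eta}$ steps, it gives a bias of roughly $\tau\,\epsilon\,d^{-3/2}$, and this does not vanish as $\eta\to 0$. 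You would therefore get an order-$1$ weak approximation only up to a non-vanishing additive error, not the order-$1$ claim of the theorem.

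\textbf{How the paper closes it.} The paper applies Lemma~\ref{cor:gradapprox} with tolerance $\epsilon=\eta$. The remainder then becomes $\eta\,\bigo(d^{-3/2})$, so $\E\bar\Delta=-\eta\frac{CK(\nu)}{\sigma_\gamma\sqrt d}\nabla f(x)+\bigo(\eta^2)$, and the error is absorbed into $K_1(x)\eta^2$. This is not free. It amounts to invoking the lemma's hypothesis in the form $\norm{\nabla f(x)}_2^2<2\sigma_\gamma^2(d+2)\,\eta$. You should state this small-signal condition explicitly rather than appeal to a blanket ``modeling approximation''.

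\textbf{A smaller point.} In Phase~1 the subtracted rank-one term uses the approximate mean $\frac{K(\nu)}{\sigma_\gamma\sqrt d}\nabla f(x)$, not the exact mean of $\nabla f_\xi/\norm{\nabla f_\xi}_2$. So $\bar\Sigma$ is not literally a covariance, and positive semidefiniteness is not immediate. What guarantees positive definiteness, once the mean error is $\bigo(\eta)$, is the uniform lower bound from the DP term $\frac{C^2\sigma_{DP}^2}{B^2}I_d$. That same bound is what makes $\sqrt{\bar\Sigma}$ smooth enough for Lemma~\ref{lem:sdemoments}.
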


\begin{proof}
$\bullet$ Phase~1: Let $Z_{DP}\sim\mathcal{N}\left(0,\frac{C^2\sigma_{DP}^2}{B^2}I_d\right)$ be the differentially-private noise injected via the Gaussian mechanism and denote with $\Bar{\Delta} = x_1-x$ the one-step increment for Phase~1. In the fully-clipped regime, each sampled per-example gradient is clipped to norm $C$, hence
\begin{equation}
    \Bar{\Delta}=-\eta\left(\frac{C}{B}\sum_{\xi\in\gamma}\frac{\nabla f_{\xi}(x)}{\lVert \nabla f_{\xi}(x)\rVert_2}+Z_{DP}\right).
\end{equation}
Applying Lemma~\ref{cor:gradapprox} with tolerance $\epsilon=\eta$ and by linearity of expectation we have
\begin{align}
    \E[\Bar{\Delta}]
    &= -\eta\, C\,\E_{\xi}\left[\frac{\nabla f_{\xi}(x)}{\lVert \nabla f_{\xi}(x)\rVert_2}\right]
    = -\eta \frac{C K(\nu)}{\sigma_\gamma \sqrt{d}}\nabla f(x) + \bigo(\eta^2).
\end{align}

To compute the covariance, define $Y_{\xi}(x)\coloneqq \frac{\nabla f_{\xi}(x)}{\lVert \nabla f_{\xi}(x)\rVert_2}$ and $u_\gamma(x)\coloneqq \frac{1}{B}\sum_{\xi\in\gamma}Y_{\xi}(x)$. Since the samples in $\gamma$ are i.i.d.\ conditional on $x$, we have $\Cov(u_\gamma)=\frac{1}{B}\Cov(Y_{\xi})$, and $u_\gamma$ is independent of $Z_{DP}$. Therefore
\begin{align}
    \Cov(\Bar{\Delta})
    &= \eta^2\left(\Cov(Cu_\gamma(x))+\Cov(Z_{DP})\right) + \bigo(\eta^4)\\
    &= \eta^2\left(\frac{C^2}{B}\Cov\!\left(Y_{\xi}(x)\right)+\frac{C^2\sigma_{DP}^2}{B^2} I_d\right) + \bigo(\eta^4)\\
    &= \eta^2\left(\frac{C^2}{B}\E_{\xi}\left[\frac{\nabla f_{\xi}(x)\nabla f_{\xi}(x)^\top}{\lVert \nabla f_{\xi}(x)\rVert_2^2}\right]
    -\frac{C^2 K(\nu)^2}{B\sigma_\gamma^2 d}\nabla f(x)\nabla f(x)^\top + \frac{C^2\sigma_{DP}^2}{B^2} I_d\right)+ \bigo(\eta^4).
\end{align}

Define now
\begin{align}
    b(x) &\coloneqq -\frac{C K(\nu)}{\sigma_\gamma \sqrt{d}}\nabla f(x)\\
    \Bar{\Sigma}(x) &\coloneqq \frac{C^2}{B}\left(\E_{\xi}\left[\frac{\nabla f_{\xi}(x)\nabla f_{\xi}(x)^\top}{\lVert \nabla f_{\xi}(x)\rVert_2^2}\right]
    -\frac{K(\nu)^2}{\sigma_\gamma^2 d}\nabla f(x)\nabla f(x)^\top\right)+ \frac{C^2\sigma_{DP}^2}{B^2} I_d.
\end{align}
Then, from Lem~\ref{lem:sdemoments} and Thm.~\ref{thm:sdeformal} the claim follows.

$\bullet$ Phase~2: Following the same steps as above, one obtains:
\begin{equation}
    \E[\Bar{\Delta}] = -\eta \E_{\gamma,DP}\left[\nabla f_\gamma(x)+Z_{DP}\right] = -\eta \nabla f(x),
\end{equation}
and
\begin{align}
    \Cov(\Bar{\Delta}) &= \eta^2 \E\left[\left(\nabla f_\gamma(x) + Z_{DP} - \nabla f(x)\right)\left(\nabla f_\gamma(x) + Z_{DP} - \nabla f(x)\right)^\top\right]\\
    &= \eta^2 \E\left[(\nabla f_\gamma(x) - \nabla f(x))(\nabla f_\gamma(x)-\nabla f(x))^\top\right] + \eta^2\frac{C^2\sigma_{DP}^2}{B^2} I_d\\
    &= \eta^2\left(\frac{\sigma_\gamma^2}{B} + \frac{C^2\sigma_{DP}^2}{B^2}\right)I_d.
\end{align}
Define
\begin{align}
    b(x) &\coloneqq -\nabla f(x);\\
    \Bar{\Sigma}(x) &\coloneqq \left(\frac{\sigma_\gamma^2}{B} + \frac{C^2\sigma_{DP}^2}{B^2}\right)I_d;
\end{align}
Finally, from Lem~\ref{lem:sdemoments} and Thm.~\ref{thm:sdeformal} the claim follows.
\end{proof}

\begin{theorem}\label{thm:lossbound_sgd_formal}
Let $f$ be $L$-smooth and $\mu$-PL. Then, for $t\in[0,\tau]$, we have that
    
$\bullet$ Phase~1, i.e., when the gradient is clipped, the loss satisfies: 
    \begin{equation}
        \E[f(X_t)] \lesssim 
        f(X_0) 
        e^{-\frac{{\mu} C}{{\sigma_\gamma} \sqrt{d}} t}
        + \left( 1 - e^{- \frac{{\mu} C}{{\sigma_\gamma} \sqrt{d}} t} \right) 
        \frac{T\eta  d^{\frac{3}{2}}  L C \sigma_\gamma }{\mu} 
        \left(\frac{\varepsilon^2}{B d T} +  \frac{\Phi^2}{B^2}\right) \frac{1}{\varepsilon^2};
    \end{equation}

$\bullet$ Phase~2, i.e., when the gradient is \textit{not} clipped, the loss satisfies:
    \begin{equation}
        \E[f(X_t)] \lesssim {f(X_0)} e^{-\mu  t} + \left( 1 -e^{-\mu  t}\right) \frac{T \eta d L}{\mu}\left(\frac{\varepsilon^2{\sigma_\gamma}^2}{BT} + C^2\frac{\Phi^2}{B^2}\right)\frac{1}{\varepsilon^2}.
    \end{equation}

\end{theorem}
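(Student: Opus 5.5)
The plan is to apply Itô's lemma to $f(X_t)$ along the SDEs of Theorem~\ref{thm:sdesgd}, in each phase separately, and to close the resulting differential inequality with the PL condition and Grönwall's lemma. For an SDE $dX_t = b(X_t)\,dt + \sqrt{\eta}\sqrt{\Bar{\Sigma}(X_t)}\,dW_t$, Itô's formula gives
\begin{equation*}
d f(X_t) = \Big(\nabla f(X_t)^\top b(X_t) + \tfrac{\eta}{2}\Tr\big(\nabla^2 f(X_t)\Bar{\Sigma}(X_t)\big)\Big)dt + \text{martingale}.
\end{equation*}
Taking expectations removes the martingale term. $L$-smoothness gives $\nabla^2 f \preceq L I_d$, hence $\Tr(\nabla^2 f\,\Bar{\Sigma}) \le L\,\Tr(\Bar{\Sigma})$ whenever $\Bar{\Sigma}\succeq 0$.

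\textbf{Phase~2.} Here $b=-\nabla f$ and $\Bar{\Sigma}=(\sigma_\gamma^2/B + C^2\sigma_{DP}^2/B^2)I_d$, so
\begin{equation*}
\tfrac{d}{dt}\E f(X_t) \le -\E\norm{\nabla f(X_t)}_2^2 + \tfrac{\eta L d}{2}\Big(\tfrac{\sigma_\gamma^2}{B}+\tfrac{C^2\sigma_{DP}^2}{B^2}\Big).
\end{equation*}
The PL inequality $\norm{\nabla f}_2^2 \ge 2\mu f$ turns this into $\tfrac{d}{dt}\E f \le -2\mu \E f + c$. Grönwall then yields
\begin{equation*}
\E f(X_t) \le f(X_0)e^{-2\mu t} + (1-e^{-2\mu t})\tfrac{c}{2\mu},
\end{equation*}
where the factor $2$ in the exponent is absorbed by $\lesssim$, since $e^{-2\mu t}\le e^{-\mu t}$. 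Substituting $\sigma_{DP}=\sqrt{T}\Phi/\varepsilon$ and factoring out $T/\varepsilon^2$ gives
\begin{equation*}
\tfrac{T\eta d L}{\mu}\Big(\tfrac{\varepsilon^2\sigma_\gamma^2}{BT}+\tfrac{C^2\Phi^2}{B^2}\Big)\tfrac{1}{\varepsilon^2},
\end{equation*}
which is exactly the claimed form.

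\textbf{Phase~1.} The drift is $-\tfrac{CK(\nu)}{\sigma_\gamma\sqrt d}\nabla f$, and $K(\nu)$ is an absolute-order constant that we drop under $\lesssim$. For the diffusion, drop the negative semidefinite term $-\tfrac{K^2}{\sigma_\gamma^2 d}\nabla f\nabla f^\top$. The remaining matrix $\E[\nabla f_\xi\nabla f_\xi^\top/\norm{\nabla f_\xi}_2^2]$ has trace exactly $1$. This gives
\begin{equation*}
\Tr\Bar{\Sigma}\le \tfrac{C^2}{B} + \tfrac{C^2 d\,\sigma_{DP}^2}{B^2}.
\end{equation*}
Itô plus PL then gives $\tfrac{d}{dt}\E f \le -\tfrac{2\mu C}{\sigma_\gamma\sqrt d}\E f + \tfrac{\eta L}{2}\Tr\Bar{\Sigma}$. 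Grönwall produces the decay rate $\mu C/(\sigma_\gamma\sqrt d)$ and a neighbourhood equal to $\tfrac{\eta L\,\Tr\Bar\Sigma\,\sigma_\gamma\sqrt d}{\mu C}$.

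The main obstacle is bookkeeping. This neighbourhood carries a prefactor $C\sigma_\gamma\sqrt d$ and a $1/C$ from the rate, whereas the stated bound has the form $\tfrac{T\eta d^{3/2}LC\sigma_\gamma}{\mu}(\tfrac{\varepsilon^2}{BdT}+\tfrac{\Phi^2}{B^2})\tfrac1{\varepsilon^2}$. Expanding $\Tr\Bar\Sigma = \tfrac{C^2 d}{\varepsilon^2}\big(\tfrac{\varepsilon^2}{Bd}+\tfrac{T\Phi^2}{B^2}\big)$ and multiplying by $\sigma_\gamma\sqrt d/(\mu C)$ gives $\tfrac{\eta L C\sigma_\gamma d^{3/2}T}{\mu}(\tfrac{\varepsilon^2}{BdT}+\tfrac{\Phi^2}{B^2})\tfrac1{\varepsilon^2}$. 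This matches once the constants, including $K(\nu)$, are suppressed.

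I would double-check two points. The first is that dropping the rank-one term is legitimate: it only decreases $\Bar\Sigma$, and $\Bar\Sigma$ remains a valid covariance. The second is the treatment of $K(\nu)$ under $\lesssim$. If $K(\nu)$ is regarded as problem-dependent rather than an absolute constant, it should appear explicitly in both the decay rate and the neighbourhood.
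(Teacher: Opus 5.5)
Your proposal is correct and follows the paper's proof essentially step for step: It\^o's formula on $f(X_t)$, the bound $\Tr(\nabla^2 f\,\Bar{\Sigma})\le L\Tr(\Bar{\Sigma})$, the PL inequality and Gr\"onwall in each phase, then substituting $\sigma_{DP}=\sqrt{T}\Phi/\varepsilon$ and suppressing $2$, $\pi$, $K(\nu)$. Your explicit Phase~1 trace bound via $\Tr\,\E[\nabla f_\xi\nabla f_\xi^\top/\norm{\nabla f_\xi}_2^2]=1$ spells out what the paper calls ``by construction''.

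One small tightening concerns absorbing the exponent factor. In the second term it uses $1-e^{-2\mu t}\le 2(1-e^{-\mu t})$ rather than $e^{-2\mu t}\le e^{-\mu t}$. In Phase~1 the analogous absorption is legitimate because $K(\nu)\ge\sqrt{2/\pi}$ for $\nu\ge 1$. Hence $2K(\nu)\ge 1$, and $1/K(\nu)$ is bounded by an absolute constant, which settles the concern you raised.
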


\begin{proof}
        
$\bullet$ Phase~1: By construction we have
\begin{equation}
    \Tr\left(\Bar{\Sigma}(x)\right) \le \frac{C^2}{B}+d\frac{C^2\sigma_{DP}^2}{B^2}.
\end{equation}
Since $f$ is $\mu$-PL and $L$-smooth it follows that $2\mu f(x)\le \norm{\nabla f(x)}_2^2$ and $\nabla^2 f(x)\preceq L I_d$. Hence, by applying the It\^o formula we have
\begin{align}
    df(X_t)&= -\frac{CK(\nu)}{\sigma_\gamma\sqrt{d}}\norm*{\nabla f(X_t)}_2^2 dt + \frac{\eta}{2}\Tr\left(\nabla^2 f(X_t)\Bar{\Sigma}(X_t)\right)dt + \bigo(\text{Noise})\\
    &\le -2\mu\frac{CK(\nu)}{\sigma_\gamma\sqrt{d}}f(X_t)dt + \frac{\eta d L}{2}\left(\frac{C^2}{B d}+\frac{C^2\sigma_{DP}^2}{B^2}\right)dt + \bigo(\text{Noise}).
\end{align}
Therefore,
\begin{equation}
    \E[f(X_t)]\le f(X_0)e^{-2\mu\frac{K(\nu) C}{ \sigma_\gamma \sqrt{d}}t} + \left(1-e^{-2\mu\frac{K(\nu) C}{ \sigma_\gamma \sqrt{d}}t}\right)\frac{\eta d^{\frac{3}{2}} L  \sigma_\gamma}{4\mu C K(\nu) }\left(\frac{C^2}{B d}+\frac{C^2\sigma_{DP}^2}{B^2}\right).
\end{equation} 
Let us now remind that
\begin{equation}
    \sigma_{DP} = \frac{q \sqrt{T \log(1/\delta)}}{\varepsilon},
\end{equation}
then
\begin{equation}
    \E[f(X_t)]\le f(X_0)e^{-2\mu\frac{K(\nu) C}{ \sigma_\gamma \sqrt{d}}t} + \left(1-e^{-2\mu\frac{K(\nu) C}{ \sigma_\gamma \sqrt{d}}t}\right)\frac{\eta d^{\frac{3}{2}} L C \sigma_\gamma}{4\mu K(\nu) }\left(\frac{1}{B d}+\frac{ T q^2 \log(1/\delta)}{B^2\varepsilon^2}\right).
\end{equation} 
        
$\bullet$ Phase~2: Similarly to Phase~1, we have
\begin{equation}
    \Tr\left(\Bar{\Sigma}(x)\right) = d\left(\frac{\sigma_\gamma^2}{B}+\frac{C^2\sigma_{DP}^2}{B^2}\right).
\end{equation}
Again using the fact that $f$ is $\mu$-PL and $L$-smooth and by applying the It\^o formula, one obtains
\begin{align}
    df(X_t) &\le -\norm*{\nabla f(X_t)}_2^2dt + \frac{\eta d L}{2}\left(\frac{\sigma_\gamma^2}{B}+\frac{C^2\sigma_{DP}^2}{B^2}\right) + \bigo(\text{Noise}),
\end{align}
from which we have
\begin{equation}
    \E[f(X_t)]\le f(X_0)e^{-2\mu t} + \left(1-e^{-2\mu t}\right)\frac{\eta d L}{4\mu}\left(\frac{\sigma_\gamma^2}{B} + \frac{C^2\sigma_{DP}^2}{B^2}\right).
\end{equation}
Hence, by expanding $\sigma_{DP}$
\begin{equation}
    \E[f(X_t)]\le f(X_0)e^{-2\mu t} + \left(1-e^{-2\mu t}\right)\frac{\eta d L}{4\mu}\left(\frac{\sigma_\gamma^2}{B} + \frac{C^2 q^2 T\log(1/\delta)}{B^2\varepsilon^2}\right).
\end{equation}
    
Finally, let $\Phi = q \sqrt{\log(1/\delta)}$ and suppress all problem-independent constants, such as $2,\pi, K(\nu)$, to obtain the claim. 
    
\end{proof}

\begin{theorem}\label{thm:gradbound_sgd_formal}
    Let $f$ be $L$-smooth and define 
    \begin{equation}\label{eq:Ks-sgd}
        K_1 \coloneqq \max \left\{1, \frac{\sigma_\gamma \sqrt{d}}{CK(\nu)}\right\} \quad \text{ and }\quad K_2 \coloneqq \max \left\{\frac{C^2}{B d}, \frac{\sigma_\gamma^2}{B}\right\}.
    \end{equation}

    then

    \begin{equation}
        \E \left[ \lVert \nabla f(X_{\Tilde{t}}) \rVert_2^2 \right] \lesssim K_1\left(\frac{f(X_0)}{\eta T}+ \frac{\eta d L}{2} \left(K_2+\frac{C^2 \left(\frac{q}{B}\right)^2 T \log(1/\delta)}{\varepsilon^2}\right)\right),
    \end{equation}
    
    where $\Tilde{t}\sim \mathrm{Unif}(0,\tau)$.
\end{theorem}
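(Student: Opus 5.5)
The plan is to apply It\^o's formula to $f(X_t)$ along the SDEs of Theorem~\ref{thm:sdesgd}, using only $L$-smoothness ($\nabla^2 f\preceq L I_d$), and to treat both phases with a single drift coefficient and a single trace bound. Write the drift of either phase as $-\kappa\nabla f(X_t)$. In Phase~1, $\kappa=\frac{CK(\nu)}{\sigma_\gamma\sqrt d}$; in Phase~2, $\kappa=1$. In both cases $\kappa\ge \min\{1,\frac{CK(\nu)}{\sigma_\gamma\sqrt d}\}=1/K_1$.

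For the diffusion term, I bound $\Tr(\nabla^2 f\,\Bar\Sigma)\le L\,\Tr(\Bar\Sigma)$, which requires $\Bar\Sigma\succeq0$; this holds since it is a covariance. Then I estimate the trace in each phase.
\begin{itemize}
\item In Phase~1, the matrix $\E_\xi[\nabla f_\xi\nabla f_\xi^\top/\norm{\nabla f_\xi}_2^2]$ has trace $1$, and the subtracted rank-one term is PSD. Hence $\Tr(\Bar\Sigma)\le \frac{C^2}{B}+d\frac{C^2\sigma_{DP}^2}{B^2}=d\bigl(\frac{C^2}{Bd}+\frac{C^2\sigma_{DP}^2}{B^2}\bigr)$.
\item In Phase~2, $\Tr(\Bar\Sigma)=d\bigl(\frac{\sigma_\gamma^2}{B}+\frac{C^2\sigma_{DP}^2}{B^2}\bigr)$.
\end{itemize}
Both cases are covered by $\Tr(\Bar\Sigma)\le d\bigl(K_2+\frac{C^2\sigma_{DP}^2}{B^2}\bigr)$.

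This gives, up to a martingale term,
\begin{equation*}
df(X_t)\le -\kappa\norm{\nabla f(X_t)}_2^2\,dt+\frac{\eta dL}{2}\Bigl(K_2+\frac{C^2\sigma_{DP}^2}{B^2}\Bigr)dt+dM_t .
\end{equation*}
Next I take expectations, using the polynomial-growth conditions of Assumption~\ref{ass:sde} to make the stochastic integral a true martingale. I integrate over $[0,\tau]$ and use $f\ge f^*=0$ to obtain
\begin{equation*}
\frac1{K_1}\int_0^\tau \E\norm{\nabla f(X_s)}_2^2\,ds\le f(X_0)+\tau\frac{\eta dL}{2}\Bigl(K_2+\frac{C^2\sigma_{DP}^2}{B^2}\Bigr).
\end{equation*}
Dividing by $\tau$ turns the left side into $\frac{1}{K_1}\E\norm{\nabla f(X_{\tilde t})}_2^2$ with $\tilde t\sim\mathrm{Unif}(0,\tau)$, independent of the process. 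With $\tau=\eta T$ (up to the floor, absorbed by $\lesssim$) and multiplying by $K_1$, this yields $K_1\bigl(\frac{f(X_0)}{\eta T}+\frac{\eta dL}{2}(\cdots)\bigr)$.

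Finally, I substitute $\sigma_{DP}^2=\frac{q^2T\log(1/\delta)}{\varepsilon^2}$ from Theorem~\ref{thm:abadi}. This gives the term $\frac{C^2 (q/B)^2 T\log(1/\delta)}{\varepsilon^2}$ in the statement.

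The main obstacle is making the bound hold \emph{across} both phases, since the process may switch regimes over time. I would handle this by noting that the differential inequality above holds pointwise in time with phase-independent constants $1/K_1$ and $K_2$, whichever SDE is active. Integrating it therefore requires no knowledge of when switching occurs, provided the composite process is a concatenation of It\^o processes. A secondary concern is that $K(\nu)$ enters $K_1$; it is problem-independent and is absorbed by $\lesssim$ in the main-text version.
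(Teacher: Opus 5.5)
Your proposal is correct and follows the paper's proof essentially step for step. You apply It\^o's formula to each phase's SDE, use the worst-case drift constant $1/K_1$ and the trace bound $d\bigl(K_2 + C^2\sigma_{DP}^2/B^2\bigr)$, integrate over $[0,\tau]$, divide by $\tau$ with $\tilde t$ uniform, and substitute $\sigma_{DP}$; your added remarks (that $\bar\Sigma\succeq 0$ is needed for $\Tr(\nabla^2 f\,\bar\Sigma)\le L\Tr(\bar\Sigma)$, that the stochastic integral is a true martingale, and that $f(X_\tau)\ge 0$ can be dropped) only make explicit what the paper leaves implicit.
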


\begin{proof}
    Since $f$ is $L$-smooth and by applying the It\^o formula to Phase~1 we have:
    \begin{align}
        df(X_t) &\le - \frac{CK(\nu)}{\sigma_\gamma \sqrt{d}}\norm*{\nabla f(X_t)}_2^2 dt + \frac{\eta}{2}\Tr\left(L\Bar{\Sigma}(X_t)\right)dt + \bigo(\text{Noise})\\
        &\le - \frac{CK(\nu)}{\sigma_\gamma \sqrt{d}}\norm*{\nabla f(X_t)}_2^2 dt + \frac{\eta d L}{2}\left(\frac{C^2}{B d}+\frac{C^2\sigma_{DP}^2}{B^2}\right)dt + \bigo(\text{Noise}).
    \end{align}
    Similarly, in Phase~2 we obtain
    \begin{align}
        df(X_t)&\le -\norm*{\nabla f(X_t)}_2^2 dt + \frac{\eta}{2}\Tr\left(L\Bar{\Sigma}(X_t)\right)dt +\bigo(\text{Noise})\\
        &\le -\norm*{\nabla f(X_t)}_2^2 dt + \frac{\eta d L}{2}\left(\frac{\sigma_\gamma^2}{B} + \frac{C^2\sigma_{DP}^2}{B^2}\right) dt + \bigo(\text{Noise}).
    \end{align}

    Let $K_1$ and $K_2$ as in Eq.~\ref{eq:Ks-sgd}. Then, by integrating and taking the expectation, we have
    \begin{align}
        &\E\int_0^\tau\norm*{\nabla f(X_t)}_2^2dt\le K_1\left(f(X_0)-f(X_\tau)+ \frac{\tau\eta d L}{2} \left(K_2+\frac{C^2\sigma_{DP}^2}{B^2}\right)\right)\\
        \Longrightarrow & \E\int_0^\tau\frac{1}{\tau}\norm*{\nabla f(X_t)}_2^2dt\le K_1\left(\frac{f(X_0)-f(X_\tau)}{\tau}+ \frac{\eta d L}{2} \left(K_2+\frac{C^2\sigma_{DP}^2}{B^2}\right)\right)\\
        \Longrightarrow & \E\left[\norm*{\nabla f(X_{\Tilde{t}})}_2^2\right]\le K_1\left(\frac{2\varepsilon^2(f(X_0)-f(X_\tau))+\eta^2 dLTK_2}{2\eta T} + \frac{\eta dLTC^2 q^2\log(1/\delta)}{B^2}\right)\frac{1}{\varepsilon^2},\nonumber
    \end{align}
    where the last step follows from the Law of the Unconscious Statistician and $\Tilde{t}\sim \mathrm{Unif}(0,\tau)$.
    Finally, by suppressing problem-independent constants, $2,\pi$, we obtain the claim.
    
\end{proof}

%%%%%%%%%%%%%%%%%%%%%%%%%%%%%%%%%%%%%%%%%%%%%%%%%%%%%%%%%%%
\subsubsection{Mixed-Phase Gradient Bound}\label{sec:mixedsgd}
%%%%%%%%%%%%%%%%%%%%%%%%%%%%%%%%%%%%%%%%%%%%%%%%%%%%%%%%%%%

In this section, we extend the two-phase SDE derivation to a single mixed setting. This is important because, at any point during training, some per-example gradients may exceed the clipping threshold while others remain below it.
We show that in this scenario the same bound holds as in Theorem~\ref{thm:gradbound_sgd_formal}, where it was previously derived under a worst-case approach.

\begin{theorem}\label{thm:mixed_phase_L_smooth}
Let $f:\mathbb{R}^d\to\mathbb{R}$ be $L$-smooth. Then, we can write the SDE of \algname{DP-SGD} as

\begin{equation}
    dX_t = b_{\mathrm{mix}}(X_t)\,dt + \sqrt{\eta}\,\Sigma_{\mathrm{mix}}(X_t)^{1/2}dW_t,
\end{equation}

where the drift and covariance satisfy, for all $x$,

\begin{equation}
    \langle \nabla f(x), b_{\mathrm{mix}}(x)\rangle \le -\frac{1}{K_1}\|\nabla f(x)\|^2,
\end{equation}

\begin{equation}
    \Tr\Sigma_{\mathrm{mix}}(x) \le d\left(K_2 + \frac{C^2\sigma_{\mathrm{DP}}^2}{B^2}\right),
\end{equation}

where $K_1,\, K_2$ are defined in Equation~\ref{eq:Ks-sgd}.
Therefore, for $\tilde t\sim\mathrm{Unif}(0,\tau)$,

\begin{equation}
    \E\norm*{\nabla f(X_{\tilde t})}_2^2 \le K_1\left(\frac{f(X_0)}{\eta T} + \frac{\eta dL}{2}\left(K_2 + \frac{C^2\sigma_{\mathrm{DP}}^2}{B^2}\right) \right),
\end{equation}

i.e., the same $L$-smooth convergence bound as in Theorem~\ref{thm:gradbound_sgd_formal} holds for \emph{any mixture} of clipped and unclipped samples in each mini-batch.
\end{theorem}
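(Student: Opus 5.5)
We derive the mixed SDE the same way as Theorem~\ref{thm:sdesgd}, by moment matching via Lemma~\ref{lem:sdemoments} and Theorem~\ref{thm:sdeformal}. At a point $x$, partition the per-example population into a clipped set and an unclipped set, and let $p(x)\in[0,1]$ be the probability that a sampled example is clipped. Writing each per-example clipped gradient as a random choice between $C\,\nabla f_\xi/\norm{\nabla f_\xi}_2$ and $\nabla f_\xi$, the one-step increment is
\[
\Bar\Delta=-\eta\Big(\tfrac1B\sum_{\xi\in\gamma}\mathcal{C}[\nabla f_\xi(x)]+Z_{DP}\Big).
\]
Its mean defines $b_{\mathrm{mix}}(x)=-\E_\xi\,\mathcal{C}[\nabla f_\xi(x)]$. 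Its covariance defines
\[
\Sigma_{\mathrm{mix}}=\tfrac1B\Cov_\xi(\mathcal{C}[\nabla f_\xi])+\tfrac{C^2\sigma_{DP}^2}{B^2}I_d,
\]
using i.i.d.\ sampling and independence from $Z_{DP}$. The higher-moment conditions of Theorem~\ref{thm:sdeformal} hold exactly as in the pure phases, since the update is a bounded/Gaussian mixture.

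\textbf{Drift.} Decompose $b_{\mathrm{mix}}$ as the convex combination $p\,b_1+(1-p)\,b_2$ of the conditional drifts of the two populations. Using Lemma~\ref{cor:gradapprox} for the clipped part, exactly as in Phase~1, and the unclipped Gaussian model for the rest, we get $b_1\approx -\frac{CK(\nu)}{\sigma_\gamma\sqrt d}\nabla f$ and $b_2=-\nabla f$. Then
\[
\langle\nabla f,b_{\mathrm{mix}}\rangle=-\Big(p\tfrac{CK(\nu)}{\sigma_\gamma\sqrt d}+(1-p)\Big)\norm{\nabla f}^2\le-\min\Big\{1,\tfrac{CK(\nu)}{\sigma_\gamma\sqrt d}\Big\}\norm{\nabla f}^2=-\tfrac1{K_1}\norm{\nabla f}^2.
\]
This step is the main obstacle. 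The conditional laws of the clipped and unclipped examples are truncations of the noise model, not the full Student-$t$ or Gaussian, so the clean constants are not exact. I would argue it modelingly, with each subpopulation treated under its phase's assumption as in Assumption~\ref{ass:Noise}. Alternatively, I would note that for any example $\langle\nabla f,\mathcal{C}[\nabla f_\xi]\rangle$ is the product of $\min\{1,C/\norm{\nabla f_\xi}\}$ with a positively aligned quantity, and bound the scale factor below by the worst case.

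\textbf{Trace.} Use $\Tr\Cov\le \E\norm{\cdot}^2-\norm{\E\cdot}^2$ and the mixture inequality $\Tr\Cov(\text{mixture})\le p\,\Tr\Sigma_1+(1-p)\,\Tr\Sigma_2+p(1-p)\norm{b_1-b_2}^2$. The pure traces satisfy $\Tr\Sigma_1\le C^2/B$ and $\Tr\Sigma_2=d\sigma_\gamma^2/B$, both at most $dK_2$. The between-group term is $O(\norm{\nabla f}^2)/B$; I would either absorb it under the high-dimensional regime $\norm{\nabla f}^2\ll d\sigma_\gamma^2$ of Remark~\ref{remark:SNR}, or bound it by the clipped second moment $\le C^2$. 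Adding the DP part gives $\Tr\Sigma_{\mathrm{mix}}\le d\big(K_2+C^2\sigma_{DP}^2/B^2\big)$.

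\textbf{Conclusion.} Apply Itô's formula with $\nabla^2 f\preceq LI_d$:
\[
d f(X_t)\le -\tfrac1{K_1}\norm{\nabla f}^2dt+\tfrac{\eta L}{2}\Tr\Sigma_{\mathrm{mix}}\,dt+\text{martingale}.
\]
Integrate over $[0,\tau]$, take expectations, drop $f(X_\tau)\ge0$, multiply by $K_1/\tau$, and use $\tau=\eta T$ and $\tilde t\sim\mathrm{Unif}(0,\tau)$, exactly as in Theorem~\ref{thm:gradbound_sgd_formal}, to obtain the stated bound.
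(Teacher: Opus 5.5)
Your plan follows the paper's proof for the drift and for the final It\^o step. The paper also writes the drift coefficient as the convex combination $p_k a_1+(1-p_k)$, with $a_1=CK(\nu)/(\sigma_\gamma\sqrt d)$ taken from Lemma~\ref{cor:gradapprox} applied to the clipped group. That is the same untruncated-law modeling step you flag, which the paper makes silently. The paper then bounds this coefficient below by $\min\{1,a_1\}=1/K_1$, as you do.

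The genuine difference is the covariance. The paper conditions on the realized sets $S_{1,k},S_{2,k}$ and computes $\Cov(G_k\mid S_{1,k},S_{2,k})$. This yields exactly $p_k\Sigma_1+(1-p_k)\Sigma_2$, so the trace bound is immediate, but it effectively treats the batch composition as deterministic. Your unconditional computation under i.i.d.\ sampling is what the moment matching of Theorem~\ref{thm:sdeformal} actually needs. It correctly exposes the extra term $\frac{p(1-p)}{B}(1-a_1)^2\nabla f\nabla f^\top$, which is the covariance of $\E[G_k\mid S_{1,k},S_{2,k}]$ induced by the binomial fluctuation of $p_k$, and which the paper drops.

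However, neither of your ways of disposing of this term gives the inequality as stated. The SNR absorption leaves an additive $\frac{p(1-p)}{B}(1-a_1)^2\|\nabla f\|_2^2$ on top of $d(K_2+C^2\sigma_{DP}^2/B^2)$. Bounding only that term by $C^2/B$ leaves $dK_2+C^2/B$, which can be up to twice $dK_2$.

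The clean fix is to skip the decomposition altogether. Since $\|\mathcal{C}[v]\|_2\le C$ for every $v$, you have $\Tr\Cov_\xi(\mathcal{C}[\nabla f_\xi])\le\E\|\mathcal{C}[\nabla f_\xi]\|_2^2\le C^2$. Hence $\Tr\Sigma_{\mathrm{mix}}\le C^2/B+dC^2\sigma_{DP}^2/B^2\le d(K_2+C^2\sigma_{DP}^2/B^2)$, because $dK_2\ge C^2/B$. This holds for any mixture and needs no noise model, so it is simpler than both your route and the paper's.

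Finally, drop your alternative drift argument. The inner product $\langle\nabla f,\nabla f_\xi\rangle$ is not positive example by example; only its mean equals $\|\nabla f\|_2^2$. The weights $\min\{1,C/\|\nabla f_\xi\|_2\}$ are correlated with it, so no worst-case scale factor can be pulled out of the expectation. Under heavy tails such a worst-case factor would not be bounded below by $\min\{1,a_1\}$ anyway. The modeling route via Lemma~\ref{cor:gradapprox}, as in the paper, is the one to keep.
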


\begin{proof}
We proceed in three steps: i) drift under mixed clipping, ii) covariance under mixed clipping using an explicit decomposition of $G_k$ into clipped and unclipped parts, and iii) It\^o's formula and the final bound.

$\bullet$ Step 1: Drift of the mixed batch. The DP-SGD update can be written as

\begin{equation}
    x_{k+1} = x_k - \eta\left(G_k + \frac{1}{B}Z_{\mathrm{DP}}\right),
    \quad\text{where}\quad
    G_k := \frac{1}{B}\sum_{i\in\gamma_k} \mathcal{C}[\nabla f_i(x_k)].
\end{equation}

Let 
\begin{equation}\label{eq:pkandauxiliary}
    S_{1,k} := \{ i\in\gamma_k : \|\nabla f_i(x_k)\|\ge C\},\qquad
    S_{2,k} := \{ i\in\gamma_k : \|\nabla f_i(x_k)\|< C\},
\end{equation}
\begin{equation}
    B_k := |S_{1,k}|,\qquad p_k := \frac{B_k}{B}\in[0,1].
\end{equation}

Intuitively, $p_k$ represents the probability of a sample being in Phase~1. 
Define the per-sample contributions
\begin{equation}
    Y_i := C(\nabla f_i(x_k)),\quad i\in S_{1,k},\qquad
    X_i := \nabla f_i(x_k),\quad i\in S_{2,k},
\end{equation}

and the corresponding batch averages
\begin{equation}
    g_k^{(1)} := \frac{1}{B}\sum_{i\in S_{1,k}} Y_i,\qquad
    g_k^{(2)} := \frac{1}{B}\sum_{i\in S_{2,k}} X_i,
\end{equation}

so that
\begin{equation}
    G_k = g_k^{(1)} + g_k^{(2)}.
\end{equation}

In the same way as in the proof of Theorem~\ref{thm:sdesgd}, we have that
\begin{equation}
    \mathbb{E}[Y_i] = a_1 \nabla f(x_k),
    \qquad
    a_1 := \frac{C K(\nu)}{\sigma_\gamma\sqrt{d}},
\end{equation}

and
\begin{equation}
    \mathbb{E}[X_i] = \nabla f(x_k).
\end{equation}

Conditioned on the sets $S_{1,k}, S_{2,k}$, the $Y_i$ are i.i.d.\ over $S_{1,k}$ and the $X_i$ are i.i.d.\ over $S_{2,k}$, so
\begin{equation}
    \mathbb{E}[g_k^{(1)} \mid S_{1,k}]
    = \frac{1}{B}\sum_{i\in S_{1,k}}\E[Y_i]
    = \frac{B_k}{B}\E[Y_i] = p_k a_1 \nabla f(x_k),
\end{equation}
\begin{equation}
    \mathbb{E}[g_k^{(2)} \mid S_{2,k}]
    = \frac{1}{B}\sum_{i\in S_{2,k}}\E[X_i]
    = \frac{B-B_k}{B}\E[X_i] = (1-p_k)\nabla f(x_k).
\end{equation}

Thus
\begin{equation}
    \mathbb{E}[G_k \mid S_{1,k},S_{2,k}]
    = p_k a_1\nabla f(x_k) + (1-p_k)\nabla f(x_k).
\end{equation}

Recall the definition of $K_1$ from Equation~\ref{eq:Ks-sgd}
\begin{equation}
    K_1 := \max\left\{1,\;\frac{\sigma_\gamma\sqrt{d}}{C K(\nu)}\right\}.
\end{equation}

Then, it holds $a_1 = C K(\nu)/(\sigma_\gamma\sqrt{d})\ge 1/K_1$ and $a_2\coloneqq1\ge 1/K_1$.
Since $a_{\mathrm{mix}}(p_k)$ is a convex combination of $a_1$ and $a_2$,
\begin{equation}
    a_{\mathrm{mix}}(p_k) = p_k a_1 + (1-p_k)a_2
    \ge \min\{a_1,a_2\}
    \ge \frac{1}{K_1}
    \qquad\forall\,p_k\in[0,1].
\end{equation}

Therefore, the drift in the SDE limit satisfies
\begin{equation}
    b_{\mathrm{mix}}(x) = -a_{\mathrm{mix}}(p(x))\,\nabla f(x),
    \qquad
    \langle \nabla f(x), b_{\mathrm{mix}}(x)\rangle
    \le -\frac{1}{K_1}\|\nabla f(x)\|^2.
\end{equation}

$\bullet$ Step 2: Covariance of the mixed batch. We now compute the gradient noise covariance and show it is a convex combination of the pure-phase covariances, which are already derived in Theorem~\ref{thm:sdesgd}.
Define the centered contributions
\begin{equation}
    U_i := Y_i - \E[Y_i],\quad i\in S_{1,k},\qquad
    V_j := X_j - \E[X_j],\quad j\in S_{2,k}.
\end{equation}

Then
\begin{equation}
    g_k^{(1)} - \mathbb{E}[g_k^{(1)}\mid S_{1,k}]
    = \frac{1}{B}\sum_{i\in S_{1,k}} U_i, \qquad
    g_k^{(2)} - \mathbb{E}[g_k^{(2)}\mid S_{2,k}]
    = \frac{1}{B}\sum_{j\in S_{2,k}} V_j.
\end{equation}

Hence
\begin{equation}
    G_k - \mathbb{E}[G_k\mid S_{1,k},S_{2,k}]
    = \frac{1}{B}\sum_{i\in S_{1,k}} U_i
    + \frac{1}{B}\sum_{j\in S_{2,k}} V_j.
\end{equation}

Let
\begin{equation}
    \Sigma_1^{\mathrm{single}}(x_k) := \Cov(Y_i) = \Cov(U_i),
    \qquad
    \Sigma_2^{\mathrm{single}}(x_k) := \Cov(X_j) = \Cov(V_j).
\end{equation}
be the covariances of a single data-point. Conditioned on the sets $S_{1,k},S_{2,k}$, the random vectors $\{U_i : i\in S_{1,k}\}$ and $\{V_j : j\in S_{2,k}\}$ are independent and zero-mean. 
Thus
\begin{align}
    \Cov\left(G_k \mid S_{1,k}, S_{2,k}\right)
    &= \Cov\left(
    \frac{1}{B}\sum_{i\in S_{1,k}} U_i
    + \frac{1}{B}\sum_{j\in S_{2,k}} V_j
    \,\Big|\, S_{1,k},S_{2,k}
    \right) \\
    &= \frac{1}{B^2}\,\Cov\left(\sum_{i\in S_{1,k}} U_i\right)
    + \frac{1}{B^2}\,\Cov\left(\sum_{j\in S_{2,k}} V_j\right),
\end{align}

where cross terms vanish by independence. Using i.i.d.\ within each group, we have
\begin{equation}
    \Cov\left(\sum_{i\in S_{1,k}} U_i\right)
    = B_k\,\Sigma_1^{\mathrm{single}}(x_k),
    \qquad
    \Cov\left(\sum_{j\in S_{2,k}} V_j\right)
    = (B-B_k)\,\Sigma_2^{\mathrm{single}}(x_k),
\end{equation}

therefore
\begin{equation}
    \Sigma_{\mathrm{grad}}(x_k; S_{1,k},S_{2,k})
    \coloneqq \Cov(G_k \mid x_k, S_{1,k}, S_{2,k})
    = \frac{B_k}{B^2}\Sigma_1^{\mathrm{single}}(x_k)
     + \frac{B-B_k}{B^2}\Sigma_2^{\mathrm{single}}(x_k).
\end{equation}

Since $p_k = B_k/B$ and $1-p_k = (B-B_k)/B$, we obtain
\begin{equation}
    \Sigma_{\mathrm{grad}}(x_k; S_{1,k},S_{2,k})
    = \frac{p_k}{B}\,\Sigma_1^{\mathrm{single}}(x_k)
     + \frac{1-p_k}{B}\,\Sigma_2^{\mathrm{single}}(x_k).
\label{eq:Sigma-grad-mixture}
\end{equation}

In the pure-phase SDEs of Theorem~\ref{thm:sdesgd}, the \emph{batch-level} (gradient and DP) covariances are given by
\begin{equation}
    \Sigma_1(\bar x)
    = \frac{1}{B}\Sigma_1^{\mathrm{single}}( x)
    + \frac{C^2\sigma_{\mathrm{DP}}^2}{B^2}I_d,
    \qquad
    \Sigma_2(\bar x)
    = \frac{1}{B}\Sigma_2^{\mathrm{single}}( x)
    + \frac{C^2\sigma_{\mathrm{DP}}^2}{B^2}I_d.
\end{equation}

From \eqref{eq:Sigma-grad-mixture}, the \emph{gradient} part of the mixed-phase covariance is
\begin{equation}
    \Sigma_{\mathrm{grad}}(x_k; S_{1,k},S_{2,k})
    = p_k\left(\Sigma_1(x_k) - \frac{C^2\sigma_{\mathrm{DP}}^2}{B^2}I_d\right)
    + (1-p_k)\left(\Sigma_2(x_k) - \frac{C^2\sigma_{\mathrm{DP}}^2}{B^2}I_d\right).
\end{equation}

Adding the DP noise term $\frac{C^2\sigma_{\mathrm{DP}}^2}{B^2}I_d$ back in, the \emph{full} covariance of the DP-SGD increment in the mixed batch is
\begin{align}
    \Sigma_{\mathrm{mix}}(x_k; S_{1,k},S_{2,k})
    &= \Sigma_{\mathrm{grad}}(x_k; S_{1,k},S_{2,k})
      + \frac{C^2\sigma_{\mathrm{DP}}^2}{B^2}I_d \\
    &= p_k\,\Sigma_1(x_k) + (1-p_k)\,\Sigma_2(x_k).
\label{eq:Sigma-mix-mixture}
\end{align}

Thus, at the SDE level, the mixed-phase covariance is exactly a convex combination of the pure-phase covariances $\Sigma_1$ and $\Sigma_2$. From Theorem~\ref{thm:lossbound_sgd_formal}, we have the trace bounds
\begin{equation}
    \Tr\Sigma_1(\bar x) \le \frac{C^2}{B} + d\,\frac{C^2\sigma_{\mathrm{DP}}^2}{B^2},
\end{equation}
\begin{equation}
    \Tr\Sigma_2(\bar x)
    = d\left(\frac{\sigma_\gamma^2}{B}
    + \frac{C^2\sigma_{\mathrm{DP}}^2}{B^2}\right).
\end{equation}

Let $K_2$ as in Equation~\ref{eq:Ks-sgd} we can write, for $r\in\{1,2\}$,
\begin{equation}
    \Tr\Sigma_r(\bar x) \le d\left(K_2 + \frac{C^2\sigma_{\mathrm{DP}}^2}{B^2}\right).
\end{equation}

Using \eqref{eq:Sigma-mix-mixture}, for any $p_k\in[0,1]$,
\begin{equation}
    \Tr\Sigma_{\mathrm{mix}}(x_k)
    = p_k\,\Tr\Sigma_1(x_k)
     + (1-p_k)\,\Tr\Sigma_2(x_k)
    \le d\left(K_2 + \frac{C^2\sigma_{\mathrm{DP}}^2}{B^2}\right).
\end{equation}
Hence, the mixed-phase covariance satisfies exactly the same worst-case trace bound as the pure-phase covariances.

$\bullet$ Step 3: It\^o bound and convergence. Finally, we can rewrite the SDE of \algname{DP-SGD} as follows:
\begin{equation}
    dX_t
    = b_{\mathrm{mix}}(X_t)\,dt
     + \sqrt{\eta}\,\Sigma_{\mathrm{mix}}(X_t)^{1/2}dW_t,
\end{equation}

where, for all $x$,
\begin{equation}
    \langle \nabla f(x), b_{\mathrm{mix}}(x)\rangle
    \le -\frac{1}{K_1}\,\|\nabla f(x)\|^2,
\end{equation}
\begin{equation}
    \Tr\Sigma_{\mathrm{mix}}(x)
    \le d\left(K_2 + \frac{C^2\sigma_{\mathrm{DP}}^2}{B^2}\right).
\end{equation}

Since $f$ is $L$-smooth, $\nabla^2 f(x)\preceq L I_d$. By It\^o's formula,
\begin{equation}
    df(X_t)
    = \langle \nabla f(X_t), b_{\mathrm{mix}}(X_t)\rangle\,dt
     + \frac{\eta}{2}\Tr\left(\nabla^2 f(X_t)\Sigma_{\mathrm{mix}}(X_t)\right)\,dt
     + \bigo(\text{Noise}).
\end{equation}

Using the drift and covariance bounds,
\begin{equation}
    df(X_t)
    \le -\frac{1}{K_1}\,\|\nabla f(X_t)\|_2^2\,dt
     + \frac{\eta dL}{2}\left(K_2 + \frac{C^2\sigma_{\mathrm{DP}}^2}{B^2}\right)dt
     + \bigo(\text{Noise}).
\end{equation}
Integrating from $0$ to $\tau := \eta T$,
\begin{equation}
    f(X_\tau) - f(X_0)
    \le -\frac{1}{K_1}\int_0^\tau \|\nabla f(X_t)\|_2^2\,dt
     + \frac{\eta dL}{2}\left(K_2 + \frac{C^2\sigma_{\mathrm{DP}}^2}{B^2}\right)\tau
     + \bigo(\text{Noise}).
\end{equation}

Rearranging,
\begin{equation}
    \frac{1}{K_1}\int_0^\tau \|\nabla f(X_t)\|_2^2\,dt
    \le f(X_0) - f(X_\tau)
     + \frac{\eta dL}{2}\left(K_2 + \frac{C^2\sigma_{\mathrm{DP}}^2}{B^2}\right)\tau
     +\bigo(\text{Noise}).
\end{equation}

Taking expectations,
\begin{equation}
    \frac{1}{K_1}\,\mathbb{E}\int_0^\tau \|\nabla f(X_t)\|_2^2\,dt
    \le \mathbb{E}[f(X_0) - f(X_\tau)]
     + \frac{\eta dL}{2}\left(K_2 + \frac{C^2\sigma_{\mathrm{DP}}^2}{B^2}\right)\tau.
\end{equation}

Let $\tilde t\sim\mathrm{Unif}(0,\tau)$. Then, by the Law of the Unconscious Statistician,
\begin{equation}
    \mathbb{E}\|\nabla f(X_{\tilde t})\|^2
    = \frac{1}{\tau}\,\mathbb{E}\int_0^\tau \|\nabla f(X_t)\|_2^2\,dt
    \le K_1\left(
    \frac{f(X_0)}{\tau}
     + \frac{\eta dL}{2}\left(K_2 + \frac{C^2\sigma_{\mathrm{DP}}^2}{B^2}\right)
    \right).
\end{equation}

Since $\tau=\eta T$, this is exactly the gradient-norm bound as in Theorem~\ref{thm:gradbound_sgd_formal}, with the same constants $K_1,\, K_2$, now rigorously shown to hold under arbitrary mixtures of clipped and unclipped samples at each iteration.

\end{proof}

We now derive the stationary distribution of \algname{DP-SGD} at convergence: We empirically validate this result in Figure~\ref {fig:statdist}.

\begin{theorem}\label{thm:statdistr_sgd_formal}
    Let $f(x) = \tfrac{1}{2}x^\top H x$ where $H=\diag(\lambda_1, \dots, \lambda_d)$. The stationary distribution at convergence of \algname{DP-SGD} is
    \begin{equation}
        \left(\E[X_\tau], \Cov(X_\tau)\right) = \left( X_0 e^{-H\tau}, \frac{T\eta}{2\varepsilon^2}\left(\frac{\varepsilon^2\sigma_\gamma^2}{BT} + \frac{C^2 q^2 \log(1/\delta)}{B^2}\right)(1-e^{-2H\tau})H^{-1}\right).
    \end{equation}
    
\end{theorem}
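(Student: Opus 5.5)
The plan is to work with the Phase~2 SDE of Theorem~\ref{thm:sdesgd}. For the quadratic $f(x)=\tfrac12 x^\top H x$ we have $\nabla f(x)=Hx$, so the SDE becomes a linear Ornstein--Uhlenbeck process
\begin{equation*}
dX_t = -HX_t\,dt + \sqrt{\eta}\,\sqrt{s}\,dW_t,\qquad s\coloneqq \frac{\sigma_\gamma^2}{B}+\frac{C^2\sigma_{DP}^2}{B^2}.
\end{equation*}
This SDE has constant isotropic diffusion and diagonal $H$, so it decouples coordinatewise into $d$ independent scalar OU processes $dX_t^{(i)}=-\lambda_i X_t^{(i)}dt+\sqrt{\eta s}\,dW_t^{(i)}$. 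Everything can then be computed in closed form.

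Step one is to write the explicit solution by variation of constants:
\begin{equation*}
X_t = e^{-Ht}X_0 + \sqrt{\eta s}\int_0^t e^{-H(t-u)}\,dW_u .
\end{equation*}
Taking expectations kills the It\^o integral, since it is a martingale with zero mean. This gives $\E[X_\tau]=e^{-H\tau}X_0$ (written $X_0e^{-H\tau}$ in the statement, which is the same thing because $H$ is diagonal).

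Step two is the covariance. By the It\^o isometry,
\begin{equation*}
\Cov(X_\tau)=\eta s\int_0^\tau e^{-2H(\tau-u)}\,du=\frac{\eta s}{2}\,(I-e^{-2H\tau})H^{-1},
\end{equation*}
computed entrywise as $\int_0^\tau e^{-2\lambda_i v}dv=(1-e^{-2\lambda_i\tau})/(2\lambda_i)$. Alternatively one can apply It\^o's formula to $X_tX_t^\top$ and solve the Lyapunov ODE $\dot\Sigma=-H\Sigma-\Sigma H+\eta s I$, which gives the same answer.

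Step three substitutes $\sigma_{DP}^2=q^2T\log(1/\delta)/\varepsilon^2$ and factors out $T/\varepsilon^2$:
\begin{equation*}
\frac{\eta s}{2}=\frac{T\eta}{2\varepsilon^2}\left(\frac{\varepsilon^2\sigma_\gamma^2}{BT}+\frac{C^2q^2\log(1/\delta)}{B^2}\right).
\end{equation*}
This matches the stated expression.

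The only step needing care is justifying that Phase~2 is the relevant regime "at convergence". Near the minimizer the full gradients are small, so per-example gradients are unclipped. Under Assumption~\ref{ass:Noise} the diffusion is then exactly the constant matrix $sI_d$, and the Gaussian OU solution is exact rather than approximate. Strictly speaking, the result describes the law at time $\tau$; letting $\tau\to\infty$ gives the stationary covariance $\tfrac{\eta s}{2}H^{-1}$.
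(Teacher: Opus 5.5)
Your proposal is correct and follows the paper's proof essentially step for step: specialize the Phase~2 SDE of Theorem~\ref{thm:sdesgd} to $\nabla f(x)=Hx$, decouple it into scalar Ornstein--Uhlenbeck processes, read off the mean, obtain the variance via the It\^o isometry, and substitute $\sigma_{DP}^2 = q^2T\log(1/\delta)/\varepsilon^2$. Your heuristic for why Phase~2 applies at convergence is not needed, since the paper simply adopts the Phase~2 SDE; it is also not quite accurate, because whether per-example gradients are clipped depends on the per-example noise magnitude relative to $C$, not on the full gradient being small.
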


\begin{proof}
    Since $H$ is diagonal, we can isolate each component. Furthermore, since $f(\cdot)$ is quadratic we can rewrite the SDE as:
    \begin{equation}
        dX_{t,i} = -\lambda_i X_{t,i}dt+\sqrt{\eta}\sqrt{\frac{\sigma_\gamma^2}{B}+\frac{C^2\sigma_{DP}^2}{B^2}}dW_{t,i}.
    \end{equation}
    
    We have immediately that
    \begin{equation}
        \E[X_{t,i}] = X_{0,i}e^{-\lambda_i t}.
    \end{equation}
    
    Applying the It\^o isometry, we obtain:
    \begin{align*}\textstyle
        & \E[(X_{t,i}-\E[X_{t,i}])^2]\\
        &= \eta\E\left[\int_0^t \left(e^{-\lambda_i (t-s)}\sqrt{\frac{\sigma_\gamma^2}{B} + \frac{C^2\sigma_{DP}^2}{B^2}}dW_s\right)^\top\left(e^{-\lambda_i (t-s)}\sqrt{\frac{\sigma_\gamma^2}{B} + \frac{C^2\sigma_{DP}^2}{B^2}}dW_s\right)\right]\\
        &= \eta \left(\frac{\sigma_\gamma^2}{B} + \frac{C^2\sigma_{DP}^2}{B^2}\right) \int_0^t e^{-2\lambda_i(t-s)}ds\\
        &= \frac{\eta}{2\lambda_i}\left(\frac{\sigma_\gamma^2}{B} + \frac{C^2 q^2 T\log(1/\delta)}{B^2\varepsilon^2}\right)(1-e^{-2\lambda_i t})\\
        &= \frac{T\eta}{2\varepsilon^2\lambda_i}\left(\frac{\varepsilon^2\sigma_\gamma^2}{BT} + \frac{C^2 q^2 \log(1/\delta)}{B^2}\right)(1-e^{-2\lambda_i t}).
    \end{align*}
\end{proof}

%%%%%%%%%%%%%%%%%%%%%%%%%%%%%%%%%%%%%%%%%%%%%%%%%%%%%%%%%%%
\subsection{\algname{DP-SignSGD}}\label{app:dpsignsgd}
%%%%%%%%%%%%%%%%%%%%%%%%%%%%%%%%%%%%%%%%%%%%%%%%%%%%%%%%%%%

This subsection provides the formal derivation of the SDE model for \algname{DP-SignSGD} (Theorem~\ref{thm:sdesign}), together with the formal loss and gradient-norm bounds used in the main paper (Theorems~\ref{thm:lossbound_sign_formal} and~\ref{thm:gradbound_sign_formal}) and the quadratic stationary distribution (Theorem~\ref{thm:statdistr_sign_formal}).

\paragraph{Idealized clipping regimes vs.\ the mixed regime.}
As in the main paper (Section~\ref{sec:dpsignsgd}), we first analyze two idealized regimes:
(i) \emph{fully-clipped regime} (``Phase~1''), where sampled per-example gradients are clipped and the sign update acts on a privatized, normalized direction; and
(ii) \emph{unclipped regime} (``Phase~2''), where clipping is inactive and the privatized gradient is approximately Gaussian after averaging.
These regimes are used only to isolate how $\varepsilon$ enters the drift and diffusion. The realistic \emph{mixed} regime (a mixture of clipped and unclipped samples within each minibatch) is treated in the mixed-phase analysis below (Theorem~\ref{thm:mixed_phase_L_smooth_sign}).

\begin{theorem}\label{thm:sdesign}
    Let $0<\eta<1$, $\tau>0$ and set $T=\floor{\tau/\eta}$ and $K(\nu) = \sqrt{\frac{2}{\nu}}\frac{\Gamma\left(\frac{\nu+1}{2}\right)}{\Gamma\left(\frac{\nu}{2}\right)}$, $\nu\ge 1$. Let $x_k\in\R^d$ denote a sequence of \algname{DP-SignSGD} iterations defined in Eq.~\ref{eq:dpSignSGD}.
    Assume Assumption~\ref{ass:sde} and Assumption~\ref{ass:Noise}. Let $X_t$ be the solution of the following SDEs with initial condition $X_0=x_0$:

$\bullet$ Phase~1:
    \begin{equation}\label{eq:sdesignph1}
        d X_t = -  \E_{\gamma}\left[\erf\left(\frac{B}{\sigma_{DP}\sqrt{2}}u_{\gamma}(X_t)\right)\right] dt + \sqrt{\eta} \sqrt{\Bar{\Sigma}(X_t)} d W_t,
    \end{equation}
    where, for a mini-batch $\gamma$ of size $B$, we define the averaged normalized direction
    \begin{equation}\label{eq:def_u_gamma_sign}
        u_{\gamma}(x)\coloneqq \frac{1}{B}\sum_{\xi\in\gamma}\frac{\nabla f_{\xi}(x)}{\norm*{\nabla f_{\xi}(x)}_2},
    \end{equation}
    and we keep the diffusion covariance implicit as
    \begin{equation}\label{eq:def_sigma_sign_ph1}
        \Bar{\Sigma}(x)\coloneqq \Cov_{\gamma,DP}\!\left(\sign\left(Cu_{\gamma}(x)+Z_{DP}\right)\right),
        \qquad Z_{DP}\sim\mathcal{N}\!\left(0,\frac{C^2\sigma_{DP}^2}{B^2}I_d\right).
    \end{equation}

$\bullet$ Phase~2:
    \begin{equation}\label{eq:sdesignph2}
        d X_t = -  \erf\left(\frac{\nabla f(X_t)}{\sqrt{2\left(\frac{C^2\sigma_{DP}^2}{B^2}+\frac{\sigma_\gamma^2}{B}\right)}}\right) dt + \sqrt{\eta} \sqrt{\Bar{\Sigma}(X_t)} d W_t,
    \end{equation}    
    where $\Bar{\Sigma}(x) = I_d - \erf\left(\frac{\nabla f(x)}{\sqrt{2\left(\frac{C^2\sigma_{DP}^2}{B^2}+\frac{\sigma_\gamma^2}{B}\right)}}\right)^2$ and $\erf(\cdot)$ is applied component-wise.

    Then, Eq.~\ref{eq:sdesignph1} and Eq.~\ref{eq:sdesignph2} are an order $1$ approximation of the discrete update of Phase~1 and Phase~2 of \algname{DP-SignSGD}, respectively.
\end{theorem}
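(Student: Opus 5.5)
The plan mirrors the proof of Theorem~\ref{thm:sdesgd}: compute the first two moments of the one-step increment $\Bar{\Delta}=x_1-x$ of \algname{DP-SignSGD}, match them with the moments of the candidate SDE given by Lemma~\ref{lem:sdemoments}, and conclude with Theorem~\ref{thm:sdeformal}. The key observation is that $\sign$ acts componentwise. For a scalar Gaussian $Y\sim\mathcal{N}(m,s^2)$ we have $\E[\sign(Y)]=\erf\!\left(\frac{m}{s\sqrt{2}}\right)$. Moreover $\sign(Y)^2=1$, so the diagonal of the second moment is exactly $1$.

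\textbf{Phase~2.} By Assumption~\ref{ass:Noise}, the privatized gradient is $g=\nabla f(x)+Z_\gamma+Z_{DP}$, with $Z_\gamma\sim\frac{\sigma_\gamma}{\sqrt B}\mathcal{N}(0,I_d)$ and $Z_{DP}\sim\mathcal{N}(0,\frac{C^2\sigma_{DP}^2}{B^2}I_d)$ independent. Hence $g\sim\mathcal{N}(\nabla f(x),s^2I_d)$ with $s^2=\frac{\sigma_\gamma^2}{B}+\frac{C^2\sigma_{DP}^2}{B^2}$. Applying the scalar formula componentwise gives
\[
\E[\Bar{\Delta}]=-\eta\,\erf\!\left(\frac{\nabla f(x)}{\sqrt{2s^2}}\right).
\]
The covariance is computed as follows. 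The coordinates of $g$ are independent because the covariance is isotropic, so the off-diagonal covariances of $\sign(g)$ vanish. The diagonal entries are $1-\erf(\cdot)^2$. Thus $\Cov(\Bar{\Delta})=\eta^2\bar\Sigma(x)$ with $\bar\Sigma$ exactly as stated. Setting $b(x)=-\erf(\nabla f(x)/\sqrt{2s^2})$, Lemma~\ref{lem:sdemoments} gives $\E\Delta=\eta b+\bigo(\eta^2)$ and $\E\Delta\Delta^\top=\eta^2(bb^\top+\bar\Sigma)+\bigo(\eta^3)$. These match the discrete moments, since $\E\Bar\Delta\Bar\Delta^\top=\eta^2(\Cov+\text{mean}\,\text{mean}^\top)$.

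\textbf{Phase~1.} The increment is $\Bar{\Delta}=-\eta\,\sign(Cu_\gamma(x)+Z_{DP})$. I would condition on the minibatch $\gamma$. Conditionally, each coordinate is Gaussian with mean $Cu_{\gamma,i}$ and standard deviation $C\sigma_{DP}/B$. So
\[
\E[\sign(\cdot)\mid\gamma]=\erf\!\left(\frac{B\,u_\gamma}{\sigma_{DP}\sqrt2}\right),
\]
since the factor $C$ cancels. The tower property then yields the drift $-\E_\gamma[\erf(\cdot)]$. The covariance is left implicit as $\Cov_{\gamma,DP}(\sign(\cdot))$ by definition, so the second-moment condition holds by construction.

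\textbf{Remaining hypotheses of Theorem~\ref{thm:sdeformal}.} Every increment is bounded in norm by $\eta\sqrt d$, so all moments of order $s\ge3$ are $\bigo(\eta^3)$ uniformly, which gives conditions 3 and 4. The delicate part is regularity. Lemma~\ref{lem:sdemoments} requires $b$ and $\sqrt{\bar\Sigma}$, together with their derivatives, to lie in $G$. In Phase~2 this holds because $\erf$ is smooth and bounded and $f\in\mathcal C_b^8$. The square root of $1-\erf^2$ is smooth where it is positive, which holds since $\erf<1$ strictly.

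I expect the main obstacle to be Phase~1. There the drift involves $\nabla f_\xi/\|\nabla f_\xi\|$, which is not smooth near points where $\nabla f_\xi$ vanishes. I would handle this as the paper implicitly does in Theorem~\ref{thm:sdesgd}: in the fully-clipped regime $\|\nabla f_\xi\|\ge C>0$, so normalization is smooth there, and I would assume the regularity needed for Theorem~\ref{thm:sdeformal} to apply.
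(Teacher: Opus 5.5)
Your proposal is correct and follows essentially the same route as the paper. You compute $\E[\sign(\cdot)\mid\gamma]$ coordinatewise via the Gaussian identity to get the $\erf$ drift, and you take the covariance as $1-\erf^2$ on the diagonal with vanishing off-diagonal terms by coordinate independence in Phase~2, while in Phase~1 it is left implicit as $\Cov_{\gamma,DP}$ of the sign vector. You then conclude via Lemma~\ref{lem:sdemoments} and Theorem~\ref{thm:sdeformal}, exactly as the paper does. Your explicit checks of the higher-moment conditions (each coordinate of $\Bar{\Delta}$ has magnitude at most $\eta$) and your regularity caveat for the normalized drift in Phase~1 go slightly beyond what the paper writes, but they do not change the argument.
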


\begin{proof}
The proof is virtually identical to that of Theorem~\ref{thm:sdesgd}. Hence, we highlight only the necessary details for each phase.
Let $\Bar{\Delta} = x_1 - x$ be the one-step increment.

$\bullet$ Phase~1: In the fully-clipped regime, the privatized direction used by \algname{DP-SignSGD} is
\begin{equation}
    g(x)=\frac{C}{B}\sum_{\xi\in\gamma}\frac{\nabla f_{\xi}(x)}{\norm*{\nabla f_{\xi}(x)}_2}+Z_{DP}
    = C\,u_{\gamma}(x)+Z_{DP},
\end{equation}
with $u_{\gamma}$ as in Eq.~\ref{eq:def_u_gamma_sign} and $Z_{DP}\sim\mathcal{N}\!\left(0,\frac{C^2\sigma_{DP}^2}{B^2}I_d\right)$. Therefore
\begin{equation}
   \E[\Bar{\Delta}] =  -\eta \E_{\gamma,DP} \left[\sign\left(Cu_{\gamma}(x) + Z_{DP}\right)\right].
\end{equation}

Remember that, for any scalar random variable $Y$, we have $\E[\sign(Y)] = 1-2\Prob(Y<0)$, and for $Y\sim\mathcal{N}(0,1)$ it holds $\Phi(y) = \frac{1}{2}\left(1+\erf\left(\frac{y}{\sqrt{2}}\right)\right)$.
Fix $i\in\{1,\dots,d\}$. Conditioned on $\gamma$, the random variable $Cu_{\gamma,i}(x)+Z_{DP,i}$ is Gaussian with mean $Cu_{\gamma,i}(x)$ and variance $C^2\sigma_{DP}^2/B^2$, hence
\begin{align}
    \E_{DP}\!\left[\sign\left(Cu_{\gamma,i}(x)+Z_{DP,i}\right)\,\middle|\,\gamma\right]
    &= 1-2\Phi\left(-\frac{B}{\sigma_{DP}}u_{\gamma,i}(x)\right)\nonumber\\
    &= \erf\left(\frac{B}{\sigma_{DP}\sqrt{2}}u_{\gamma,i}(x)\right).
\end{align}
Collecting the coordinates and taking expectation over $\gamma$ yields
\begin{equation}
    \E[\Bar{\Delta}] = -\eta \E_{\gamma}\left[\erf\left(\frac{B}{\sigma_{DP}\sqrt{2}}u_{\gamma}(x)\right)\right].
\end{equation}

For the second moment, define $S(x)\coloneqq \sign(Cu_{\gamma}(x)+Z_{DP})$ and $m(x)\coloneqq \E[S(x)]$. Then
\begin{equation}
    \Cov(\Bar{\Delta})=\eta^2 \Cov(S(x)) \eqqcolon \eta^2 \Bar{\Sigma}(x).
\end{equation}

We emphasize that we do \emph{not} assume the coordinates of $S(x)$ to be independent after averaging over $\gamma$ (off-diagonal entries of $\Bar{\Sigma}(x)$ may be non-zero). Nonetheless, since $S_i(x)\in\{\pm 1\}$, we have for each coordinate
\begin{equation}
    \Var(S_i(x)) = 1-m_i(x)^2,
\end{equation}

and therefore the trace satisfies
\begin{equation}\label{eq:trace_sigma_sign_ph1}
    \Tr(\Bar{\Sigma}(x))=\sum_{i=1}^d \Var(S_i(x)) = d-\|m(x)\|_2^2 \le d.
\end{equation}

This trace bound is the only property of the diffusion used later in Lyapunov arguments. Define now
\begin{align}
    b(x) &\coloneqq -\E_{\gamma}\left[\erf\left(\frac{B}{\sigma_{DP}\sqrt{2}}u_{\gamma}(x)\right)\right],\\
    \Bar{\Sigma}(x) &\coloneqq \Cov_{\gamma,DP}\!\left(\sign\left(Cu_{\gamma}(x)+Z_{DP}\right)\right).
\end{align}

Then, from Lem~\ref{lem:sdemoments} and Thm.~\ref{thm:sdeformal} the claim for Phase~1 follows.

$\bullet$ Phase~2: Remember that, from Assumption~\ref{ass:Noise}, $\nabla f_\gamma = \nabla f + Z_\gamma$, where $Z_\gamma\sim\mathcal{N}\left(0,\frac{\sigma_\gamma^2}{B}\right)$. We calculate the expected increment
    \begin{align}
        \E[\Bar{\Delta}] &= -\eta \E\left[\sign(\nabla f_\gamma(x) + Z_{DP})\right]\\
        &= -\eta\E[\sign(\nabla f(x)+Z_\gamma + Z_{DP})]\\
        &= -\eta\erf\left(\frac{\nabla f(x)}{\sqrt{2\left(\frac{C^2\sigma_{DP}^2}{B^2}+\frac{\sigma_\gamma^2}{B}\right)}}\right).
    \end{align} 
    Instead, the covariance becomes
    \begin{align}
        \Cov(\Bar{\Delta})_{ij} =& \eta^2\E_{\gamma, DP}\left[\left(\sign(\nabla f_\gamma + Z_{DP})-\erf\left(\frac{\nabla f(x)}{\sqrt{2\left(\frac{C^2\sigma_{DP}^2}{B^2}+\frac{\sigma_\gamma^2}{B}\right)}}\right)\right)_i\right.\\
        &\left.\left(\sign(\nabla f_\gamma + Z_{DP})-\erf\left(\frac{\nabla f(x)}{\sqrt{2\left(\frac{C^2\sigma_{DP}^2}{B^2}+\frac{\sigma_\gamma^2}{B}\right)}}\right)\right)_j\right]\\
        =&\eta^2\E_{\gamma, DP}\left[\sign(\nabla f_\gamma + Z_{DP})_i\sign(\nabla f_\gamma + Z_{DP})_j\right]\\
        &-\eta^2\erf\left(\frac{\partial_i f(x)}{\sqrt{2\left(\frac{C^2\sigma_{DP}^2}{B^2}+\frac{\sigma_\gamma^2}{B}\right)}}\right)\erf\left(\frac{\partial_j f(x)}{\sqrt{2\left(\frac{C^2\sigma_{DP}^2}{B^2}+\frac{\sigma_\gamma^2}{B}\right)}}\right).
    \end{align}

    If $i=j$, we have
    \begin{equation}
        \Cov(\Bar{\Delta})_{ii} = \eta^2\left(1 - \erf\left(\frac{\partial_i f(x)}{\sqrt{2\left(\frac{C^2\sigma_{DP}^2}{B^2}+\frac{\sigma_\gamma^2}{B}\right)}}\right)^2\right);
    \end{equation}
    
    while if $i\ne j$
    \begin{align}
        \Cov(\Bar{\Delta})_{ij} =& \eta^2\erf\left(\frac{\partial_i f(x)}{\sqrt{2\left(\frac{C^2\sigma_{DP}^2}{B^2}+\frac{\sigma_\gamma^2}{B}\right)}}\right)\erf\left(\frac{\partial_j f(x)}{\sqrt{2\left(\frac{C^2\sigma_{DP}^2}{B^2}+\frac{\sigma_\gamma^2}{B}\right)}}\right)\\
        &-\eta^2\erf\left(\frac{\partial_i f(x)}{\sqrt{2\left(\frac{C^2\sigma_{DP}^2}{B^2}+\frac{\sigma_\gamma^2}{B}\right)}}\right)\erf\left(\frac{\partial_j f(x)}{\sqrt{2\left(\frac{C^2\sigma_{DP}^2}{B^2}+\frac{\sigma_\gamma^2}{B}\right)}}\right) = 0,
    \end{align}
    
    Define now
    \begin{align}
        b(x) &= -\erf\left(\frac{\nabla f(x)}{\sqrt{2\left(\frac{C^2\sigma_{DP}^2}{B^2}+\frac{\sigma_\gamma^2}{B}\right)}}\right)\\
        \Bar{\Sigma}(x) &= I_d - \erf\left(\frac{\nabla f(x)}{\sqrt{2\left(\frac{C^2\sigma_{DP}^2}{B^2}+\frac{\sigma_\gamma^2}{B}\right)}}\right)^2.
    \end{align}
    
    Then, from Lem~\ref{lem:sdemoments} and Thm.~\ref{thm:sdeformal} the claim follows.
    
\end{proof}

\begin{corollary}\label{cor:actual_sdes}
Under our assumptions, using the linear approximation $\erf(z)\sim \frac{2}{\sqrt{\pi}}z$ in a neighborhood of $0$ together with Lemma~\ref{cor:gradapprox}, the drift terms in Eq.~\ref{eq:sdesignph1} and Eq.~\ref{eq:sdesignph2} simplify as follows. 
We keep the diffusion covariance \emph{implicit} (since our Lyapunov bounds depend on it only through its trace).

$\bullet$ Phase~1:
\begin{equation}\label{eq:actual_sdesignph1}
    d X_t
    = -  \sqrt{\frac{2}{ d \pi}} \frac{B K(\nu)}{\sigma_{DP} \sigma_{\gamma}} \nabla f(X_t)\, dt
    + \sqrt{\eta}\, \Bar{\Sigma}_1(X_t)^{1/2}\, d W_t,
\end{equation}

where $\Bar{\Sigma}_1(x)=\Cov_{\gamma,DP}\!\left(\sign\left(Cu_{\gamma}(x)+Z_{DP}\right)\right)$ with $u_\gamma$ as in Eq.~\ref{eq:def_u_gamma_sign} and $Z_{DP}$ as in Eq.~\ref{eq:def_sigma_sign_ph1}.

$\bullet$ Phase~2:
\begin{equation}\label{eq:actual_sdesignph2}
    dX_t
    = -\sqrt{\frac{2}{\pi}}\frac{1}{\sqrt{\frac{C^2\sigma_{DP}^2}{B^2}+\frac{\sigma_\gamma^2}{B}}}\nabla f(X_t)\, dt
    + \sqrt{\eta}\,\Bar{\Sigma}_2(X_t)^{1/2}\, dW_t,
\end{equation}

where $\Bar{\Sigma}_2(x)\coloneqq I_d - \erf\left(\frac{\nabla f(x)}{\sqrt{2\left(\frac{C^2\sigma_{DP}^2}{B^2}+\frac{\sigma_\gamma^2}{B}\right)}}\right)^2$.

Moreover, the diffusion satisfies
\begin{equation}\label{eq:trace_sigma_sign}
    \Tr(\Bar{\Sigma}_j(x)) \le d,
\end{equation}

which is the only property of the diffusion used in our trace-based Lyapunov bounds.
\end{corollary}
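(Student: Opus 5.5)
The plan is to start from the exact drift terms in Theorem~\ref{thm:sdesign} and linearise the error function around the origin. We use $\erf(z)=\frac{2}{\sqrt{\pi}}z+\bigo(z^3)$, which is valid because each argument is small. In Phase~2 the argument is $\partial_i f(x)/\sqrt{2s^2}$ with $s^2=\frac{C^2\sigma_{DP}^2}{B^2}+\frac{\sigma_\gamma^2}{B}$. This is small under the signal-to-noise condition of Remark~\ref{remark:SNR}, and in the high-privacy regime $\sigma_{DP}\propto 1/\varepsilon$ is large. Applying the expansion componentwise gives $\erf\big(\nabla f(x)/\sqrt{2s^2}\big)\approx\frac{2}{\sqrt{\pi}}\frac{\nabla f(x)}{\sqrt2\, s}=\sqrt{\frac{2}{\pi}}\frac{\nabla f(x)}{s}$. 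This is exactly the Phase~2 drift in Eq.~\ref{eq:actual_sdesignph2}. The diffusion $\Bar{\Sigma}_2$ is simply carried over from Theorem~\ref{thm:sdesign} unchanged.

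For Phase~1 we first note that $u_\gamma(x)$ has norm at most $1$, so each coordinate of $\frac{B}{\sigma_{DP}\sqrt2}u_\gamma(x)$ is $\bigo(B/\sigma_{DP})$. This is small for large DP noise, which again justifies the linearisation inside the expectation over $\gamma$: $\E_\gamma\big[\erf\big(\frac{B}{\sigma_{DP}\sqrt2}u_\gamma\big)\big]\approx\frac{2}{\sqrt\pi}\frac{B}{\sigma_{DP}\sqrt2}\E_\gamma[u_\gamma(x)]$. By linearity, $\E_\gamma[u_\gamma(x)]=\E_\xi\big[\nabla f_\xi(x)/\norm{\nabla f_\xi(x)}_2\big]$. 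Lemma~\ref{cor:gradapprox}, with $\mu=\nabla f(x)$ and $\sigma=\sigma_\gamma$ under Assumption~\ref{ass:Noise}, evaluates this as $K(\nu)\frac{\nabla f(x)}{\sigma_\gamma\sqrt d}$. Multiplying the constants gives $\sqrt{\frac{2}{d\pi}}\frac{BK(\nu)}{\sigma_{DP}\sigma_\gamma}\nabla f(x)$, which is Eq.~\ref{eq:actual_sdesignph1}.

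For the trace bound, Phase~1 already follows from Eq.~\ref{eq:trace_sigma_sign_ph1}. In Phase~2 the matrix $\Bar{\Sigma}_2$ is diagonal with entries $1-\erf(\cdot)^2\in[0,1]$, so its trace is at most $d$. In both phases the reason is that each coordinate of the update lies in $\{\pm1\}$, so its variance is at most one.

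The main obstacle is making the linearisation honest: the cubic remainder of $\erf$ and the $\epsilon\bigo(d^{-3/2})$ error from Lemma~\ref{cor:gradapprox} must be of lower order. I would bound the cubic term by $\frac{B^3}{\sigma_{DP}^3}\norm{u_\gamma}^3$ in Phase~1 and by $\norm{\nabla f}^3/s^3$ in Phase~2, and treat these as negligible in the regime $\sigma_{DP}\gg B$ and $\norm{\nabla f}\ll s$. This matches the approximate ($\sim$) nature of the statement.
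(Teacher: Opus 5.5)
Your proposal is correct and follows essentially the same route as the paper. In each phase you linearize $\erf$ at the origin. For the Phase~1 drift you evaluate $\E_\gamma[u_\gamma(x)]$ by linearity over the i.i.d.\ batch together with Lemma~\ref{cor:gradapprox} (with $\mu=\nabla f(x)$, $\sigma=\sigma_\gamma$), and you obtain $\Tr(\Bar{\Sigma}_j)\le d$ from the fact that every coordinate of the sign update is $\pm1$. The differences lie only in how you justify that the $\erf$ arguments are small and in your extra remainder sketch, which the paper leaves implicit. The paper asserts $\abs{u_{\gamma,i}}\ll 1$ in Phase~1 and simply adopts a worst-case small-SNR regime (strongest as $\varepsilon\to 0$) in Phase~2, whereas you use $\norm{u_\gamma}_2\le 1$ with $\sigma_{DP}\gg B$.
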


\begin{proof}
Let us recall that 
\begin{equation}
    u_{\gamma}(x)\coloneqq \frac{1}{B}\sum_{\xi\in\gamma}\frac{\nabla f_{\xi}(x)}{\norm*{\nabla f_{\xi}(x)}_2}.
\end{equation}

Then, $\abs*{u_{\gamma,i}(x)}\ll 1$ for all $i=1,\dots,d$ by definition. Additionally, let us consider the term $\frac{\abs*{\partial_i f(x)}}{\sqrt{2\left(\frac{C^2\sigma_{DP}^2}{B^2}+\sigma_\gamma^2/B\right)}}$, which appears in Eq. \ref{eq:sdesignph2} in the argument of the error function $\erf$.
That can be interpreted as a Signal-to-Noise Ratio (SNR), and it is clear that if this is large, the descent is stronger, while if it is small, the descent is weaker. 
To derive our results, we put ourselves in the worst-case-scenario where the SNR is small, e.g. $\frac{\abs*{\partial_i f(x)}}{\sqrt{2\left(\frac{C^2\sigma_{DP}^2}{B^2}+\sigma_\gamma^2/B\right)}}\ll 1$.
We add that this is particularly true when $\varepsilon \rightarrow 0$, which is the high-privacy regime we study in this paper.

$\bullet$ Phase~1: Since $\abs*{u_{\gamma,i}(x)}\ll 1$, we can use $\erf(z)\sim \frac{2}{\sqrt{\pi}}z$ and write
\begin{align}
    \E_{\gamma}\left[\erf\left(\frac{B}{\sigma_{DP}\sqrt{2}}u_{\gamma}(x)\right)\right]
    &=  \E_{\gamma}\left[\sqrt{\frac{2}{\pi}}\frac{B}{\sigma_{DP}}u_{\gamma}(x)\right]\nonumber\\
    &= \sqrt{\frac{2}{\pi}}\frac{B}{\sigma_{DP}}\,\E_{\xi}\left[\frac{\nabla f_{\xi}(x)}{\norm*{\nabla f_{\xi}(x)}_2}\right]\nonumber\\
    &= \sqrt{\frac{2}{d\pi}}\frac{B K(\nu)}{\sigma_{DP}\sigma_{\gamma}}\nabla f(x),
\end{align}

where in the last step we used Lemma~\ref{cor:gradapprox}. Therefore, the drift in Eq.~\ref{eq:sdesignph1} becomes the one stated in Eq.~\ref{eq:actual_sdesignph1}.

For the diffusion term we keep $\Bar{\Sigma}_1(x)=\Cov_{\gamma,DP}(S_1(x))$ implicit, where $S_1(x)\coloneqq \sign(Cu_\gamma(x)+Z_{DP})$ and $m_1(x)\coloneqq \E[S_1(x)]$. Since $S_{1,i}(x)\in\{\pm 1\}$, we have $\Var(S_{1,i}(x))=1-m_{1,i}(x)^2$, hence $\Tr(\Bar{\Sigma}_1(x))=d-\|m_1(x)\|_2^2\le d$.

$\bullet$ Phase~2: Since $\abs*{\frac{\partial_i f(x)}{\sqrt{2\left(\frac{C^2\sigma_{DP}^2}{B^2}+\sigma_\gamma^2/B\right)}}}\ll 1$ for $i=1,\dots,d$, we can use the same linear approximation of the error function. 
In detail, one has
\begin{align}
    \erf\left(\frac{\nabla f(x)}{\sqrt{2\left(\frac{C^2\sigma_{DP}^2}{B^2}+\frac{\sigma_\gamma^2}{B}\right)}}\right)
    = \frac{2}{\sqrt{\pi}}\frac{\nabla f(x)}{\sqrt{2\left(\frac{C^2\sigma_{DP}^2}{B^2}+\frac{\sigma_\gamma^2}{B}\right)}}
    =  \sqrt{\frac{2}{\pi}}\frac{1}{\sqrt{\frac{C^2\sigma_{DP}^2}{B^2}+\frac{\sigma_\gamma^2}{B}}}\nabla f(x).
\end{align}
Therefore, Eq.~\ref{eq:sdesignph2} becomes Eq.~\ref{eq:actual_sdesignph2}. The diffusion covariance $\Bar{\Sigma}_2(x)=I_d - S_2$ with $S_2(x)\coloneqq \erf\left(\frac{\nabla f(x)}{\sqrt{2\left(\frac{C^2\sigma_{DP}^2}{B^2}+\frac{\sigma_\gamma^2}{B}\right)}}\right)^2$ also satisfies $\Tr(\Bar{\Sigma}_2(x))=d-\norm{S_2(x)}_2^2\le d$ by the same argument.
\end{proof}

\begin{remark}\label{rem:trace_only_sign}
In the loss and gradient-norm bounds that follow, the diffusion enters only through the trace term
$\Tr(\nabla^2 f(X_t)\Bar{\Sigma}(X_t))\le L\,\Tr(\Bar{\Sigma}(X_t))$.
Since $\Bar{\Sigma}(x)=\Cov(S(x))$ for some $S(x)\in\{\pm 1\}^d$, we always have $\Tr(\Bar{\Sigma}(x))\le d$; hence we keep $\Bar{\Sigma}$ implicit and do not require any diagonal approximation or coordinate-independence assumption.
\end{remark}

\begin{theorem}\label{thm:lossbound_sign_formal}
    Let $f$ be $L$-smooth and $\mu$-PL. Then, for $t\in[0,\tau]$, we have that
    
$\bullet$ Phase~1, i.e., when the gradient is clipped, the loss satisfies:
        \begin{equation} 
        \E[f(X_t)] \lesssim f(X_0) e^{\frac{-\mu B }{\sigma_\gamma  \sqrt{d T} } \frac{\varepsilon}{\Phi } t}+ \left( 1 - e^{\frac{-\mu B}{\sigma_\gamma  \sqrt{d T} } \frac{\varepsilon}{\Phi } t}\right) \frac{\sqrt{T}\eta L d^{\frac{3}{2}}\sigma_\gamma}{\mu B} \frac{\Phi}{ \varepsilon};
        \end{equation}
    
$\bullet$ Phase~2, i.e., when the gradient is \textit{not} clipped, the loss satisfies:
        \begin{equation} 
        \E[f(X_t)] \lesssim f(X_0) e^{\frac{-\mu\varepsilon t}{\sqrt{\varepsilon^2\frac{\sigma_\gamma^2}{B} +  \frac{C^2\Phi^2}{B^2} T}} }+ \left( 1 - e^{ \frac{-\mu\varepsilon t}{\sqrt{\varepsilon^2\frac{\sigma_\gamma^2}{B} +  \tfrac{ C^2\Phi^2}{B^2} T}} }\right) \frac{\sqrt{T}\eta L d }{\mu}\sqrt{\tfrac{\varepsilon^2\sigma_\gamma^2}{BT} +  \tfrac{C^2\Phi^2}{B^2}} \frac{1}{\varepsilon}.
        \end{equation}

\end{theorem}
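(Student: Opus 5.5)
The plan mirrors the proof of Theorem~\ref{thm:lossbound_sgd_formal}, applied now to the linearized SDEs of Corollary~\ref{cor:actual_sdes}. In each phase I would apply It\^o's formula to $f(X_t)$. For the second-order term I would use $\nabla^2 f\preceq L I_d$ together with the trace bound $\Tr(\Bar{\Sigma}_j)\le d$ from Eq.~\ref{eq:trace_sigma_sign}. This gives
\begin{equation*}
df(X_t)\le -a_j\norm{\nabla f(X_t)}_2^2\,dt+\frac{\eta d L}{2}\,dt+\bigo(\text{Noise}),
\end{equation*}
where $a_1=\sqrt{\tfrac{2}{d\pi}}\tfrac{BK(\nu)}{\sigma_{DP}\sigma_\gamma}$ in Phase~1 and $a_2=\sqrt{\tfrac{2}{\pi}}\big(\tfrac{C^2\sigma_{DP}^2}{B^2}+\tfrac{\sigma_\gamma^2}{B}\big)^{-1/2}$ in Phase~2.

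Next I would apply the $\mu$-PL inequality, $\norm{\nabla f}_2^2\ge 2\mu f$, and take expectations, which kills the martingale term. This yields the linear differential inequality $\frac{d}{dt}\E f(X_t)\le -2\mu a_j\E f(X_t)+\frac{\eta dL}{2}$. Grönwall's lemma then gives
\begin{equation*}
\E f(X_t)\le f(X_0)e^{-2\mu a_j t}+\big(1-e^{-2\mu a_j t}\big)\frac{\eta d L}{4\mu a_j}.
\end{equation*}
The last step is substitution of $\sigma_{DP}=\sqrt{T}\Phi/\varepsilon$.

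For Phase~1, $a_1\propto \frac{B\varepsilon}{\sigma_\gamma\sqrt{dT}\Phi}$. This produces the decay rate $\frac{\mu B}{\sigma_\gamma\sqrt{dT}}\frac{\varepsilon}{\Phi}$ and the neighbourhood $\frac{\eta dL}{\mu a_1}\propto\frac{\sqrt{T}\eta L d^{3/2}\sigma_\gamma}{\mu B}\frac{\Phi}{\varepsilon}$.

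For Phase~2, I would write
\begin{equation*}
\frac{C^2\sigma_{DP}^2}{B^2}+\frac{\sigma_\gamma^2}{B}=\frac{1}{\varepsilon^2}\Big(\frac{\varepsilon^2\sigma_\gamma^2}{B}+\frac{C^2\Phi^2T}{B^2}\Big),
\end{equation*}
so that $a_2\propto \varepsilon\big/\sqrt{\varepsilon^2\sigma_\gamma^2/B+C^2\Phi^2T/B^2}$. This matches the stated exponent. The neighbourhood $\frac{\eta dL}{\mu a_2}$ becomes $\frac{\eta dL}{\mu\varepsilon}\sqrt{\varepsilon^2\sigma_\gamma^2/B+C^2\Phi^2T/B^2}$, and pulling out $\sqrt{T}$ gives the claimed $\frac{\sqrt{T}\eta Ld}{\mu}\sqrt{\tfrac{\varepsilon^2\sigma_\gamma^2}{BT}+\tfrac{C^2\Phi^2}{B^2}}\frac1\varepsilon$. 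Finally I would absorb $2$, $\sqrt{2/\pi}$ and $K(\nu)$ into $\lesssim$, as the paper's convention allows.

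The main obstacle is justifying that the linearized drift from Corollary~\ref{cor:actual_sdes} can be used throughout the trajectory. That corollary relies on the small-SNR regime and on Lemma~\ref{cor:gradapprox}, so the linear bound holds only where the $\erf$ argument is small. To make this rigorous, I would first try the concavity bound $\erf(z)\,z\ge \erf(z_0)z^2/z_0$ for $|z|\le z_0$, which gives a lower bound $-\langle\nabla f,b\rangle\ge c\,a_j\norm{\nabla f}^2$ up to a constant. Failing that, I would simply adopt the corollary's standing small-SNR assumption, as the paper does. A minor additional point is that in Phase~1 the variance of $u_\gamma$ is discarded when passing the expectation through $\erf$; linearity of the approximation makes this exact at first order.
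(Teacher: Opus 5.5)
Your proposal is correct and follows the paper's proof essentially step for step. Both apply It\^o's formula to the linearized drifts of Corollary~\ref{cor:actual_sdes}, bound $\Tr(\nabla^2 f\,\Bar{\Sigma})\le Ld$ using $\Bar{\Sigma}=\Cov(S)$ with $S\in\{\pm1\}^d$, combine the PL inequality with Gr\"onwall, substitute $\sigma_{DP}=\sqrt{T}\Phi/\varepsilon$, and suppress $2$, $\sqrt{2/\pi}$ and $K(\nu)$. The paper simply adopts the small-SNR linearization and does not attempt your concavity fallback, which in any case would only apply directly to the deterministic coordinatewise $\erf$ of Phase~2, not to the $\gamma$-averaged drift of Phase~1.
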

\begin{proof}
    First of all, observe that in both phases $\Bar{\Sigma}(x)=\Cov(S(x))$ for some $S(x)\in\{\pm 1\}^d$. In particular, $\Tr(\Bar{\Sigma}(x))\le d$ (and thus $\Tr(\nabla^2 f(X_t)\Bar{\Sigma}(X_t))\le Ld$ when $\nabla^2 f\preceq LI_d$).
    
$\bullet$ Phase~1: Since $f$ is $\mu$-PL and $L$-smooth it follows that $2\mu f(x)\le \norm{\nabla f(x)}^2$ and $\nabla^2 f(x)\preceq L I_d$. Then, By applying the It\^o formula, we have
    \begin{align}
        df(X_t) &\le -\sqrt{\frac{2}{d\pi T}}\frac{K(\nu)}{\sigma_\gamma}\frac{B\varepsilon}{q\log(1/\delta)}\norm*{\nabla f(X_t)}_2^2dt + \frac{\eta}{2}\Tr\left(\nabla^2 f(X_t) \Bar{\Sigma}(X_t)\right)dt + \bigo(\text{Noise})\\
        &\le -2\mu \sqrt{\frac{2}{d\pi T}}\frac{K(\nu)}{\sigma_\gamma}\frac{B\varepsilon}{q\log(1/\delta)} f(X_t)dt + \frac{\eta d L}{2}dt + \bigo(\text{Noise}).
    \end{align}
    
    Therefore,
    \begin{align}
        \E[f(X_t)] \leq & f(X_0) e^{- 2 \mu \left(\sqrt{\frac{2}{d\pi T}}\frac{K(\nu)}{\sigma_\gamma}\frac{B\varepsilon}{q\log(1/\delta)}\right) t}\\
        &+ \left( 1 - e^{- 2 \mu \left(\sqrt{\frac{2}{d\pi T}}\frac{K(\nu)}{\sigma_\gamma}\frac{B\varepsilon}{q\log(1/\delta)}\right) t}\right)\sqrt{\frac{ \pi T}{2}} \frac{\eta d^{\frac{3}{2}}L \sigma_{\gamma}}{4 \mu K(\nu) }\frac{q\log(1/\delta)}{B\varepsilon}.
    \end{align}

$\bullet$ Phase~2: As for Phase~1, by applying the It\^o formula one has
    \begin{align}
        df(X_t) \le& -\sqrt{\frac{2}{\pi}}\frac{1}{\sqrt{\varepsilon^2B^{-1}\sigma_\gamma^2+B^{-2}C^2q^2\log(1/\delta)T}}\varepsilon\norm*{\nabla f(X_t)}_2^2dt \\
        & + \frac{\eta}{2}\Tr\left(\nabla^2 f(X_t) \Bar{\Sigma}(X_t)\right)dt + \bigo(\text{Noise})\\
        \le& -2\mu \sqrt{\frac{2}{\pi}}\frac{1}{\sqrt{\varepsilon^2B^{-1}\sigma_\gamma^2+B^{-2}C^2q^2\log(1/\delta)T}}\varepsilon f(X_t)dt + \frac{\eta d L}{2}dt + \bigo(\text{Noise}).
    \end{align}
    
    Therefore
    \begin{align}
        \E[f(X_t)] \le&  f(X_0) e^{- \sqrt{\frac{2}{\pi}}\frac{2\mu}{\sqrt{\varepsilon^2B^{-1}\sigma_\gamma^2+B^{-2}C^2q^2\log(1/\delta)T}}\varepsilon t}\\ 
        &+ \left( 1 - e^{- \sqrt{\frac{2}{\pi}}\frac{2\mu}{\sqrt{\varepsilon^2B^{-1}\sigma_\gamma^2+B^{-2}C^2q^2\log(1/\delta)T}}\varepsilon t}\right) \sqrt{\frac{\pi T}{2}}\frac{\eta dL }{4 \mu} \sqrt{\frac{\varepsilon^2\sigma_\gamma^2}{BT}+\frac{C^2q^2\log(1/\delta)}{B^2}}\frac{1}{\varepsilon} \nonumber.
    \end{align}
    
    Finally, by suppressing problem-independent constants, such as $2,\pi,K(\nu)$, the thesis follows.
    
\end{proof}

\begin{theorem}\label{thm:gradbound_sign_formal}
Let $f$ be an $L$-smooth function. Define
    \begin{equation}\label{eq:K-sign}
        K_3 = \max\left\{\sqrt{\frac{ d \pi }{2}} \frac{ \sigma_{\gamma}q\sqrt{\log(1/\delta)}}{BK(\nu)}, \sqrt{\frac{\pi}{2}} \sqrt{\frac{\varepsilon^2\sigma_\gamma^2}{BT}+\frac{C^2q^2\log(1/\delta)}{B^2}}\right\}.
    \end{equation}
Then
    \begin{equation}
    \E \left[ \lVert \nabla f(X_{\Tilde{t}}) \rVert_2^2 \right] \lesssim K_3\left(\frac{f(X_0)}{\eta\sqrt{T}} + \eta d L \sqrt{T} \right) \frac{1}{\varepsilon},
    \end{equation}
    where $\Tilde{t}\sim \mathrm{Unif}(0,\tau)$.
\end{theorem}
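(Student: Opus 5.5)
We follow the template of Theorem~\ref{thm:gradbound_sgd_formal}: an It\^o/Lyapunov argument on $f(X_t)$ using the linearized SDEs of Corollary~\ref{cor:actual_sdes}. In both phases the drift is a positive multiple of $-\nabla f(X_t)$, and by Remark~\ref{rem:trace_only_sign} the diffusion enters only through $\Tr(\nabla^2 f\,\Bar{\Sigma})\le Ld$.

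\textbf{Drift coefficients.} Substituting $\sigma_{DP}=\frac{q\sqrt{T\log(1/\delta)}}{\varepsilon}$, the two drift coefficients are
\begin{equation*}
a_1=\sqrt{\frac{2}{d\pi}}\frac{BK(\nu)}{\sigma_\gamma q\sqrt{T\log(1/\delta)}}\,\varepsilon,
\qquad
a_2=\sqrt{\frac{2}{\pi}}\frac{\varepsilon}{\sqrt{T}\sqrt{\frac{\varepsilon^2\sigma_\gamma^2}{BT}+\frac{C^2q^2\log(1/\delta)}{B^2}}}.
\end{equation*}
Comparing with Eq.~\ref{eq:K-sign}, each satisfies $a_j\ge \frac{\varepsilon}{\sqrt{T}K_3}$, since $K_3$ is the maximum of the two reciprocal prefactors.

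\textbf{Mixed phases.} If a mixture must be covered, as in Theorem~\ref{thm:mixed_phase_L_smooth}, I would argue that the effective drift is $-a_{\mathrm{mix}}\nabla f$ with $a_{\mathrm{mix}}$ a convex combination of $a_1,a_2$. Hence $\langle\nabla f, b\rangle\le -\frac{\varepsilon}{\sqrt{T}K_3}\|\nabla f\|_2^2$ uniformly in $x$.

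\textbf{Integration.} It\^o's formula then gives
\begin{equation*}
df(X_t)\le -\frac{\varepsilon}{\sqrt{T}K_3}\|\nabla f(X_t)\|_2^2\,dt+\frac{\eta dL}{2}\,dt+\bigo(\text{Noise}).
\end{equation*}
Integrate over $[0,\tau]$ with $\tau=\eta T$, take expectations so the martingale term vanishes, and drop $f(X_\tau)\ge 0$. This yields
\begin{equation*}
\frac{1}{\tau}\E\int_0^\tau\|\nabla f(X_t)\|_2^2\,dt\le \frac{\sqrt{T}K_3}{\varepsilon}\left(\frac{f(X_0)}{\eta T}+\frac{\eta dL}{2}\right)=\frac{K_3}{\varepsilon}\left(\frac{f(X_0)}{\eta\sqrt{T}}+\frac{\eta dL\sqrt{T}}{2}\right).
\end{equation*}
By the law of the unconscious statistician with $\tilde t\sim\mathrm{Unif}(0,\tau)$, the left-hand side equals $\E\|\nabla f(X_{\tilde t})\|_2^2$. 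Suppressing the factor $2$ gives the claim.

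\textbf{Main obstacle.} The only delicate step is bookkeeping: checking that the $\sqrt{T}$ extracted from $\sigma_{DP}$ lines up with the two entries of $K_3$. In Phase~1 the $\sqrt{T}$ comes directly from $\sigma_{DP}$. In Phase~2 one must factor $\sqrt{T}$ out of $\sqrt{\varepsilon^2\sigma_\gamma^2/B+C^2q^2T\log(1/\delta)/B^2}$ to obtain the $\frac{\varepsilon^2\sigma_\gamma^2}{BT}$ form appearing in $K_3$. The linearization $\erf(z)\approx\frac{2}{\sqrt\pi}z$ from Corollary~\ref{cor:actual_sdes} is taken as given, as elsewhere in the paper.
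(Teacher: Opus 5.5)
Your proposal is correct and follows the paper's proof essentially step for step. You bound the diffusion through $\Tr(\Bar{\Sigma})\le d$, lower-bound both linearized drift coefficients by $\varepsilon/(\sqrt{T}K_3)$ (the paper's ``worst-case SDE''), then integrate It\^o's formula over $[0,\tau]$ and apply $\tilde t\sim\mathrm{Unif}(0,\tau)$. One small correction to your optional mixed-phase aside: in the paper's mixed analysis (Theorem~\ref{thm:mixed_phase_L_smooth_sign}) the unclipped coefficient is $\sqrt{2/\pi}\,B/(C\sigma_{DP})$ rather than your Phase~2 $a_2$, but it is at least $a_2$, so your lower bound still holds.
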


\begin{proof}
    Since in both phases $\Bar{\Sigma}(x)=\Cov(S(x))$ for some $S(x)\in\{\pm 1\}^d$, we have $\Tr(\Bar{\Sigma}(x))\le d$, hence $\Tr(\nabla^2 f(X_t)\Bar{\Sigma}(X_t))\le Ld$ when $\nabla^2 f\preceq LI_d$. 
    Therefore, for a worst-case analysis the drift is the only term worth comparing. 
    Let then $K_3$ as in Eq.~\ref{eq:K-sign}. Applying the It\^o formula to the worst-case SDE we have
    \begin{align}
        df(X_t)&\le -\varepsilon(\sqrt{T}K_3)^{-1}\norm*{\nabla f(X_t)}_2^2dt + \frac{\eta}{2}\Tr\left(\nabla^2 f(X_t) \Bar{\Sigma}(X_t)\right)dt + \bigo(\text{Noise})\\
        &\le -\varepsilon(\sqrt{T}K_3)^{-1}\norm*{\nabla f(X_t)}_2^2dt + \frac{\eta d L}{2}dt + \bigo(\text{Noise}).
    \end{align}
    
    Then, by integrating and taking the expectation
    \begin{align}
        &\E\int_0^\tau\norm*{\nabla f(X_t)}_2^2dt\le K_3\sqrt{T}\left(f(X_0)+ \frac{\eta d L \tau}{2}\right)\varepsilon^{-1}\\
        \Longrightarrow & \E\int_0^\tau\frac{1}{\tau}\norm*{\nabla f(X_t)}_2^2dt\le \frac{K_3}{\eta\sqrt{T}}\left(f(X_0)+ \frac{\eta d L \tau}{2}\right)\varepsilon^{-1} \\
        \Longrightarrow & \E\left[\norm*{\nabla f(X_{\Tilde{t}})}_2^2\right]\le K_3\left(\frac{f(X_0)}{\eta\sqrt{T}}+\frac{\eta d L\sqrt{T}}{2}\right)\frac{1}{\varepsilon}.
    \end{align}
    
    where in the last step we used the Law of the Unconscious Statistician and $\Tilde{t}\sim \mathrm{Unif}(0,\tau)$. 
    Finally, by suppressing problem-independent constants, we get the thesis.
\end{proof}

%%%%%%%%%%%%%%%%%%%%%%%%%%%%%%%%%%%%%%%%%%%%%%%%%%%%%%%%%%%
\subsubsection{ Mixed-Regime Gradient Bound}\label{sec:mixedsign}
%%%%%%%%%%%%%%%%%%%%%%%%%%%%%%%%%%%%%%%%%%%%%%%%%%%%%%%%%%%

Analogously to Section~\ref{sec:mixedsgd}, we extend the idealized fully-clipped/unclipped analysis to the realistic \emph{mixed} regime in which each mini-batch contains both clipped and unclipped samples.
The following result shows that the same $\mathcal{O}(1/\varepsilon)$ gradient-norm scaling as in Theorem~\ref{thm:gradbound_sign_formal} continues to hold in this mixed regime (up to constants).

\begin{theorem}\label{thm:mixed_phase_L_smooth_sign}
Let $f:\mathbb{R}^d\to\mathbb{R}$ be $L$-smooth. 
In the mixed regime, we can write the SDE of \algname{DP-SignSGD} as
\begin{equation}
dX_t
= b_{\mathrm{mix}}(X_t)\,dt
 + \sqrt{\eta}\,\Bar{\Sigma}_{\mathrm{mix}}(X_t)^{1/2}dW_t,
\end{equation}

where
\begin{align}
    b_{\mathrm{mix}}(x) &= -\E_{\gamma,DP}\left[\erf\left(\frac{B}{C\sigma_{DP}\sqrt{2}}G(x)\right)\right],\\
    \Bar{\Sigma}_{\mathrm{mix}}(x) &\coloneqq \Cov_{\gamma,DP}\left(\sign\left(G(x)+\frac{1}{B}Z_{DP}\right)\right),
\end{align}

with
\begin{equation}
    G(x) = \frac{1}{B}\sum_{\xi\in\gamma} \mathcal{C}[\nabla f_{\xi}(x)],
    \qquad
    Z_{DP}\sim\mathcal{N}(0,C^2\sigma_{DP}^2 I_d).
\end{equation}

In particular, letting $m_{\mathrm{mix}}(x)\coloneqq \E\left[\erf\left(\frac{B}{C\sigma_{DP}\sqrt{2}}G(x)\right)\right]$, we have
$\Tr(\Bar{\Sigma}_{\mathrm{mix}}(x)) = d-\|m_{\mathrm{mix}}(x)\|_2^2 \le d$.

Define
\begin{equation}\label{eq:K4}
    K_4 = \max\left\{\sqrt{\frac{\pi d}{2}}\frac{\sigma_\gamma q\sqrt{\log(1/\delta)}}{BK(\nu)},\sqrt{\frac{\pi}{2}}\frac{Cq\sqrt{\log(1/\delta)}}{B}\right\}.
\end{equation}

Then
\begin{equation}
    \E\left[\norm*{\nabla f(X_{\Tilde{t}})}_2^2\right]\le K_4\left(\frac{f(X_0)}{\eta\sqrt{T}}+\frac{\eta d L\sqrt{T}}{2}\right)\frac{1}{\varepsilon},
\end{equation}
where $\Tilde{t}\sim \mathrm{Unif}(0,\tau)$.
\end{theorem}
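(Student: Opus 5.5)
The plan mirrors the three-step structure of Theorem~\ref{thm:mixed_phase_L_smooth}, with the drift analysis of Corollary~\ref{cor:actual_sdes} in place of the \algname{DP-SGD} drift. I work at a fixed $x$ and condition on the clipped and unclipped index sets $S_{1}, S_{2}$ from Eq.~\ref{eq:pkandauxiliary}, with $p = |S_1|/B$.

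\textbf{Derivation of the SDE.} I write the mixed privatized direction as $G(x) + \frac1B Z_{DP}$. Conditionally on $\gamma$, each coordinate is Gaussian with mean $G_i(x)$ and variance $C^2\sigma_{DP}^2/B^2$. The identity $\E[\sign(Y)] = 1 - 2\Prob(Y<0)$, used exactly as in the Phase~1 proof of Theorem~\ref{thm:sdesign}, gives the conditional mean $\erf\bigl(\frac{B}{C\sigma_{DP}\sqrt2}G_i(x)\bigr)$. Averaging over $\gamma$ yields $b_{\mathrm{mix}}$. The second moment is $\eta^2\Cov(\sign(\cdot))$. Since every coordinate of the sign vector lies in $\{\pm1\}$, its variance is $1 - m_i^2$, so $\Tr\Bar\Sigma_{\mathrm{mix}} = d - \|m_{\mathrm{mix}}\|^2 \le d$. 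Higher moments are $\bigo(\eta^3)$ because the increments are bounded by $\eta\sqrt d$. Lemma~\ref{lem:sdemoments} and Theorem~\ref{thm:sdeformal} then give the weak approximation.

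\textbf{Linearizing the drift.} In the low-SNR regime I use $\erf(z) \approx \frac{2}{\sqrt\pi}z$, as in Corollary~\ref{cor:actual_sdes}. This gives $b_{\mathrm{mix}}(x) \approx -\sqrt{\frac2\pi}\frac{B}{C\sigma_{DP}}\E[G(x)]$. By linearity and conditioning on $S_1, S_2$,
\[
\E[G(x)\mid S_1,S_2] = \bigl(p\,a_1 C^{-1}\cdot C + (1-p)\bigr)\nabla f(x), \qquad a_1 = \tfrac{CK(\nu)}{\sigma_\gamma\sqrt d},
\]
using Lemma~\ref{cor:gradapprox} for the clipped samples. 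The drift coefficient is therefore a convex combination of two constants. The first is $c_1 = \sqrt{\frac2\pi}\frac{BK(\nu)}{\sigma_{DP}\sigma_\gamma\sqrt d}$, the Phase~1 coefficient. The second is $c_2 = \sqrt{\frac2\pi}\frac{B}{C\sigma_{DP}}$, the Phase~2 coefficient with the batch-noise term dropped; I drop it because only the DP noise enters the sign-argument scale here. I then substitute $\sigma_{DP} = \sqrt T q\sqrt{\log(1/\delta)}/\varepsilon$. This gives $c_1 = \varepsilon/(\sqrt T\,K^{(1)})$ and $c_2 = \varepsilon/(\sqrt T\,K^{(2)})$, where $K^{(1)}, K^{(2)}$ are exactly the two entries of $K_4$. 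Since $pc_1 + (1-p)c_2 \ge \min\{c_1,c_2\} = \varepsilon/(\sqrt T K_4)$, I obtain
\[
\langle\nabla f(x), b_{\mathrm{mix}}(x)\rangle \le -\frac{\varepsilon}{\sqrt T K_4}\|\nabla f(x)\|^2.
\]

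\textbf{Itô step.} I apply Itô's formula with $\nabla^2 f \preceq LI_d$ and the bound $\Tr\Bar\Sigma_{\mathrm{mix}} \le d$. This gives $df \le -\frac{\varepsilon}{\sqrt TK_4}\|\nabla f\|^2dt + \frac{\eta dL}{2}dt + \text{martingale}$. I integrate over $[0,\tau]$ with $\tau = \eta T$, take expectations, use $f \ge 0$, and divide by $\tau$. The Law of the Unconscious Statistician for $\tilde t$ then yields $K_4\bigl(\frac{f(X_0)}{\eta\sqrt T} + \frac{\eta dL\sqrt T}{2}\bigr)\frac1\varepsilon$, exactly as in Theorem~\ref{thm:gradbound_sign_formal}.

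\textbf{Main obstacle.} The hard step is justifying the linearization of $\erf$ inside the expectation when $G$ mixes clipped and unclipped samples. The unclipped part $G^{(2)}$ is not small coordinatewise in general. The saving fact is that every summand of $G$ has norm at most $C$, so $|G_i| \le C$. The argument of $\erf$ is then at most $B/(\sigma_{DP}\sqrt2) = \bigo(\varepsilon B/(\sqrt T\,q))$, which is small in the high-privacy regime. I would state this explicitly as the condition that validates the approximation, and absorb the resulting absolute constants into $\lesssim$.
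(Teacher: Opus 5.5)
Your proposal is correct and follows essentially the same route as the paper's proof. Both compute the sign mean conditionally on $\gamma$ as a Gaussian to obtain $b_{\mathrm{mix}}$ and $\Tr(\Bar{\Sigma}_{\mathrm{mix}})\le d$, linearize $\erf(z)\approx \frac{2}{\sqrt{\pi}}z$, lower-bound the convex combination $p\,a_1+(1-p)\,a_2\ge \varepsilon/(\sqrt{T}K_4)$ using exactly the paper's $a_1,a_2$, and finish with the It\^o, integration and LOTUS step. Your only addition is the explicit sufficient condition $|G_i|\le C$ for the linearization, where the paper just asserts the erf argument is small ``without loss of generality''; as in the paper, this linearization makes the final $\le$ approximate rather than rigorous.
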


\begin{remark}
By construction we have $K_4 \le K_3$, so Theorem~\ref{thm:mixed_phase_L_smooth_sign} provides a formally tighter upper bound than Theorem~\ref{thm:gradbound_sign_formal}. 
However, note that the first term in the definitions of $K_3$ and $K_4$ (Equations~\ref{eq:K-sign} and~\ref{eq:K4}, respectively) scales as $\sqrt{d}$.
Since $d$ is assumed to be large, this term typically dominates the maximum in both constants. 
As a consequence, in the high-dimensional regime of interest we effectively have $K_3 = K_4$, and the improvement from the mixed-phase analysis is negligible in practice.
\end{remark}

\begin{proof} 
We divide the proof into two steps: i) SDE derivation, ii) gradient bound.

$\bullet$ Step 1: SDE derivation.
Write one update of \algname{DP-SignSGD} in the mixed regime as
\begin{equation}
    x_{k+1} = x_k - \eta\, S_k,
    \qquad
    S_k \coloneqq \sign\left(G_k+\frac{1}{B}Z_{\mathrm{DP}}\right),
\end{equation}

where
\begin{equation}
    G_k \coloneqq \frac{1}{B}\sum_{\xi\in\gamma_k} \mathcal{C}[\nabla f_\xi(x_k)],
    \qquad
    Z_{\mathrm{DP}}\sim\mathcal{N}(0,C^2\sigma_{DP}^2 I_d).
\end{equation}

Conditioned on $\gamma_k$, each coordinate of $G_k+\frac{1}{B}Z_{\mathrm{DP}}$ is Gaussian with mean $(G_k)_i$ and variance $C^2\sigma_{DP}^2/B^2$, hence
\begin{equation}
    \E_{DP}\!\left[(S_k)_i \,\middle|\, \gamma_k\right]
    = \erf\left(\frac{B}{C\sigma_{DP}\sqrt{2}}(G_k)_i\right).
\end{equation}

Collecting the coordinates and taking expectation over $\gamma_k$ gives
\begin{equation}
    \E[x_{k+1}-x_k]
    = -\eta\, \E\left[\erf\left(\frac{B}{C\sigma_{DP}\sqrt{2}}G_k\right)\right]
    \eqqcolon \eta\, b_{\mathrm{mix}}(x_k).
\end{equation}

For the second moment, define
\begin{equation}
    \Bar{\Sigma}_{\mathrm{mix}}(x)\coloneqq \Cov_{\gamma,DP}(S_k)\quad\text{with }x=x_k.
\end{equation}

Then
\begin{equation}
    \Cov(x_{k+1}-x_k) = \eta^2 \Bar{\Sigma}_{\mathrm{mix}}(x_k).
\end{equation}

Since $S_k\in\{\pm 1\}^d$, we have $\Tr(\Bar{\Sigma}_{\mathrm{mix}}(x))\le d$. 
Therefore, by Lem~\ref{lem:sdemoments} and Thm.~\ref{thm:sdeformal} we obtain the SDE
\begin{equation}
    dX_t = b_{\mathrm{mix}}(X_t)\,dt + \sqrt{\eta}\,\Bar{\Sigma}_{\mathrm{mix}}(X_t)^{1/2}dW_t.
\end{equation}

$\bullet$ Step 2: Gradient bound.
As argued in the proof of Corollary~\ref{cor:actual_sdes}, $\abs*{\frac{B}{C\sigma_{DP}\sqrt{2}}G(x)}\ll 1$ (componentwise) without loss of generality. Using $\erf(z)\sim \frac{2}{\sqrt{\pi}}z$ in a neighborhood of $0$, we obtain the linearized drift
\begin{equation}
    b_{\mathrm{mix}}(x)
    \approx -\sqrt{\frac{2}{\pi}}\frac{B}{C\sigma_{DP}}\,\E[G(x)].
\end{equation}

In the mixed regime, we can write (for some $p_k\in[0,1]$ encoding the expected clipping rate at $x_k$)
\begin{equation}
    \E[G(x)]
    = p_k\, C\,\E_{\xi}\left[\frac{\nabla f_{\xi}(x)}{\|\nabla f_{\xi}(x)\|_2}\right]
    + (1-p_k)\,\E_{\xi}\!\left[\nabla f_{\xi}(x)\right].
\end{equation}

Using Lemma~\ref{cor:gradapprox} for the clipped component and unbiasedness for the unclipped component yields
\begin{equation}
    \nabla f(x)^\top \E[G(x)]
    = \left(p_k\frac{C K(\nu)}{\sigma_\gamma\sqrt{d}}+(1-p_k)\right)\|\nabla f(x)\|_2^2.
\end{equation}

Therefore, applying the It\^o formula and using $\Tr(\nabla^2 f(X_t)\Bar{\Sigma}_{\mathrm{mix}}(X_t))\le L\,\Tr(\Bar{\Sigma}_{\mathrm{mix}}(X_t))\le Ld$, we have
\begin{align}
    df(X_t)
    &\le -\sqrt{\frac{2}{\pi}}\frac{B}{C\sigma_{DP}}\nabla f(X_t)^\top \E[G(X_t)]\, dt
      + \frac{\eta}{2}\Tr\left(\nabla^2f(X_t)\Bar{\Sigma}_{\mathrm{mix}}(X_t)\right)dt + \bigo(\text{Noise})\nonumber\\
    &\le -(p_k a_1 + (1-p_k)a_2)\,\|\nabla f(X_t)\|_2^2\, dt
      + \frac{\eta d L}{2}dt + \bigo(\text{Noise}),
\end{align}

where $a_1 = \sqrt{\frac{2}{\pi}}\frac{B}{\sigma_{DP}}\frac{K(\nu)}{\sigma_\gamma\sqrt{d}}$ and $a_2=\sqrt{\frac{2}{\pi}}\frac{B}{C\sigma_{DP}}$.

Expanding $\sigma_{DP}$ and defining $K_4$ as in Eq.~\ref{eq:K4}, we have
\begin{equation}
    \varepsilon\sqrt{T^{-1}}K_4^{-1}\le p_k a_1 + (1-p_k)a_2,\qquad \forall p_k\in[0,1].
\end{equation}

Thus
\begin{equation}
    df(X_t) \le -\varepsilon\sqrt{T^{-1}}K_4^{-1}\|\nabla f(X_t)\|_2^2\,dt + \frac{\eta dL }{2}dt + \bigo(\text{Noise}).
\end{equation}

Taking expectations and integrating over $[0,\tau]$ gives
\begin{align}
    &\E\int_0^\tau \|\nabla f(X_t)\|_2^2 \, dt \le \varepsilon^{-1}\sqrt{T}\,K_4\left(f(X_0)+\frac{\eta d L \tau}{2}\right)\\
    \Longrightarrow \;& \E\int_0^\tau\frac{1}{\tau}\|\nabla f(X_t)\|_2^2dt\le \frac{K_4}{\eta\sqrt{T}}\left(f(X_0)+ \frac{\eta d L \tau}{2}\right)\varepsilon^{-1} \\
    \Longrightarrow \;& \E\left[\|\nabla f(X_{\Tilde{t}})\|_2^2\right]\le K_4\left(\frac{f(X_0)}{\eta\sqrt{T}}+\frac{\eta d L\sqrt{T}}{2}\right)\frac{1}{\varepsilon},
\end{align}

where in the last step we used the Law of the Unconscious Statistician and $\Tilde{t}\sim \mathrm{Unif}(0,\tau)$.

\end{proof}

Finally, we derive the stationary distribution of \algname{DP-SignSGD}: We empirically validate it in Fig.~\ref {fig:statdist}.

\begin{theorem}\label{thm:statdistr_sign_formal}
    Let $f(x) = \tfrac{1}{2}x^\top H x$ where $H = \diag(\lambda_1, \dots, \lambda_d)$. The stationary distribution of Phase~2 is
    \begin{align}
        \E\left[X_T\right] =& X_0e^{- K H \tau};\\
        \Cov(X_T) =& X_0^2e^{-2KH\tau}\left(e^{-\eta K^2H\tau}-1\right) \\
        &+ \eta\left(2KH+\eta H^2 K^2\right)^{-1}\left(1-e^{-(2KH+\eta K^2H^2)\tau}\right)
    \end{align}
    
    where $K = \sqrt{\frac{2}{\pi}}\frac{1}{\sqrt{\varepsilon^2B^{-1}\sigma_\gamma^2+B^{-2}C^2q^2\log(1/\delta)T}}\varepsilon$.
\end{theorem}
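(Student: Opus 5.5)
The plan is to work coordinate-wise on the linearized Phase~2 SDE of Corollary~\ref{cor:actual_sdes}, Eq.~\ref{eq:actual_sdesignph2}. The right-hand sides are nonlinear in $X$ only through the diffusion, and this enters the moment equations only via a term quadratic in $X$. As a result, the first two moments satisfy \emph{closed} linear ODEs that can be solved exactly.

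\textbf{Reduction to scalar dynamics.}
\begin{itemize}
\item For $f(x)=\tfrac12 x^\top Hx$ we have $\nabla f(x)=Hx$, so the linearized drift is $-KHX_t$. Here $K=\sqrt{2/\pi}\,\big(\tfrac{C^2\sigma_{DP}^2}{B^2}+\tfrac{\sigma_\gamma^2}{B}\big)^{-1/2}$. Substituting $\sigma_{DP}=q\sqrt{T\log(1/\delta)}/\varepsilon$ gives exactly the $K$ of the statement.
\item For the diffusion, I use the same linearization $\erf(z)\approx \tfrac{2}{\sqrt\pi}z$ inside $\Bar{\Sigma}_2$. This gives $\Bar{\Sigma}_2(x)\approx I_d-K^2\diag\big((Hx)^2\big)$, which is diagonal.
\item Since $H$ is diagonal, each coordinate then obeys the scalar equation
\[
dX_{t,i}=-K\lambda_i X_{t,i}\,dt+\sqrt{\eta}\sqrt{1-K^2\lambda_i^2X_{t,i}^2}\,dW_{t,i}.
\]
\end{itemize}

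\textbf{Mean.} Taking expectations kills the martingale part. This gives $\tfrac{d}{dt}\E[X_{t,i}]=-K\lambda_i\E[X_{t,i}]$, hence $\E[X_{t,i}]=X_{0,i}e^{-K\lambda_i t}$, i.e. $X_0e^{-KH\tau}$ at $t=\tau$.

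\textbf{Second moment.} Apply It\^o's formula to $X_{t,i}^2$:
\[
d X_{t,i}^2 = 2X_{t,i}\,dX_{t,i}+\eta\big(1-K^2\lambda_i^2X_{t,i}^2\big)dt .
\]
Taking expectations, $S_i(t)=\E[X_{t,i}^2]$ solves $\dot S_i=-(2K\lambda_i+\eta K^2\lambda_i^2)S_i+\eta$. With $a_i=2K\lambda_i+\eta K^2\lambda_i^2$, this yields
\[
S_i(t)=X_{0,i}^2e^{-a_it}+\tfrac{\eta}{a_i}\big(1-e^{-a_it}\big).
\]
Subtracting the squared mean gives
\[
\Var(X_{t,i})=X_{0,i}^2e^{-2K\lambda_i t}\big(e^{-\eta K^2\lambda_i^2t}-1\big)+\eta a_i^{-1}\big(1-e^{-a_it}\big).
\]
Reassembling the coordinates in matrix form at $t=\tau$ gives the claimed covariance. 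The off-diagonal entries vanish by independence of the coordinates and of the $W_{t,i}$.

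\textbf{Main obstacle.} The delicate step is justifying the diffusion approximation and the moment closure.
\begin{itemize}
\item The linearized factor $1-K^2\lambda_i^2x^2$ must stay nonnegative for the SDE to be well posed. I would argue this holds in the small-SNR regime already assumed in Corollary~\ref{cor:actual_sdes}, where $K\lambda_i|X_{t,i}|\ll1$, especially as $\varepsilon\to0$. Alternatively, one can note that only $\E[\sigma^2]$ enters the second-moment ODE, so the computation stands formally.
\item The exponent in the first covariance term comes out as $\eta K^2H^2\tau$, with $H$ squared. I would double-check this against the statement's $e^{-\eta K^2H\tau}$ and treat the discrepancy as a typographical simplification.
\end{itemize}
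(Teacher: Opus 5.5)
Your proposal is correct and follows the paper's proof essentially step for step. You reduce the linearized Phase~2 SDE (with $\erf$ linearized in the diffusion as well) to independent scalar coordinates, read the mean off the linear drift, and apply It\^o's formula to $X_{t,i}^2$ to get the closed linear ODE $\dot S_i = -(2K\lambda_i+\eta K^2\lambda_i^2)S_i+\eta$. Your suspicion about the exponent is also right: the paper's own computation produces $e^{-\eta K^2\lambda_i^2 t}$, so the statement's $e^{-\eta K^2H\tau}$ should read $e^{-\eta K^2H^2\tau}$.
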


\begin{proof}
    Since $H$ is diagonal, we can work component-wise. Let us remember the SDE:
    \begin{equation}
        dX_{t,i} = - \sqrt{\frac{2}{ \pi}}\frac{1}{\sqrt{\frac{C^2\sigma_{DP}^2}{B^2}+\frac{\sigma_\gamma^2}{B}}} \lambda_i X_{t,i}dt + \sqrt{\eta}\sqrt{1-\frac{2}{\pi\left(\frac{C^2\sigma_{DP}^2}{B^2}+\frac{\sigma_\gamma^2}{B}\right)}\lambda_i^2 X_{t,i}^2}dW_t.
    \end{equation}
    
    To ease the notation, we write $K = \sqrt{\frac{2}{\pi}}\frac{1}{\sqrt{\varepsilon^2B^{-1}\sigma_\gamma^2+B^{-2}C^2q^2\log(1/\delta)T}}\varepsilon$.
    Hence, we can write $X_{t,i}$ in closed form as
    \begin{equation}
        X_{t,i} = x_{0,i}e^{-K\lambda_i t} + \sqrt{\eta}\int_0^t e^{-K\lambda_i (t-s)}\sqrt{1-K^2 \lambda_i^2 X_{t,i}^2} dW_s.
    \end{equation}

    Due to the properties of the stochastic integral, we immediately have
    \begin{equation}
        \E\left[X_{t,i}\right] = X_{0,i}e^{- \sqrt{\frac{2}{\pi}}\frac{1}{\sqrt{\varepsilon^2B^{-1}\sigma_\gamma^2+B^{-2}C^2q^2\log(1/\delta)T}}\varepsilon\lambda_i t}.
    \end{equation}

    Using the It\^o formula on $g(x) = x^2$, we have
    \begin{align}
        & d\left(X_{t,i}^2\right) = - 2 K\lambda_i X_{t,i}^2 dt + \frac{\eta}{2}2 dt - \frac{\eta}{2}2 K^2 \lambda_i^2X_{t,i}^2 dt + \bigo(\text{Noise})\\
        \Longrightarrow & \E[X_{t,i}^2] = X_{0,i}^2 e^{-(2K\lambda_i+\eta K^2\lambda_i^2)t} + \frac{\eta}{2K\lambda_i+\eta \lambda_i^2 K^2}\left(1-e^{-(2K\lambda_i+\eta K^2\lambda_i^2)t}\right),
    \end{align}

    therefore
    \begin{align}
        \Cov(X_{t,i}) &= \E[X_{t,i}^2]-\E[X_{t,i}]^2\\
        &= X_{0,i}^2 e^{-(2K\lambda_i+\eta K^2\lambda_i^2)t} + \frac{\eta}{2K\lambda_i+\eta \lambda_i^2 K^2}\left(1-e^{-(2K\lambda_i+\eta K^2\lambda_i^2)t}\right) - X_{0,i}^2e^{-2K\lambda_i t} \nonumber\\
        &= X_{0,i}^2e^{-2K\lambda_it}\left(e^{-\eta K^2\lambda_i^2t}-1\right) + \frac{\eta}{2K\lambda_i+\eta \lambda_i^2 K^2}\left(1-e^{-(2K\lambda_i+\eta K^2\lambda_i^2)t}\right).
    \end{align}

\end{proof}

Finally, we present a result that allows us to determine which of \algname{DP-SignSGD} and \algname{DP-SGD} is more advantageous depending on the training setting.

\begin{corollary}\label{cor:sign-vs-sgd}
If $\tfrac{\sigma_\gamma^2}{B} \ge 1$, then \algname{DP-SignSGD} always achieves a better privacy-utility trade-off than \algname{DP-SGD}, though its convergence is slower.
If $\tfrac{\sigma_\gamma^2}{B} < 1$, there exists a critical privacy level
\begin{equation}
    \varepsilon^\star
    = \sqrt{\frac{C^2 T B}{n^2\left(B-\sigma_\gamma^2\right)} \log\left(\tfrac{1}{\delta}\right)},
\end{equation}

such that \algname{DP-SignSGD} outperforms \algname{DP-SGD} in utility whenever $\varepsilon < \varepsilon^\star$, but still converges more slowly than \algname{DP-SGD}.
\end{corollary}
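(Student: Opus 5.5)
The plan is to compare the two Phase~2 \emph{privacy-utility} terms directly: Theorem~\ref{thm:lossbound_sgd_formal} for \algname{DP-SGD} and Theorem~\ref{thm:lossbound_sign_formal} for \algname{DP-SignSGD}. Then I would read off the convergence-speed statement from the two decay exponents. As in the rest of the paper, I would suppress absolute constants ($2,\pi,K(\nu)$), so the comparison holds up to $\lesssim$. To expose the common structure, set
\begin{equation*}
A \coloneqq \frac{\varepsilon^2\sigma_\gamma^2}{BT}+\frac{C^2\Phi^2}{B^2}, \qquad R \coloneqq \frac{TA}{\varepsilon^2} = \frac{\sigma_\gamma^2}{B}+\frac{TC^2\Phi^2}{B^2\varepsilon^2}.
\end{equation*}
Both asymptotic terms share the prefactor $\frac{\eta d L}{\mu}$. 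The \algname{DP-SGD} term is then $\frac{\eta dL}{\mu}R$, and the \algname{DP-SignSGD} term is $\frac{\eta dL}{\mu}\sqrt{R}$.

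The key step is therefore the elementary observation that $\sqrt{R}\le R$ if and only if $R\ge 1$. So \algname{DP-SignSGD} has the better utility exactly when
\begin{equation*}
\frac{\sigma_\gamma^2}{B}+\frac{TC^2\Phi^2}{B^2\varepsilon^2}\ge 1 .
\end{equation*}
I would then split into two cases. If $\sigma_\gamma^2/B\ge 1$, the inequality holds for every $\varepsilon>0$, since the DP term is nonnegative; this gives the first claim. If $\sigma_\gamma^2<B$, rearranging gives
\begin{equation*}
\varepsilon^2\le \frac{TC^2\Phi^2}{B^2(1-\sigma_\gamma^2/B)} = \frac{TC^2\Phi^2}{B(B-\sigma_\gamma^2)}.
\end{equation*}
Substituting $\Phi^2=q^2\log(1/\delta)$ with $q=B/n$ yields exactly $\varepsilon^2\le \frac{C^2TB}{n^2(B-\sigma_\gamma^2)}\log(1/\delta) = (\varepsilon^\star)^2$. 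Hence \algname{DP-SignSGD} wins for $\varepsilon<\varepsilon^\star$, and \algname{DP-SGD} wins for $\varepsilon>\varepsilon^\star$.

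For the speed claims, I would compare the decay exponents. For \algname{DP-SGD} the exponent is $\mu t$. For \algname{DP-SignSGD} it is
\begin{equation*}
\frac{\mu\varepsilon t}{\sqrt{\varepsilon^2\sigma_\gamma^2/B+TC^2\Phi^2/B^2}} = \frac{\mu t}{\sqrt{R}} .
\end{equation*}
Whenever the sign method has the better utility, we have $R\ge1$, so its rate $\mu/\sqrt{R}\le\mu$ and it converges more slowly. This gives the ``though slower'' clauses in both cases.

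The main obstacle is not computational but interpretive: the comparison is only meaningful up to the suppressed absolute constants, and it relies on the Phase~2 regime with its linearized-$\erf$ drift (Corollary~\ref{cor:actual_sdes}). I would state explicitly that $\varepsilon^\star$ is the threshold at the level of these bounds, and that restoring the constants $\sqrt{\pi/2}$ and $1/4$ only rescales the threshold $1$ in $R\ge1$ by an absolute factor.
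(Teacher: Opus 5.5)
Your proposal is correct and follows the paper's proof: you compare the Phase~2 privacy-utility terms, reduce the comparison to $(1-\sigma_\gamma^2/B)\,\varepsilon^2 < TC^2\Phi^2/B^2$ (equivalently your $R>1$), and substitute $\Phi/B=\sqrt{\log(1/\delta)}/n$ to read off $\varepsilon^\star$. Your decay-exponent argument ($\mu/\sqrt{R}\le\mu$ whenever $R\ge 1$) usefully goes beyond the paper, whose proof never justifies the ``converges more slowly'' clauses; this conclusion also survives restoring the constants, since the utility threshold $R>\pi/2$ implies the speed threshold $R>2/\pi$.
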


\begin{proof}
The Phase~2 asymptotic terms at $t=T$ are
\begin{equation}
    A_{\text{SGD}}=\frac{T\eta dL}{\mu}\left(\tfrac{\varepsilon^2\sigma_\gamma^2}{TB}+C^2\tfrac{\Phi^2}{B^2}\right)\tfrac{1}{\varepsilon^2},\qquad A_{\text{Sign}}=\frac{\sqrt{T}\eta dL}{\mu}\sqrt{\tfrac{\varepsilon^2\sigma_\gamma^2}{TB}+C^2\tfrac{\Phi^2}{B^2}}\;\tfrac{1}{\varepsilon}.
\end{equation}

We compare $A_{\text{Sign}}<A_{\text{SGD}}$. Cancelling the common factor $\tfrac{\eta dL}{\mu}$ gives
\begin{equation}
    \frac{\sqrt{T}}{\varepsilon}\sqrt{\tfrac{\varepsilon^2\sigma_\gamma^2}{TB}+C^2\tfrac{\Phi^2}{B^2}}
    <\frac{T}{\varepsilon^2}\left(\tfrac{\varepsilon^2\sigma_\gamma^2}{TB}+C^2\tfrac{\Phi^2}{B^2}\right).
\end{equation}

Multiplying by $\varepsilon^2$ and dividing by the positive square root yields
\begin{equation}
    \varepsilon\sqrt{T}<T\sqrt{\tfrac{\varepsilon^2\sigma_\gamma^2}{TB}+C^2\tfrac{\Phi^2}{B^2}}.
\end{equation}

All quantities are non-negative, so squaring preserves the inequality:
\begin{equation}
    \varepsilon^2 T<T^2\left(\tfrac{\varepsilon^2\sigma_\gamma^2}{TB}+C^2\tfrac{\Phi^2}{B^2}\right)
    \;\Longleftrightarrow\;
    \left(1-\tfrac{\sigma_\gamma^2}{B}\right)\varepsilon^2
    <C^2\tfrac{\Phi^2}{B^2}T.
\end{equation}

Using $\tfrac{\Phi}{B}=\tfrac{1}{n}\sqrt{\log(1/\delta)}$ gives
\begin{equation}
    \left(1-\tfrac{\sigma_\gamma^2}{B}\right)\varepsilon^2
    <\frac{C^2}{n^2}T\log\left(\tfrac{1}{\delta}\right).
\end{equation}

If $\tfrac{\sigma_\gamma^2}{B}\ge 1$, the left coefficient is non-positive and the inequality holds for all $\varepsilon>0$.  
If $\tfrac{\sigma_\gamma^2}{B}<1$, solving for $\varepsilon$ yields
\begin{equation}
    \varepsilon
    <\sqrt{\frac{C^2 T B}{n^2\left(B-\sigma_\gamma^2\right)} \log\left(\tfrac{1}{\delta}\right)}
    =\varepsilon^\star,
\end{equation}

which proves the claim.
\end{proof}

Interestingly, by keeping $\eta$ and $C$ depend on the optimizer, we get

\begin{equation}
    \sqrt{T}\eta_{\mathrm{sign}}\sqrt{\frac{\sigma_\gamma^2}{BT} + \frac{C_{\mathrm{sign}}^2 \Phi^2}{B^2 \varepsilon^2}}
    \;<\;
    T \eta_{\mathrm{sgd}}\left(\frac{\sigma_\gamma^2}{BT} + \frac{C_{\mathrm{sgd}}^2 \Phi^2}{B^2 \varepsilon^2}\right).
\end{equation}
We observe that if $\sigma_\gamma \rightarrow \infty$, DP-SignSGD is always better than DP-SGD, while if $\sigma_\gamma \rightarrow 0$, there is always a threshold $\varepsilon^\star$. 
Since the algebraic expressions are complex, we believe this is enough to show that our insight is much more general than the case derived here and presented in the main paper.

%%%%%%%%%%%%%%%%%%%%%%%%%%%%%%%%%%%%%%%%%%%%%%%%%%%%%%%%%%%
\section{Experimental Details and Additional Results}\label{sec:Aexperiments}
%%%%%%%%%%%%%%%%%%%%%%%%%%%%%%%%%%%%%%%%%%%%%%%%%%%%%%%%%%%

Our empirical analysis is based on the official GitHub repository \url{https://github.com/kenziyuliu/DP2} released with the Google paper~\citep{li2023stp}.
In particular we consider the two following classification problems:

\textbf{IMDB} \citep{maas2011imdb} is a sentiment analysis dataset for movie reviews, posed as a binary classification task. 
It contains 25,000 training samples and 25,000 test samples, with each review represented using a vocabulary of 10,000 words. We train a logistic regression model with 10,001 parameters. 

\textbf{StackOverflow} \citep{kaggle2022stackoverflow}, \citep{tff2022stackoverflow} is a large-scale text dataset derived from Stack Overflow questions and answers. 
Following the setup in~\citep{tff2022stackoverflow}, we consider the task of predicting the tag(s) associated with a given sentence, but we restrict our experiments to the standard centralized training setting rather than the federated one. 
We randomly select 246,092 sentences for training and 61,719 for testing, each represented with 10,000 features. The task is cast as a 500-class classification problem, yielding a model with approximately 5 million parameters.

\textbf{MovieLens-100k} \citep{harper2015movielens} is a widely used movie-rating dataset for recommendation systems. 
It consists of 100,000 ratings provided by 943 users across 1,682 movies ($\approx$6\% non-zero entries). 
We consider a non-convex matrix factorization problem with an embedding dimension of 100, resulting in 262,500 parameters. 
Each observed rating is treated as an individual “record” for differential privacy, and the non-zero entries are randomly split into training and evaluation sets.

\paragraph{Optimizers.} We train both classification problems using \algname{DP-SGD}, \algname{DP-SignSGD} and \algname{DP-Adam}. For $k\ge 0$, learning rate  $\eta$, variance $\sigma_{\text{DP}}^2$, and batches $\gamma_k$ of size $B$ modeled as i.i.d. uniform random variables taking values in $\{1,\dots,n\}$.
Let $g_k$ be the private gradient, defined as
\begin{equation}\label{eq:privategrad_App}
    g_k \coloneqq \frac{1}{B} \sum_{i \in \gamma_k} \mathcal{C}[\nabla f_i(x_k)] +\frac{1}{B} \mathcal{N}(0,C^2\sigma_{\text{DP}}^2 I_d)
\end{equation}

and $\mathcal{C}[\cdot]$ be the clipping function
\begin{equation}\label{eq:clip_App}
    \mathcal{C}[x] = \min\left\{\frac{C}{\norm{x}_2},1\right\}x.
\end{equation}

The iterates of \algname{DP-SGD} are defined as
\begin{equation}\label{eq:dpsgd_App}\textstyle
    x_{k+1} = x_k - \eta g_k,
\end{equation}

while those of \algname{DP-SignSGD} are defined as
\begin{equation}\label{eq:dpSignSGD_App}\textstyle
    x_{k+1} = x_k - \eta \sign\left[ g_k\right],
\end{equation}  

where $\sign[\cdot]$ is applied component-wise. The update rule of \algname{DP-Adam} is defined as follows:
\begin{align}\label{eq:dpadam}
    \begin{split}
        m_{k+1} &= \beta_1 m_k + (1 - \beta_1)\, g_k, \qquad
        \hat{m}_{k+1} = \frac{m_{k+1}}{1 - \beta_1^{\,k+1}}, \\
        v_{k+1} & = \beta_2 v_k + (1-\beta_2)\,(g_k \odot g_k), \qquad
        \hat{v}_{k+1} = \frac{v_{k+1}}{1 - \beta_2^{\,k+1}}, \\
        x_{k+1} &= x_k - \eta\, \frac{\hat{m}_{k+1}}{\sqrt{\hat{v}_{k+1}} + \epsilon},
    \end{split}
\end{align}
where $g_k$ is the privatized stochastic gradient and is defined in Equation~\ref{eq:privategrad_App}.

\paragraph{Hyper-parameters.}\label{par:hyperparam}

Unless stated otherwise, we fix the following hyperparameters in our experiments: for IMDB, StackOverflow and MovieLens respectively, we train for $100, 50, 50$ epochs with batch size $B = 64$. 
The choice of batch size follows the setting in~\citep{li2023stp}. 
We also aimed to avoid introducing unnecessary variability, keeping the focus on the direction suggested by our theoretical results. 
Finally, we set $\delta = 10^{-5},10^{-6}, 10^{-6}$, corresponding to the rule $\delta = 10^{-k}$, where $k$ is the smallest integer such that $10^{-k} \leq 1/n$ for the training dataset size $n$.

\paragraph{Protocol A.} We perform a grid search on  \textit{learning rate} $\eta = \{0.001, 0.01, 0.1, 1,3, 5, 10\}$ and \textit{clipping threshold} $C=\{0.1,0.25, 0.5, 1, 2, 3, 5\}$ for \algname{DP-SGD}, \algname{DP-SignSGD} and \algname{DP-Adam} on both datasets, using $\sigma_{DP} = 1$: this gives $\varepsilon=2.712$ and $\varepsilon=0.424$ for IMDB and StackOverflow respectively. 
For MovieLens we use the same hyperparameters grid, but for the noise multiplier we use instead $\sigma_{DP} = 0.5$, which yields $\varepsilon = 1.329$. We summarize the best set of hyperparameters for each method on both datasets in Table~\ref{tab:hyperparam}.

\begin{table}[ht]
\centering
\renewcommand{\arraystretch}{1.3} 
\setlength{\tabcolsep}{12pt}      

\begin{tabular}{lccc}
\hline
\textbf{Dataset} & \textbf{\algname{DP-SGD}} & \textbf{\algname{DP-SignSGD}} & \textbf{\algname{DP-Adam}} \\
\hline
IMDB          & (5, 0.5)     & (0.1, 0.5)    & (0.1, 0.5) \\
StackOverflow & (3, 0.25)    & (0.01, 0.5)   & (0.01, 0.5) \\
MovieLens & (0.1, 2)    & (0.001, 3)   & (0.001, 1) \\
\hline
\end{tabular}
\caption{Tuned hyperparameters for different methods across the two datasets. The values refer to (learning rate, clipping parameter); For \algname{DP-Adam} we also used $\beta_1 = 0.9,\beta_2 = 0.999$ and adaptivity $\epsilon = 10^{-8}$ in both cases.}
\label{tab:hyperparam}
\end{table}

\paragraph{Protocol B.}\label{par:protocolB} For each noise multiplier, we tune a new pair of learning rate and clipping parameter by performing a grid search. 
\textbf{IMDB}: For \algname{DP-SignSGD} and \algname{DP-Adam}, we consider the following learning rates $\eta = \{0.01,\,0.05,\,0.10,\,0.15,\,0.22,\,0.27,\,0.33,\,0.38,\,0.44,\,0.50\}$ and clipping thresholds $C = \{0.05,\,0.1,\,0.25,\,0.5\}$, while for \algname{DP-SGD} we consider a different range of learning rates $\eta = \{0.5,\,0.7,\,1.0,\,1.5,\,2.0,\,2.5,\,3.0,\,3.5,\,4.0,\,4.5,\,5.0,\,5.5,\,6.0\}$ and $C = \{0.1,\,0.25,\,0.5\}$. 
This tuning is designed to identify the best hyperparameters across a broad range of privacy budgets $\varepsilon = \{0.01,\,0.2,\,0.4,\,0.6,\,0.8,\,1.0,\,1.2,\,1.4,\,1.6,\,1.8,\,2.0\}$, which correspond to the following noise multipliers: $\{271.23,\,13.56,\,6.78,\,4.52,\,3.39,\,2.71,\,2.26,\,1.94,$ $\,1.70,\,1.51,\,1.36\}$. 
\textbf{StackOverflow}: For \algname{DP-Adam} we consider the following learning rates $\{0.001,\, 0.003,\, 0.005,\, 0.01,\, 0.03,\, 0.05,\, 0.1,\, 0.5\}$, for \algname{DP-SignSGD} we add $\{0.008,\, 0.015,\, 0.02,$ $0.03,\, 0.04\}$ to the list, while for \algname{DP-SGD} we consider a different range of learning rates $\eta = \{0.1,\, 0.5,\, 1.0,$ $2.0,\, 2.5,\, 3.0,\, 3.5,\, 4.0,\, 5.0\}$. For the clipping thresholds we consider $C = \{0.05,\, 0.1,\, 0.25,\, 0.35,\, 0.5,\, 1.0\}$ for every method. 
This tuning is designed to identify the best hyperparameters across a broad range of privacy budgets $\varepsilon = \{0.01,\,0.2,\,0.4,\,0.6,\,0.8,\,1.0,$ $1.2,\,1.4\}$, which correspond to the following noise multipliers: $\{42.384,\, 2.119,\, 1.060,\, 0.706,$ $ 0.530,\, 0.424,\, 0.353,\, 0.303\}$.
\textbf{Movielens}: For \algname{DP-SignSGD} and \algname{DP-Adam}, we consider the following learning rates $\eta = \{0.0001,\, 0.0003,\, 0.001,\, 0.003,\, 0.01,\, 0.03,\, 0.1,\, 0.3,\, 1,\, 3\}$ and clipping thresholds $C = \{0.1,\,0.25,\,0.5,\,1.0,\, 2.0,\, 3.0\}$, while for \algname{DP-SGD} we consider a different range of learning rates $\eta = \{ 0.01,\, 0.03,\, 0.05,\, 0.1,\, 0.3,\, 0.5,\, 1\}$ and $C = \{0.1,\,0.25,\,0.5,\,1.0,\, 2.0,\, 3.0\}$. 
This tuning is designed to identify the best hyperparameters across a broad range of privacy budgets $\varepsilon = \{0.01,\, 0.05,\, 0.1,\, 0.2,\, 0.5,\, 1\}$, which correspond to the following noise multipliers: $\{66.48,\, 13.30,\, 6.65,\, 3.32,\, 1.33,\, 0.66\}$.

%%%%%%%%%%%%%%%%%%%%%%%%%%%%%%%%%%%%%%%%%%%%%%%%%%%%%%%%%%%
\subsection{\algname{DP-SGD} and \algname{DP-SignSGD}: SDE Validation (Figure~\ref{fig:sdevalidation}).}
%%%%%%%%%%%%%%%%%%%%%%%%%%%%%%%%%%%%%%%%%%%%%%%%%%%%%%%%%%%

In this section, we describe how we validated the SDE models derived in Theorem~\ref{thm:sdesgd} and Theorem~\ref{thm:sdesign} (Figure~\ref{fig:sdevalidation}). 
In line with works in the literature \cite{compagnoni2025adaptive, compagnoni2025unbiased}, we optimize a quadratic and a quartic function. We run both \algname{DP-SGD} and \algname{DP-SignSGD}, calculating the full gradient and injecting noise as described in Assumption~\ref{ass:Noise}.
Similarly, we integrate our SDEs using the Euler-Maruyama algorithm (See, e.g.,~\citep{compagnoni2025adaptive}, Algorithm 1) with $\Delta t=\eta$.
Results are averaged over $200$ repetitions. For each of the two functions, the details are presented in the following paragraphs.

\textbf{Quadratic function}: We consider the quadratic function $f(x) = \tfrac{1}{2}x^\top H x$, with $H$ diagonal and $H = 0.1 \diag(2,1,\dots,1)$, in dimension $d=1024$. The clipping parameter is set to $C=5$, and each algorithm is run for $T=10000$ iterations. 
The gradient noise scale is $\sigma_\gamma = 1/\sqrt{d}$. The learning rate is $\eta = 0.1$ for \algname{DP-SGD} and $\eta = 0.01$ for \algname{DP-SignSGD}. The differential privacy parameters are $(\varepsilon,\delta,q) = (5,10^{-4},10^{-4})$, corresponding to a noise multiplier of $\sigma_{DP}=0.03$. 
The initial point is sampled as $x_0 = \tfrac{50}{\sqrt{d}}\mathcal{N}(0,I_d)$, using an independent seed for each method. In this experiment and in all the following where we use a Student's $t$, we use $\nu=1$.

\textbf{Quartic function}: We also test on the quartic function
$f(x) = \tfrac{1}{2}\sum_{i=0}^{d-1} H_{ii}x_i^2 + \tfrac{\lambda}{4}\sum_{i=0}^{d-1} x_i^4 - \tfrac{\xi}{3}\sum_{i=0}^{d-1} x_i^3$, where $H = \diag(-2,1,\dots,1)$, $\lambda=0.5$, and $\xi=0.1$. 
The problem dimension, clipping, and number of iterations are the same: $d=1024$, $C=5$, $T=10000$, with gradient noise $\sigma_\gamma = 1/\sqrt{d}$. Both methods use a learning rate of $\eta = 0.01$. 
The differential privacy parameters are $(\varepsilon,\delta,q) = (5,10^{-4},10^{-4})$ for \algname{DP-SGD} and $(5,10^{-4},2\times 10^{-4})$ for \algname{DP-SignSGD}, corresponding to noise multipliers $\sigma_{DP}=0.03$ and $\sigma_{DP}=0.06$, respectively. 
Initialization is $x_0 = \tfrac{50}{\sqrt{d}}\mathcal{N}(0,I_d)$ for \algname{DP-SGD} and $y_0=-x_0$ for \algname{DP-SignSGD}.

\begin{figure}[ht!]
    \centering
    {\includegraphics[width=0.99\linewidth]{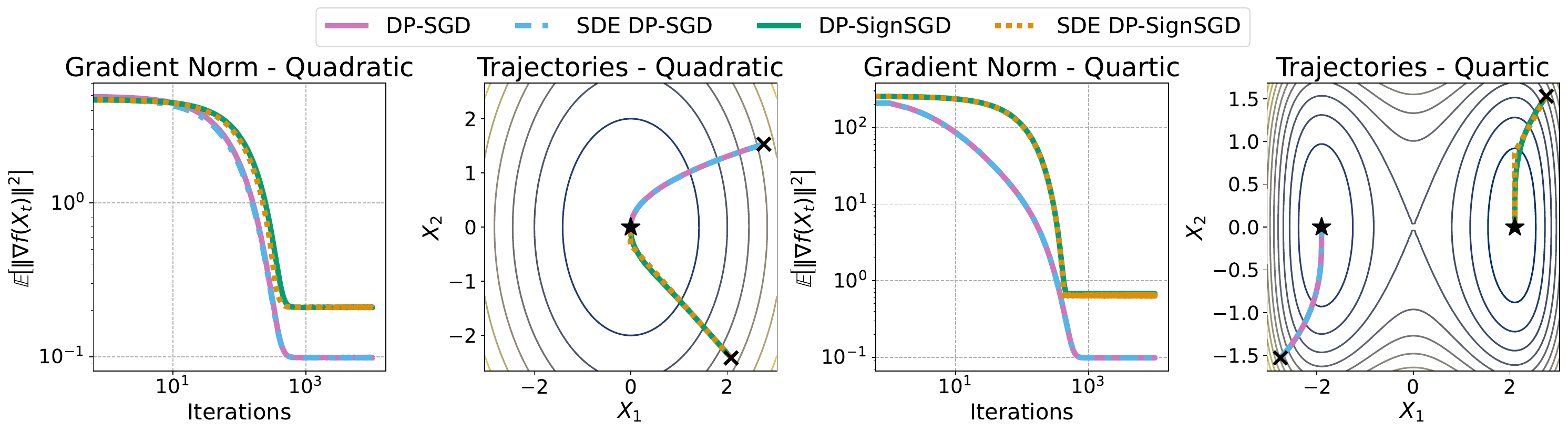}}
    \caption{Consistent with Theorem~\ref{thm:sdesgd} and Theorem~\ref{thm:sdesign}, we empirically validate that the SDEs of \algname{DP-SGD} and \algname{DP-SignSGD} model their respective optimizers. For a convex quadratic function (\textbf{left two panels}) and a nonconvex quartic function (\textbf{right two panels}), the SDEs accurately track both the trajectories and the gradient norm of the corresponding algorithms, averaged over 200 runs.}
    \label{fig:sdevalidation}
\end{figure}

%%%%%%%%%%%%%%%%%%%%%%%%%%%%%%%%%%%%%%%%%%%%%%%%%%%%%%%%%%%
\subsection{Asymptotic Loss Bound (Figures~\ref{fig:nm_scaling}, ~\ref{fig:nm_scaling_movielens} and ~\ref{fig:nm_scaling_test})}\label{sec:fig1}
%%%%%%%%%%%%%%%%%%%%%%%%%%%%%%%%%%%%%%%%%%%%%%%%%%%%%%%%%%%

This section refers to Figure~\ref{fig:nm_scaling}, Figure~\ref{fig:nm_scaling_movielens} and Figure~\ref{fig:nm_scaling_test}. We consider three different scenarios: A quadratic function, IMDB, StackOverflow and MovieLens.
Each setup is optimized using \algname{DP-SGD}, \algname{DP-SignSGD}, and \algname{DP-Adam}, and we plot the final averaged training loss across a range of privacy levels. In the left panel, we include the exact bounds from Theorem~\ref{thm:lossbound_sgd} and Theorem~\ref{thm:lossbound_sign} to show agreement with theory; in the central and right panels, we compare the final losses with the trends in $\varepsilon$ predicted by the same theorems. Experimental details are as follows. 

\textbf{Quadratic}: $f(x)=\tfrac{1}{2}x^\top H x$, $H=10 I_d$; $d=1024$, $C=5$, $T=50000$, $\sigma_\gamma = 0.01$; learning rate $\eta=0.01\cdot \eta_t$ with $\eta_t=(1+\eta t)^{-0.6}$ (a decaying schedule for stability; our theory assumes constant $\eta$, so we use this toy experiment for qualitative validation); Adam parameters: $\beta_1=0.9,\beta_2=0.999,\epsilon=10^{-8}$. We used 8 noise multipliers, linearly spaced from $0$ to $2$, which with $q=10^{-4},\delta=10^{-4}$ correspond to $\varepsilon \in \{\infty,\, 6.78,\,2.38,\,1.19,\,0.79,\,0.59,\,0.48,\,0.40,\,0.34\}$. 

\textbf{IMDB}:  Hyperparameters are given in Table~\ref{tab:hyperparam}. We performed 10 runs for each noise multiplier $\{0.5,\,1.0,\,2.0,\,4.0,\,6.0,\,8.0,\,10.0,\,12.0\}$, yielding the following values for $\varepsilon$ $\{5.425,\,2.712,\,1.356,\,0.678,$ $0.452,\,0.339,\,0.271,\,0.226\}$, respectively. We report the average training and test loss of the final epoch with confidence bounds (Figure~\ref {fig:nm_scaling} and Figure~\ref {fig:nm_scaling_test}).

\textbf{StackOverflow}: Hyperparameters are given in Table~\ref{tab:hyperparam}. We performed 3 runs using for each noise multipliers $\{0.1,\,0.3,\,0.5,\,1.0,\,2.0,\,4.0,\,6.0,\,8.0\}$, yielding the following values for $\varepsilon$ $\{4.238,\,1.413,$ $0.848,\,0.424,\,0.212,\,0.106,\,0.071,\,0.053\}$, respectively. We report the average training and test loss of the final epoch with confidence bounds (Figure~\ref{fig:nm_scaling} and Figure ~\ref{fig:nm_scaling_test}).

\textbf{MovieLens}: Hyperparameters are given in Table~\ref{tab:hyperparam}. We performed 10 runs for each noise multiplier $\{0.06,\, 0.08,\, 0.1,\, 0.2,\, 0.3,\,  0.5,\, 1.0,\, 1.5\}$, yielding the following values for $\varepsilon$ $\{11.079,\, 8.310,\, 6.648,\, 3.324,$ $ 2.216,\, 1.329,\, 0.665,\, 0.443\}$, respectively. We report the average training and test loss of the final epoch with confidence bounds (Figure~\ref {fig:nm_scaling_movielens}).

\begin{figure}[H]
    \centering
    \includegraphics[width=0.33\linewidth]{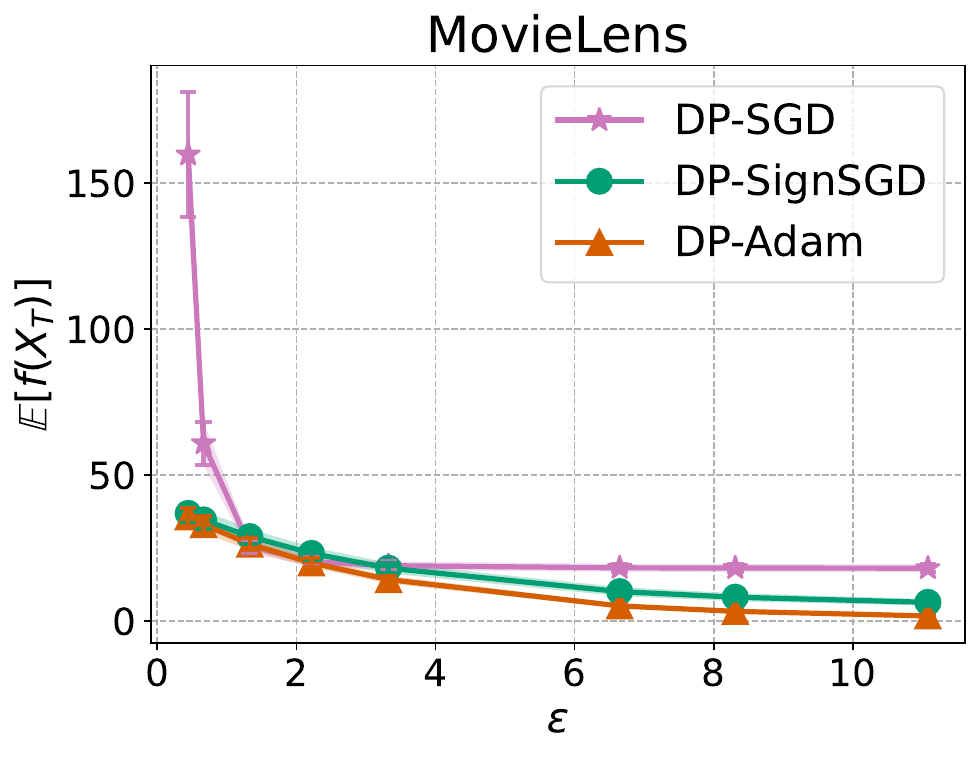}
    
    \caption{Empirical validation of the privacy-utility trade-off predicted by 
    Thm.~\ref{thm:lossbound_sgd} and Thm.~\ref{thm:lossbound_sign}, 
    comparing \algname{DP-SGD}, \algname{DP-SignSGD}, and \algname{DP-Adam}: Our focus is on verifying the functional dependence of the asymptotic loss levels in terms of $\varepsilon$. 
    Train MSE for matrix factorization on MovieLens-100k, confirm the same pattern: the utility of \algname{DP-SGD} scales as $\tfrac{1}{\varepsilon^2}$, while the utility of \algname{DP-SignSGD} scales linearly as $\tfrac{1}{\varepsilon}$. We observe that the insights obtained for \algname{DP-SignSGD} extend to \algname{DP-Adam} as well as to the test loss.}
    
    \label{fig:nm_scaling_movielens}
\end{figure}

%%%%%%%%%%%%%%%%%%%%%%%%%%%%%%%%%%%%%%%%%%%%%%%%%%%%%%%%%%%
\subsection{Convergence Speed Analysis (Figure~\ref{fig:speed} and~\ref{fig:speed_movielens})}\label{sec:fig2}
%%%%%%%%%%%%%%%%%%%%%%%%%%%%%%%%%%%%%%%%%%%%%%%%%%%%%%%%%%%

This section refers to Figure~\ref{fig:speed} and Figure~\ref{fig:speed_movielens}. We consider three different scenarios: IMDB, StackOverflow and MovieLens. 
Each setup is optimized using \algname{DP-SGD}, \algname{DP-SignSGD}, and \algname{DP-Adam} and six different privacy levels: We plot the average trajectories of the training losses and observe that, when it converges, the convergence speed of \algname{DP-SGD} does not depend on the level of privacy, while the two adaptive methods are more resilient to the demands of high levels of privacy, but their convergence speed changes for every $\varepsilon$, as predicted in Theorem~\ref{thm:lossbound_sign}.

\textbf{IMDB}:  Hyperparameters are given in Table~\ref{tab:hyperparam}. We performed 10 runs for each noise multiplier $\{ 0.8,\, 1.0,\, 1.2,\, 1.6,\, 4.0,\, 6.0\}$ and corresponding epsilons $\{3.390,\,2.712,\,2.260,\,1.695,0.678,0.452\}$. We report the average trajectories of the training loss with confidence bounds (Figure~\ref{fig:speed}).

\textbf{StackOverflow}:  Hyperparameters are given in Table~\ref{tab:hyperparam}. We performed 3 runs for each noise multiplier $\{0.37,\, 0.5,\, 0.64,\, 1.19,\, 1.46,\, 1.73\}$ and corresponding epsilons $\{1.146,\, 0.848,\, 0.662,\, 0.356,$ $0.290,\, 0.245\}$. We report the average trajectories of the training loss with confidence bounds (Figure~\ref{fig:speed}).

\textbf{MovieLens}:  Hyperparameters are given in Table~\ref{tab:hyperparam}. We performed 10 runs for each noise multiplier $\{ 0.06,\,0.1,\,0.2,\,0.3,\,0.5,\,1.0\}$ and corresponding epsilons $\{11.080,\,6.648,\,3.324,\,2.216,\,1.330,\,0.665\}$. We report the average trajectories of the training loss with confidence bounds (Figure~\ref{fig:speed_movielens}).

\begin{figure}[ht!]
    \centering
    \includegraphics[width=0.8\linewidth]{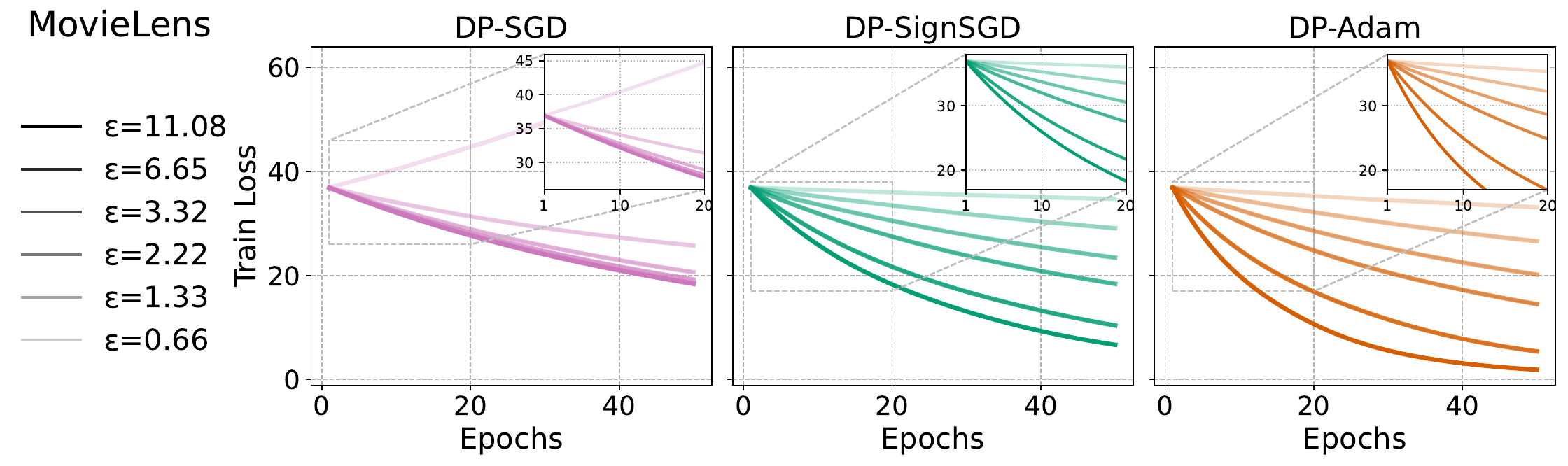}
    \caption{Empirical validation of the convergence speeds predicted by Thm.~\ref{thm:lossbound_sgd} and Thm.~\ref{thm:lossbound_sign}. We compare \algname{DP-SGD}, \algname{DP-SignSGD}, and \algname{DP-Adam} as we optimize a non-convex matrix factorization problem on the MovieLens dataset. We verify that when \algname{DP-SGD} converges, its speed is unaffected by $\varepsilon$. As expected, it diverges when $\varepsilon$ is too small. Regarding \algname{DP-SignSGD} and \algname{DP-Adam}, they are faster when $\varepsilon$ is large and never diverge even when this is small.}
    \label{fig:speed_movielens}
\end{figure}

\subsection{When Adaptivity Really Matters (Figure~\ref{fig:eps_star} and Figure~\ref{fig:eps_star_sotag})}\label{sec:fig3}

This section refers to Figure~\ref{fig:eps_star} and Figure~\ref{fig:eps_star_sotag}. Each setup is optimized using \algname{DP-SGD}, \algname{DP-SignSGD}, and \algname{DP-Adam}. 
We consider different batch sizes and for each we plot the final loss values for different privacy levels, similarly to Section~\ref{sec:fig1}. We highlight the possible range of $\varepsilon^\star$ and a dash-dotted line to mark its approximate value, suggested by each graph.
As predicted by Theorem~\ref{thm:eps_star}, the empirical value of $\varepsilon^\star$ shifts left as we increase the batch size. Experimental details are as follows.

\textbf{IMDB}: Hyperparameters are given in Table~\ref{tab:hyperparam}. We select a wide range of noise multipliers: $\{0.5,\,1.0,\,1.2,\,1.5,\,1.8,\,2.0,\,2.2,\,2.5,\,2.8,\,3.0,\,3.2,\,3.5,\,3.8,\,4.0,\,4.5,\,5.0,\,6.0,\,8.0,\,10.0,\,12.0\}$ and increasing batch sizes $B=\{48,\,56,\,64,\,72,\,80\}$. The corresponding epsilons are

$B=48$: $\{4.698,\,2.349,\,1.879,\,1.566,\,1.342,\,1.174,\,1.044,\,0.940,\,0.854,\,0.783,\,0.723,\,0.671,\,0.626,$\newline $0.587,\,0.522,\,0.470,\,0.391,\,0.294,\,0.235,\,0.196\}$;

$B=56$: $\{5.070,\,2.535,\,2.028,\,1.690,\,1.449,\,1.268,\,1.127,\,1.014,\,0.922,\,0.845,\,0.780,\,0.724,\,0.676,$\newline $0.634,\,0.563,\,0.507,\,0.423,\,0.317,\,0.254,\,0.211\}$;

$B=64$: $\{5.425,\,2.712,\,2.170,\,1.808,\,1.550,\,1.356,\,1.205,\,1.085,\,0.986,\,0.904,\,0.835,\,0.775,\,0.723,$\newline $0.678,\,0.603,\,0.542,\,0.452,\,0.339,\,0.271,\,0.226\}$;

$B=72$: $\{5.740,\,2.870,\,2.296,\,1.913,\,1.640,\,1.435,\,1.276,\,1.148,\,1.044,\,0.957,\,0.883,\,0.820,\,0.765,$\newline $0.717,\,0.638,\,0.574,\,0.478,\,0.359,\,0.287,\,0.239\}$;

$B=80$: $\{6.070,\,3.035,\,2.428,\,2.023,\,1.734,\,1.517,\,1.349,\,1.214,\,1.104,\,1.012,\,0.934,\,0.867,\,0.809,$\newline $0.759,\,0.674,\,0.607,\,0.506,\,0.379,\,0.303,\,0.253\}$.

For each batch size, we performed 10 runs and plotted the average final value of the Train Loss and the empirical $\varepsilon^\star$: these observed values follow the direction indicated in Thm.~\ref{thm:eps_star}. 
For visualization purposes, we show only a smaller window of $\varepsilon$ values satisfying $0.75\le \varepsilon\le 1.25$.

\textbf{StackOverflow}: Due to the higher computational cost required, with our limited resources we managed to select only a restricted range of noise multipliers: $\{0.1,\, 0.3,\, 0.5,\, 1.0,\, 2.0,\, 4.0,\, 6.0,\, 8.0\}$ and batch sizes: $\{48,\,56,\,64\}$. The corresponding epsilons are

$B=48$: $\{1.223,\,0.734,\,0.367,\,0.184,\,0.092,\,0.061,\,0.046\}$;

$B=56$: $\{1.322,\,0.793,\,0.396,\,0.198,\,0.099,\,0.066,\,0.050\}$;

$B=64$: $\{1.413,\,0.848,\,0.424,\,0.212,\,0.106,\,0.071,\,0.053\}$.

For each batch size, we performed 3 runs and plotted the average final value of the Train Loss and the empirical $\varepsilon^\star$: these observed values follow the direction indicated in Thm.~\ref{thm:eps_star}. For visualization purposes, we show only a smaller window of $\varepsilon$ values satisfying $0.15\le \varepsilon\le 0.75$.

\textbf{MovieLens}: Hyperparameters are given in Table~\ref{tab:hyperparam}. We select a wide range of noise multipliers: $\{0.15,\,0.25,\,0.35,\,0.45,\,0.55,\,0.65,\,0.75,\,0.85,\,0.95\}$ and batch sizes $\{56,\,64,\,72\}$. The corresponding epsilons are

$B=56$: $\{4.146,\,2.487,\,1.777,\,1.382,\,1.131,\,0.957,\,0.829,\,0.732,\,0.655\}$;

$B=64$: $\{4.432,\,2.659,\,1.899,\,1.477,\,1.209,\,1.023,\,0.886,\,0.782,\,0.700\}$;

$B=72$: $\{4.700,\,2.820,\,2.014,\,1.567,\,1.282,\,1.085,\,0.940,\,0.829,\,0.742\}$.

For each batch size, we performed 10 runs and plotted the average final value of the Train Loss and the empirical $\varepsilon^\star$: these observed values follow the direction indicated in Thm.~\ref{thm:eps_star}. For visualization purposes, we show only a smaller window of $\varepsilon$ values satisfying $0.75\le \varepsilon\le 1.5$.

\begin{figure}[ht!]
    \centering
    \includegraphics[width=0.6\linewidth]{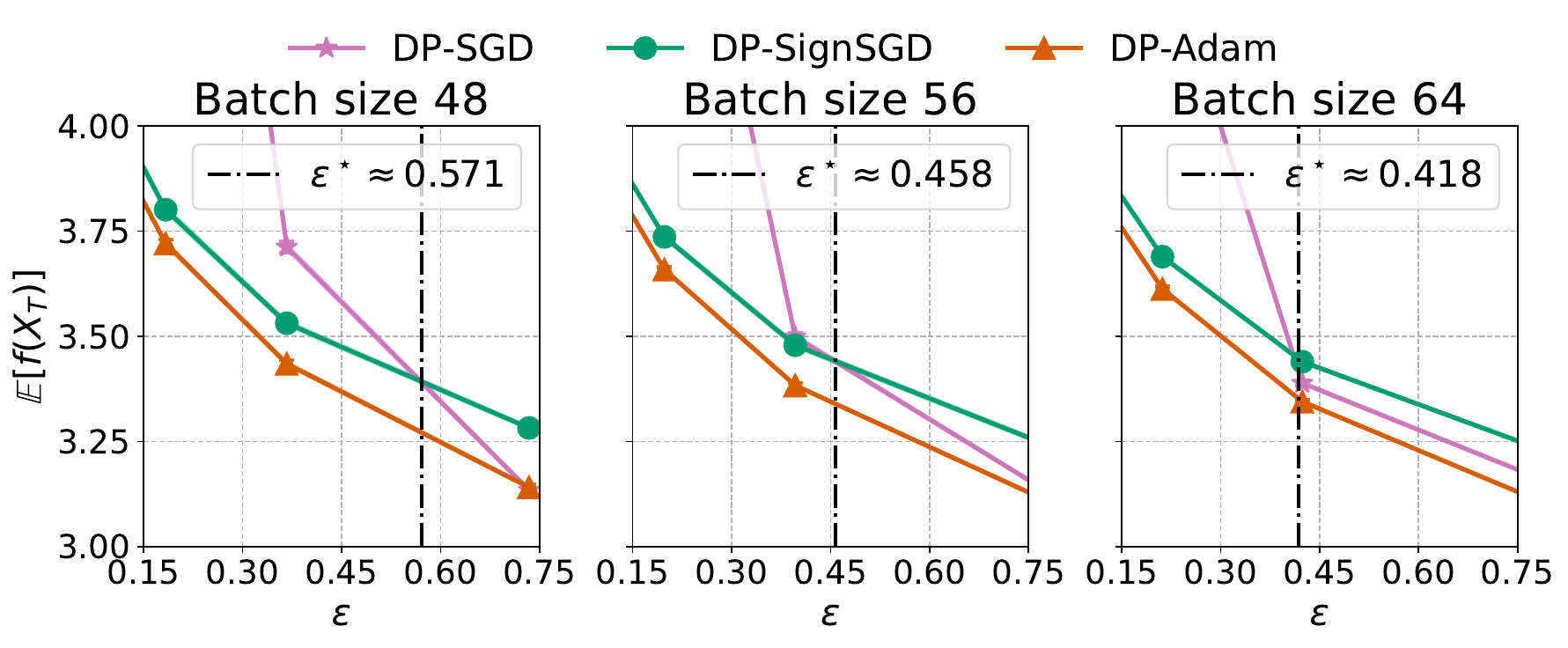}
    \includegraphics[width=0.6\linewidth]{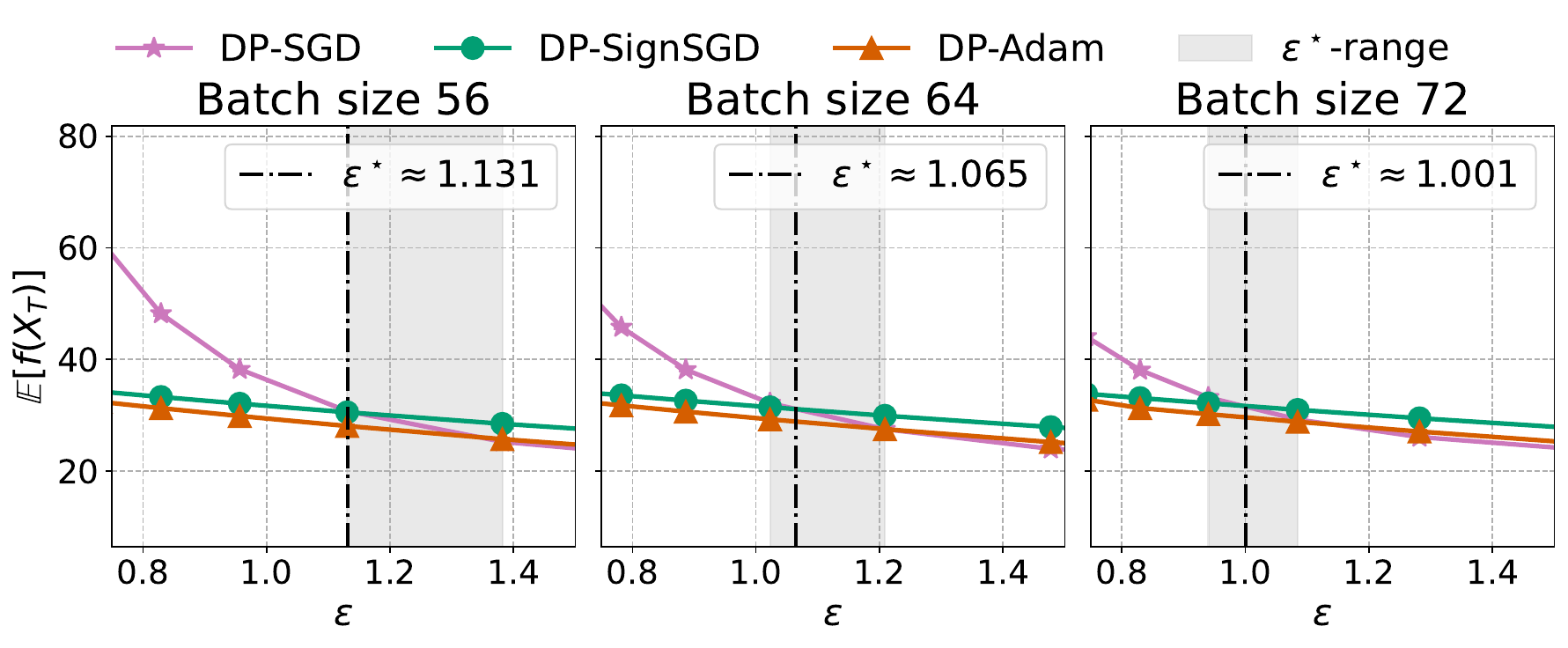}
    \caption{(\textbf{Top Row}) StackOverflow and (\textbf{Bottom Row}) MovieLens: From left to right, we decrease the batch noise, i.e., increase the batch size, taking values $B=48,\,56,\,64$: As per Theorem \ref{thm:eps_star}, the privacy threshold $\varepsilon^\star$ that determines when \algname{DP-SignSGD} is more advantageous than \algname{DP-SGD} shifts to the left. This confirms that if there is more noise due to the batch size, less privacy noise is needed for \algname{DP-SignSGD} to be preferable over \algname{DP-SGD}.}
    \label{fig:eps_star_sotag}
\end{figure}

\subsection{Best-Tuned Hyperparameters (Figures~\ref{fig:ProtocolB} and~\ref{fig:protocolB_movielens})}\label{sec:fig4}

This section refers to Figure~\ref{fig:ProtocolB} and Figure~\ref{fig:protocolB_movielens}. 
On top of the hyperparameter sweep performed described in Section~\ref{par:hyperparam}, we additionally tune \algname{DP-SGD} for the smaller values of $\varepsilon$. As predicted by Theorem~\ref{thm:sgd_opt}, the optimal learning rate for \algname{DP-SGD} scales with $\varepsilon$, while those of the adaptive methods are almost constant. 
Furthermore, we observe that once we reach the limits of the hyperparameter grid, \algname{DP-SGD} loses performance drastically.

\textbf{IMDB}: We additionally tune \algname{DP-SGD} using $\eta = \{0.001,\, 0.005,\, 0.01,\, 0.05,\, 0.1,\, 0.5\}$ and $C = \{0.1,\,0.25,\, 0.5\}$ and add the corresponding values using the cyan line. On the left, we plot the average of the final 5 train loss values and confidence bound for each method against the privacy budget $\varepsilon$; On the right, we focus on the scaling of the optimal learning rate with respect to $\varepsilon$.

\textbf{StackOverflow}: We additionally tune \algname{DP-SGD} using $\eta = \{0.001,\, 0.01,\, 0.05\}$ and add the corresponding values using the cyan line. As above, on the left, we plot the average of the final 5 training loss values and confidence bounds for each method against the privacy budget $\varepsilon$; on the right, we focus on the scaling of the optimal learning rate with respect to $\varepsilon$.

\textbf{MovieLens}: We additionally tune \algname{DP-SGD} using $\eta = \{0.0001,\, 0.0003,\, 0.001,\, 0.003\}$ and add the corresponding values using the cyan line. On the left, we plot the average of the final 5 train loss values and confidence bound for each method against the privacy budget $\varepsilon$; On the right, we focus on the scaling of the optimal learning rate with respect to $\varepsilon$.

\begin{figure}[ht!]
    \centering
    \includegraphics[width=0.9\linewidth]{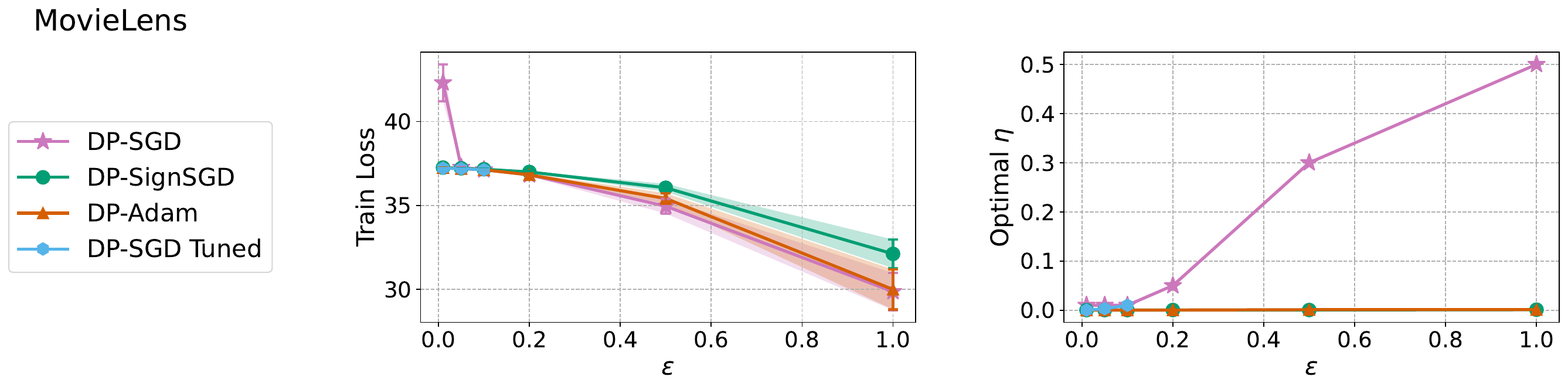}
    \caption{Empirical verification of Thm.~\ref{thm:sgd_opt} and Thm.~\ref{thm:signsgd_opt} under Protocol B on the MovieLens dataset. We tune $(\eta, C)$ of each optimizer for each~$\varepsilon$ and confirm that: $i)$ all methods achieve comparable performance across privacy budgets; $ii)$ the optimal $\eta$ of \algname{DP-SGD} scales linearly with~$\varepsilon$, while that of adaptive methods is essentially $\varepsilon$-independent; $iii)$ failing to sweep over the ``best'' range of learning rates causes \algname{DP-SGD} to severely underperform, whereas adaptive methods are resilient. On the \textbf{left}, \algname{DP-SGD} degrades sharply for small $\varepsilon$. Indeed, the \textbf{right} panels shows that the selected optimal $\eta$ flattens out, while the theoretical one would have linearly decayed more: The ``best'' $\eta$ was simply missing from the grid. \emph{A posteriori}, re-running the sweep with a larger grid (\algname{DP-SGD Tuned}) recovers the scaling law and matches the performance of adaptive methods.}
    \label{fig:protocolB_movielens}
\end{figure}

\subsection{Stationary Distributions}

In this paragraph, we describe how we validated the
convergence behavior predicted in Theorem~\ref{thm:statdistr_sgd_formal} and Theorem~\ref{thm:statdistr_sign_formal}. 
To produce Figure~\ref{fig:statdist}, we run both \algname{DP-SGD} and \algname{DP-SignSGD} on $f(x)=\tfrac{1}{2}x^\top H x$, where $H=\diag(2,1)$, $x_0 = (0.01,\, 0.005)$, $\eta=0.001$, $\sigma_\gamma = \sigma_{DP} = 0.1$, $C=5$. We average over $20000$ runs and plot the evolution of the moments compared to the theoretical prediction provided in Theorem~\ref{thm:statdistr_sgd_formal} and Theorem~\ref{thm:statdistr_sign_formal}.

\begin{figure}[ht!]
    \centering
    \includegraphics[width=0.48\linewidth]{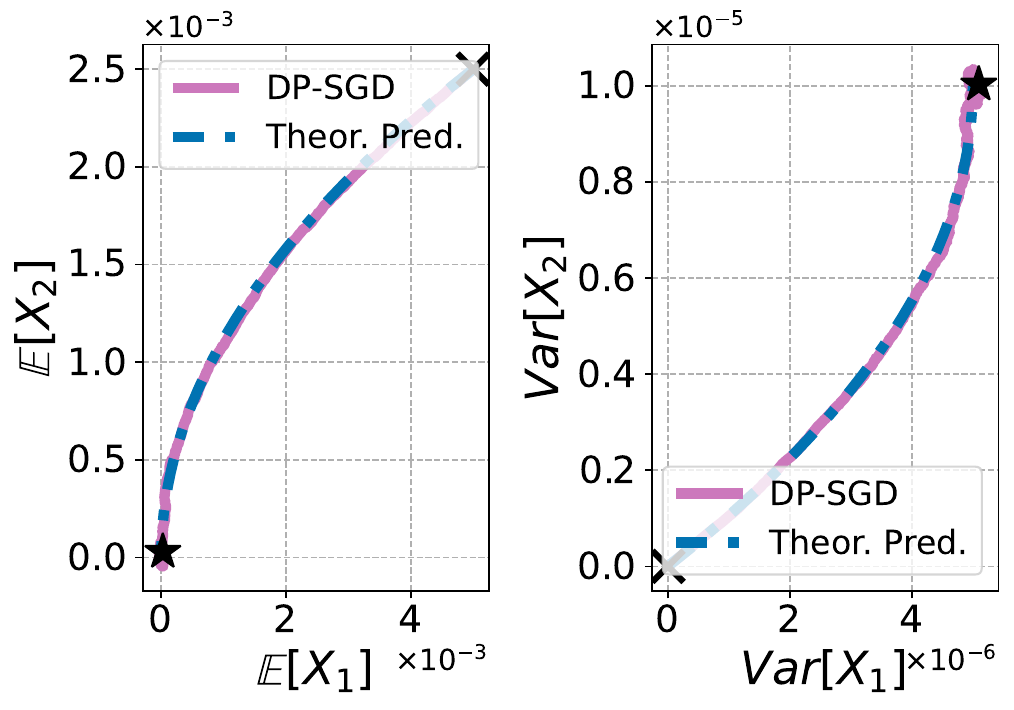}
    \includegraphics[width=0.48\linewidth]{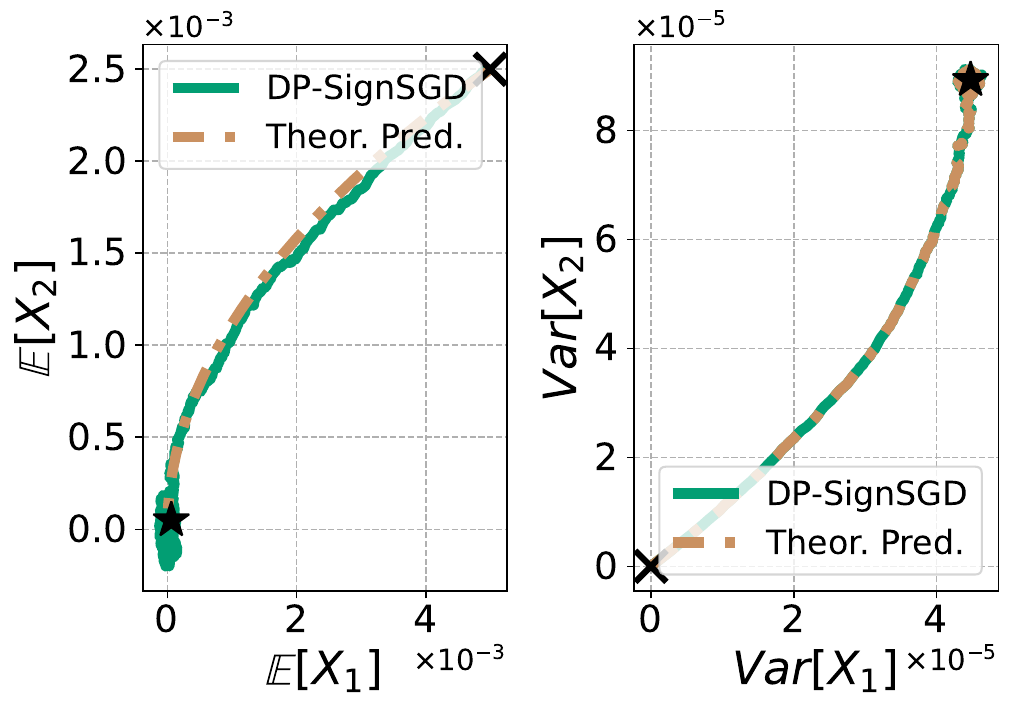}
    \caption{The empirical dynamics of the first and second moments of the iterates $X_t$ of \algname{DP-SGD} (left two panels) and of \algname{DP-SignSGD} (right two panels) match that prescribed in Theorem \ref{thm:statdistr_sgd_formal} and Theorem \ref{thm:statdistr_sign_formal}, respectively.}
    \label{fig:statdist}
\end{figure}

%%%%%%%%%%%%%%%%%%%%%%%%%%%%%%%%%%%%%%%%%%%%%%%%%%%%%%%%%%%
\subsection{Additional Results --- Test Loss}
%%%%%%%%%%%%%%%%%%%%%%%%%%%%%%%%%%%%%%%%%%%%%%%%%%%%%%%%%%%

Interestingly, the insights provided in Theorem~\ref{thm:lossbound_sgd} and Theorem~\ref{thm:lossbound_sign} regarding both the asymptotic bound and the convergence speed extend, in practice, also to the test loss. In the same set-up of Section~\ref{sec:fig1}, we plot the asymptotic values of the Test Loss and interpolate with $\bigo(1/\varepsilon)$ and $\bigo(1/\varepsilon^2)$ to show that they match the predicted scaling.

\begin{figure}[H]
    \centering
    \includegraphics[width=0.355\linewidth]{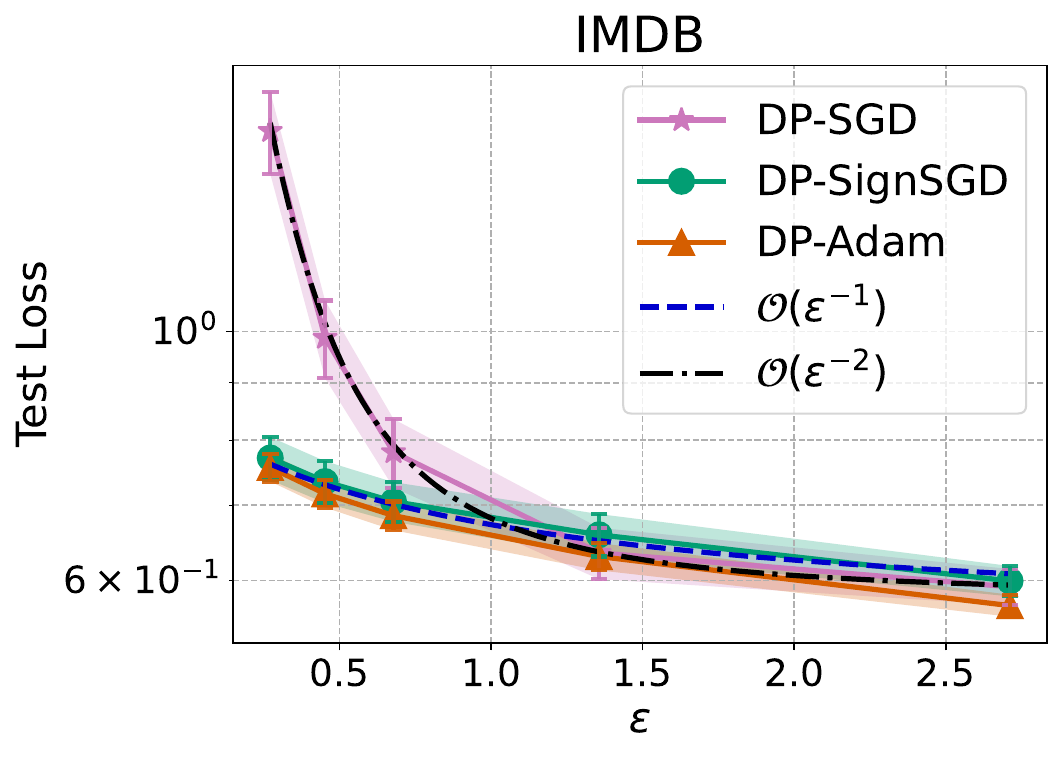}
    \includegraphics[width=0.35\linewidth]{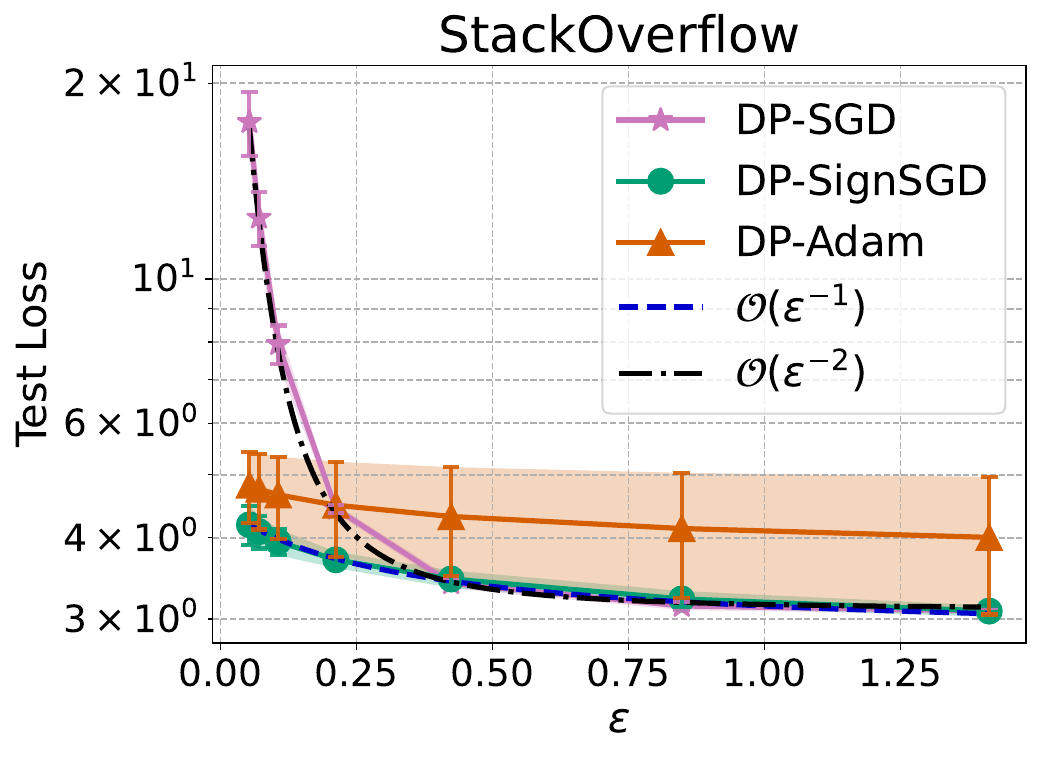}
    \caption{Privacy-utility trade-off on the \emph{test loss}, comparing \algname{DP-SGD}, \algname{DP-SignSGD}, and \algname{DP-Adam}. \textbf{Left:} Logistic regression on the IMDB dataset. \textbf{Right:} Logistic regression on the StackOverflow dataset. In both cases, the empirical scalings predicted by Thm.~\ref{thm:lossbound_sgd} and Thm.~\ref{thm:lossbound_sign} carry over from training to test: \algname{DP-SGD} follows the $\tfrac{1}{\varepsilon^2}$ trend, while adaptive methods follow the $\tfrac{1}{\varepsilon}$ trend. This demonstrates that not only do our theoretical insights generalize to the widely used \algname{DP-Adam}, but also extend from \textit{training} to \textit{test} loss.}
\label{fig:nm_scaling_test}
\end{figure}

Similarly, in the same set-up as Section~\ref{sec:fig2}, we plot the trajectories of the Test Loss (Fig.~\ref{fig:speed_test_loss}): we observe that once again the convergence speed of \algname{DP-SGD} is not affected by the choice of $\varepsilon$, while adaptive methods clearly present different $\varepsilon$-dependent rates.

\begin{figure}[H]
    \centering
    \includegraphics[width=0.8\linewidth]{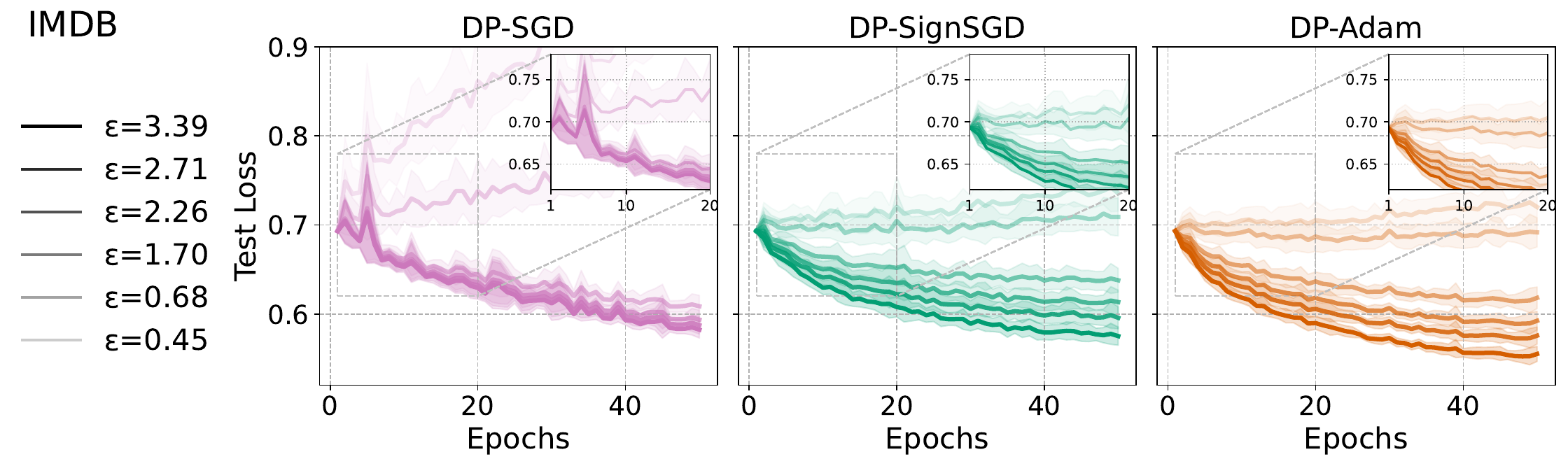}\\
    \includegraphics[width=0.8\linewidth]{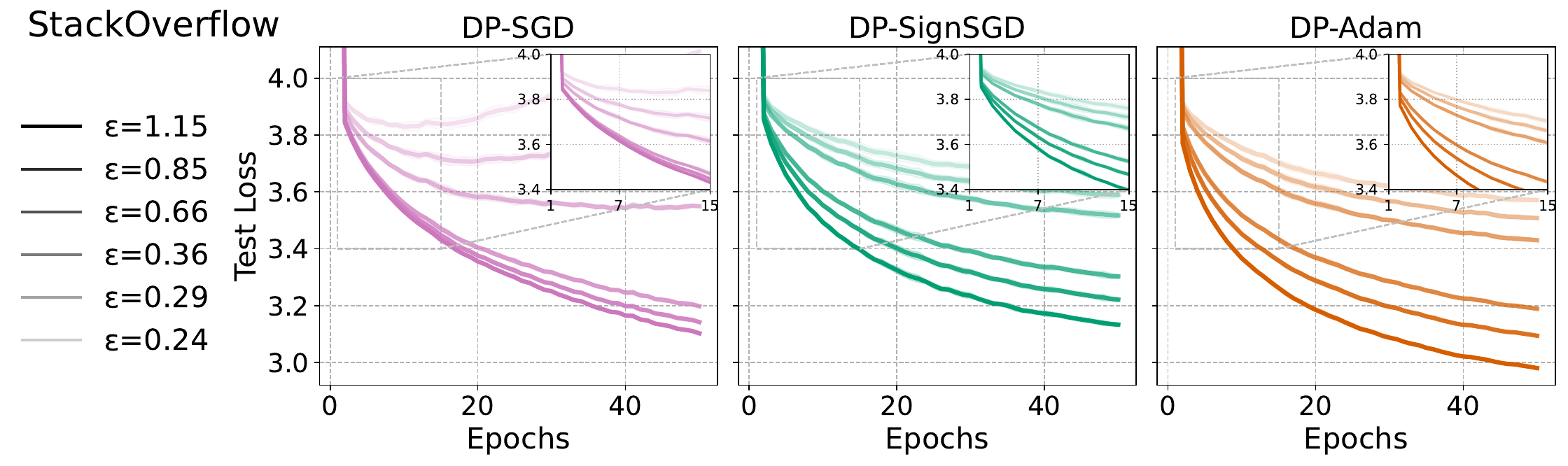}
    \caption{We compare the Test Loss of \algname{DP-SGD}, \algname{DP-SignSGD}, and \algname{DP-Adam} as we train a logistic regression on the IMDB dataset (\textbf{Top Row}) and on the StackOverflow dataset (\textbf{Bottom Row}).}
    \label{fig:speed_test_loss}
\end{figure}

%%%%%%%%%%%%%%%%%%%%%%%%%%%%%%%%%%%%%%%%%%%%%%%%%%%%%%%%%%%
\section{Limitations}\label{sec:limitations}
%%%%%%%%%%%%%%%%%%%%%%%%%%%%%%%%%%%%%%%%%%%%%%%%%%%%%%%%%%%

As highlighted by \citet{Li2021validity}, the approximation capability of SDEs can break down when the learning rate $\eta$ is large or when certain regularity assumptions on $\nabla f$ and the noise covariance matrix are not fulfilled. 
Although such limitations can, in principle, be alleviated by employing higher-order weak approximations, our position is that the essential function of SDEs is to provide a simplified yet faithful description of the discrete dynamics that offers practical insight. 
We do not anticipate that raising the approximation order beyond what is required to capture curvature-dependent effects would deliver substantial additional benefits.

We stress that our SDE formulations have been thoroughly validated empirically: the derived SDEs closely track their corresponding optimizers across a wide range of architectures, including MLPs, CNNs, ResNets, and ViTs~\citep{paquette2021sgd,Malladi2022AdamSDE,compagnoni2024sde,compagnoni2025unbiased,compagnoni2025adaptive,xiao2025exact,marshall2025to}.

\end{document}